\documentclass{article}

 \usepackage[main, final]{neurips_2026}

\usepackage[utf8]{inputenc} 
\usepackage[T1]{fontenc}    
\usepackage{hyperref}       
\usepackage{url}            
\usepackage{booktabs}       
\usepackage{amsfonts}       
\usepackage{nicefrac}       
\usepackage{microtype}      
\usepackage{xcolor}         

\usepackage{amsmath}
\usepackage{booktabs}
\usepackage{multirow}
\usepackage{array}
\usepackage{graphicx}
\usepackage{subcaption}
\usepackage{caption}
\usepackage[table]{xcolor}
\usepackage{wrapfig}
\usepackage{setspace}
\usepackage{amsthm}
\usepackage{placeins}

\usepackage{newfloat}
\usepackage{listings}
\DeclareCaptionStyle{ruled}{labelfont=normalfont,labelsep=colon,strut=off} 
\newtheorem{proposition}{Proposition}
\definecolor{AgentBlue}{RGB}{39,110,175}
\definecolor{AgentGreen}{RGB}{48,144,32}
\definecolor{AgentYellow}{RGB}{186,183,37}
\definecolor{AgentRed}{RGB}{183,60,58}

\newcommand{\blueagent}[1]{\textcolor{AgentBlue}{#1}}
\newcommand{\greenagent}[1]{\textcolor{AgentGreen}{#1}}
\newcommand{\yellowagent}[1]{\textcolor{AgentYellow}{#1}}
\newcommand{\agentred}[1]{\textcolor{AgentRed}{#1}}

\title{Cross-Modal Navigation with Multi-Agent Reinforcement Learning}

\author{%
  Shuo Liu, Xinzichen Li, Christopher Amato \\
  Khoury College of Computer Sciences\\
  Northeastern University\\
  Boston, MA, 02120 \\
  \texttt{\{liu.shuo2,li.xinzi,c.amato\}@northeastern.edu} \\
}

\begin{document}

\maketitle

\begin{abstract}

Robust embodied navigation relies on complementary sensory cues. However, high-quality and well-aligned multi-modal data is often difficult to obtain in practice. Training a monolithic model is also challenging as rich multi-modal inputs induce complex representations and substantially enlarge the policy space. Cross-modal collaboration among lightweight modality-specialized agents offers a scalable paradigm. It enables flexible deployment and parallel execution, while preserving the strength of each modality. In this paper, we propose \textbf{CRONA}, a Multi-Agent Reinforcement Learning (MARL) framework for \textbf{Cro}ss-Modal \textbf{Na}vigation. CRONA improves collaboration by leveraging control-relevant auxiliary beliefs and a centralized multi-modal critic with global state. Experiments on visual-acoustic navigation tasks show that multi-agent methods significantly improve performance and efficiency over single-agent baselines. We find that homogeneous collaboration with limited modalities is sufficient for short-range navigation under salient cues; heterogeneous collaboration among agents with complementary modalities is generally efficient and effective; and navigation in large, complex environments requires both richer multi-modal perception and increased model capacity.
\end{abstract}

\section{Introduction}

\begin{wrapfigure}{r}{0.5\textwidth}
    \centering
    \captionsetup{font=footnotesize}
    \vspace{-4mm}
    \includegraphics[width=0.43\textwidth]{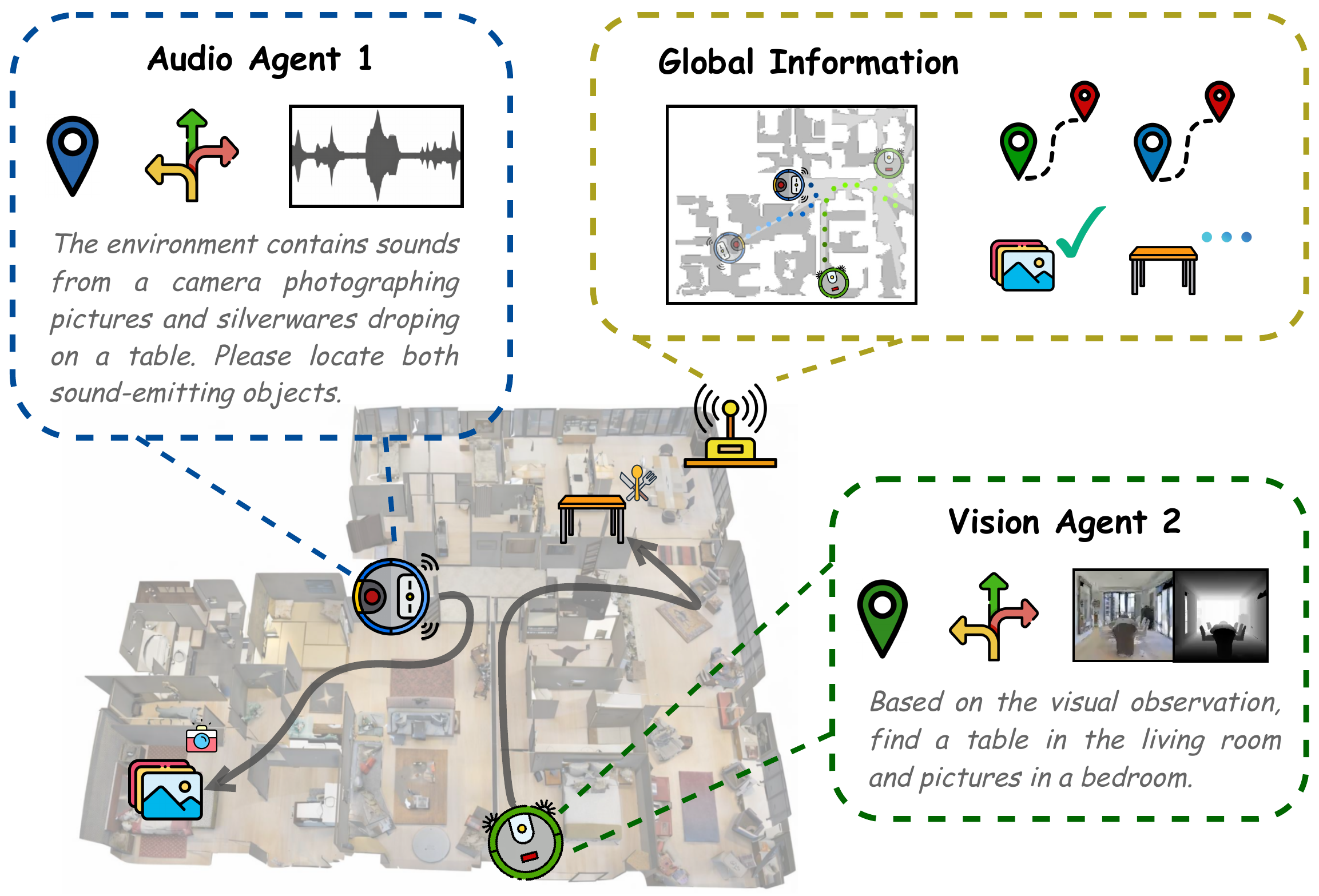}
    \caption{A collaborative navigation task in a \texttt{Ranch} scene from \texttt{Matterport3D}. An audio agent (\blueagent{blue}) collaborates with a vision agent (\greenagent{green}) to locate a table and pictures. Each agent receives only local observations during execution, while global information is captured by a global monitor (\yellowagent{yellow}) and used only during training. Gray curves denote agents' trajectories.}
    \label{fig:problem}
    \vspace{-3mm}
\end{wrapfigure}

In embodied navigation, agents perceive the environment through diverse sensory inputs, e.g., RGB-D images, audio, radar, LiDAR, and language instructions \cite{kolve2017ai2,habitat19iccv,szot2021habitat,anderson2018vision,duan2022survey}. 
These inputs provide rich geometric, semantic, and acoustic cues across modalities, enabling agents to locate target objects and navigate in complex environments, such as autonomous driving, robotic systems, and human-computer interaction \cite{chen2017multi,das2017visual,ku2018joint,qi2018frustum,shridhar2022cliport,brohan2022rt,driess2023palm}.

However, real-world observations are often noisy, incomplete, and asynchronous. 
Low-quality and misaligned training signals can make policy learning unstable and ineffective \cite{tsai2019multimodal, jia2021scaling}.
Although many methods align different modalities within the model, they differ substantially in dimensionality, noise levels, and temporal structure \cite{chen2020soundspaces,shridhar2020alfworld, shridhar2020alfred}. This mismatch often leads to imbalanced joint optimization, where dominant modalities drive most gradient updates while weaker or noisier modalities are underutilized \cite{wang2020makes, wu2022characterizing, huang2022modality, peng2022balanced}. 
Rich-modal models also tend to rely on large architectures to align diverse signals, making them hard to deploy and costly at test time \cite{kim2021vilt, yao2024minicpm, li2023blip}.

Many studies leverage multi-agent collaboration to improve navigation efficiency and robustness \cite{simmons2000coordination,parker2002alliance,burgard2005coordinated, gu2021attention}. However, most existing approaches focus on collaboration under limited sensory configurations \cite{huang2021decentralized,qin2021fully,azzam2023swarm,xiao2020learning,xiao2022asynchronous,xue2023multi,wang2024multi}. Even in more complex embodied settings, collaborative agents are typically homogeneous, with each receiving inputs from the same modalities \cite{wang2021collaborative,zhang2025advancing,wang2026conavbench}. Heterogeneous collaboration with rich sensory modalities are less explored \cite{hao2025conav,liu2025caml,hu2023semantic}. Specifically, it remains unclear which modalities improve collaborative capability, which team configurations support both effective and efficient navigation, and what cooperative behaviors emerge among agents. 

In this paper, we study fully-decentralized collaborative navigation without any inter-agent communication \cite{oliehoek2008optimal, decpomdp}. We construct a multi-modal collaborative navigation benchmark based on diverse \texttt{Matterport3D} scenes, as illustrated in Figure~\ref{fig:problem}. We propose \textbf{CRONA}, a cooperative Multi-Agent Reinforcement Learning (MARL) framework for \textbf{Cro}ss-Modal \textbf{Na}vigation. CRONA employs auxiliary belief predictors to extract control-relevant features from complex multi-modal observations and a centralized critic with state information to facilitate training. Our experiments demonstrate that collaborative navigation consistently outperforms single-agent navigation in both effectiveness and efficiency. Moreover, we identify five modality-dominance patterns across scenarios (i.e., no clear dominance, vision dominance, audio dominance, cross-modal, and multi-modal dominance). We find that homogeneous collaboration with few modalities is sufficient for short-range navigation; cross-modal collaboration among complementary modalities is generally efficient and effective when targets have clean, modality-specific cues; large and complex environments typically require both full-modal inputs and higher-capacity models.

\textbf{Our core contributions are summarized as follows:}
(i) we construct a collaborative navigation benchmark where multi-modal agents collaborate to navigate;
(ii) we propose CRONA, a MARL framework for cross-modal navigation;
(iii) we identify five dominance patterns in our experiments and explain when and why they emerge, respectively.

\section{Related Work}

\paragraph{Multi-Modal Navigation}
Embodied navigation has been studied under a wide range of input modalities.
Most work considers visual observations (RGB-D images), while specifying navigation goals or instructions in language \cite{anderson2018vision,krantz2020beyond,fried2018speaker,hao2020towards,majumdar2020improving,hong2021vlnbert,chen2021hamt}. 
Acoustic and 3D spatial signals can also provide semantic and geometric cues that complement visual observations that are degraded by occlusions or obstacles \cite{cheng2018mobile,chen2020soundspaces,chen2021semantic,paul2022avlen,chen2020learning, yao2023radar,yuksel2026gram}. 
While it has been shown that richer multi-modal context can improve performance in certain settings \cite{qi2021road,huang2023visual,yu2023l3mvn,liu2025caml}, it remains unclear how different modalities contribute under different conditions and how to align modalities with substantially different representations \cite{wang2020makes,wu2022characterizing,huang2022modality,li2023blip, peng2022balanced}.

\paragraph{Collaborative Navigation}
Recent studies have explored multi-agent collaboration for navigation. However, collaborative navigation remains challenging because agents need to coordinate in real time during execution. 
Early methods rely on centralized planning, where a central controller coordinates all agents \cite{bruce2006safe,van2009centralized,janssen2016cloud}. 
Such designs suffer from limited scalability and robustness \cite{velagapudi2010decentralized, luna2011efficient, iqbal2019actor}. 
Recent approaches therefore shift toward decentralized collaboration, where agents take action based on their local observations with limited or even without communication \cite{huang2021decentralized, qin2021fully, azzam2023swarm, wang2024multi, xue2023multi, wang2026conavbench}. However, most decentralized navigation studies still assume homogeneous agents or agents with similar sensory inputs \cite{zhang2025advancing,xiao2020learning,xiao2022asynchronous}. 
How agents with heterogeneous modalities collaborate effectively remains largely underexplored \cite{hao2025conav,liu2025caml, hu2023semantic}.

\paragraph{Cooperative MARL}
Cooperative MARL studies how multiple agents learn to coordinate under a shared objective \cite{zhang2021multi,marl-book,yuan2023survey}. A simple and scalable approach is independent learning, where agents are separately trained \cite{tan1993multi,peshkin2001learning}. 
But as all agents update their policies concurrently, each agent faces a non-stationary learning environment, which often leads to instability and convergence issues \cite{claus1998dynamics,tuyls2003selection,wunder2010classes}. Centralized training with decentralized execution (CTDE) mitigates this issue by exploiting centralized information during training \cite{amato2024introduction}. For example, a centralized critic can estimate joint values from joint histories and global states \cite{MADDPG,MAPPO,COMA,lyu2021contrasting,lyu2023centralized}. Since the critic is discarded at execution time, each agent remains execute in a decentralized manner. CRONA follows this paradigm and incorporates task progress into a multi-modal centralized critic for joint value estimation.

\section{Background}

\subsection{Problem Formulation}

In cooperative navigation (Figure~\ref{fig:problem}), agents need to infer task assignments and learn cooperative policies under partial observations. This setting follows the standard cooperative MARL formulation and can be modeled as a Decentralized Partially Observable Markov Decision Process (Dec-POMDP) \cite{decpomdp}, denoted by
$\langle \mathcal{I}, \mathcal{S}, \{\mathcal{O}_i\}, \{\mathcal{A}_i\}, R, T, \gamma, H \rangle$.

\begin{itemize}
    \item $\mathcal{I}$ is a set of $n$ decentralized agents, where each agent $i$ is controlled by an individual policy $\pi_i$. Each agent is equipped with specialized sensors to perceive the environment.
    \item $\mathcal{S}$ denotes the global state space. At each time step $t$, the global state $s_t \in \mathcal{S}$ includes the scene layout, all agent poses, target object locations and categories, sound-source states, and task-completion status. This state is not directly observed by decentralized agents.
    \item Each agent $i$ receives a local observation $o_{i,t} \in \mathcal{O}_i$. The observation contains the agent pose $o_{i,t}^{\mathrm{pose}}=(x_{i,t}, y_{i,t}, \vartheta_{i,t}, t)$, where $(x_{i,t}, y_{i,t})$ is the agent position and $\vartheta_{i,t}$ is its orientation. It also includes a natural-language description of the navigation target, denoted by $o_{i,t}^{\mathrm{goal}}$. Depending on its sensor configuration, an agent may also receive visual input $o_{i,t}^{\mathrm{vision}} = (o_{i,t}^{\mathrm{rgb}}, o_{i,t}^{\mathrm{depth}}) \in \mathbb{R}^{H_v \times W_v \times 4}$, binaural audio input $o_{i,t}^{\mathrm{audio}} \in \mathbb{R}^{2 \times L}$, where $H_v$ and $W_v$ denote the image height and width, and $L$ denotes the length of the binaural audio segment. $\mathcal{O}_i$ is the local observation space of agent $i$, and $\mathcal{O}=\times_i \mathcal{O}_i$ is the joint observation space.

    \item Agents share a joint reward function $R:\mathcal{S}\times\mathcal{A}\rightarrow\mathbb{R}$, which depends on the global state and their joint action. The reward incentives agents to approach targets and stop in their vicinity. 
    
    \item The environment evolves according to a transition function $T:\mathcal{S}\times\mathcal{A}\rightarrow\Delta(\mathcal{S})$. Given the current state $s_t$ and joint action $\mathbf{a}_t$, the next state is sampled as $s_{t+1}\sim T(\cdot \mid s_t,\mathbf{a}_t)$. 

    \item $\gamma$ is the discount factor and $H$ is the episode horizon.

\end{itemize}

Since the full state is not directly observable, each agent maintains a local observation-action history $h_{i,t}=\{o_{i,0}, a_{i,0}, \cdots, o_{i,t}\}$ to infer information about $s_t$. The history of agents forms a joint history $\mathbf{h}_t = \{h_{1,t}, \cdots, h_{n,t}\}$, and agents' policies forms a joint policy $\boldsymbol{\pi} =\{\pi_1, \cdots, \pi_n\}$. The objective is to find an optimal joint policy, $\boldsymbol{\pi}^* =\{\pi^*_1, \cdots, \pi^*_n\}$, that maximizes the expected cumulative reward over the horizon $H$, $\boldsymbol{\pi}^* =\arg\max_{\boldsymbol{\pi}} \mathbb{E}_{\boldsymbol{\pi}}\left[
\sum_{t=0}^{H-1} \gamma^t r_t\right]$.

\subsection{Acoustic Representation} \label{subsec:audio_intro}

Audio signals provide semantic information for source recognition and spatial cues for source localization. However, raw audio waves are high-dimensional and contain complex temporal dependencies. These make them difficult to model directly. Spectrograms represent audio as structured time-frequency features, making local acoustic patterns more explicit and easier to learn.

Given a binaural waveform $o_{i,t}^{\mathrm{audio}} \in \mathbb{R}^{2 \times L}$, its magnitude spectrogram
$o_{i,t}^{\mathrm{spec}} \in \mathbb{R}^{2 \times K \times F}$ can be computed via short-time Fourier transform (STFT),
\begin{equation}
o_{i,t}^{\mathrm{spec}}(\kappa,\omega,\tau)
=
\left|
\sum_{\ell=0}^{L-1}
o_{i,t}^{\mathrm{audio}}(\kappa,\ell)\,
w(\ell-\tau \delta)\,
e^{-j2\pi \omega \ell / N_{\mathrm{fft}}}
\right|.
\end{equation}
Here, $\kappa \in \{1,2\}$ denotes the left and right audio channels. For each time frame $\tau$, the window $w(\cdot)$ extracts a short segment of the waveform around sample position $\tau\delta$. The Fourier basis then decomposes this segment into frequency components indexed by $\omega$. $\delta$ denotes the hop size between adjacent time frames, and $N_{\mathrm{fft}}$ denotes the FFT size. The resulting spectrogram contains $K$ frequency bins over $F$ time frames.

Sounds emitted by different objects produce distinct patterns in the spectrogram. Targets with salient acoustic cues, such as strong energy and clean, stable patterns, are typically easier to localize, whereas distant or occluded sounds tend to be weak, unclear, and difficult to identify.

\section{Method}

Figure~\ref{fig:framework} gives an overview of CRONA. Each agent processes its sensory observations with the corresponding encoder. Audio-based agents use an auxiliary belief predictor to estimate control-relevant beliefs (target category and location). Each agent combines its observations, beliefs, and previous actions into a local history, where multi-head attention layers capture important features and temporal dependencies. A centralized critic estimates the joint value from the joint history, beliefs, and global state during training, which is used to update decentralized agent policies.

\subsection{Auxiliary Belief Predictor} \label{subsec:aux_predict}

Audio observations are often noisy and stochastic (Section~\ref{subsec:audio_intro}), making it difficult to learn effective policies directly from raw inputs. However, control-relevant beliefs can be inferred from these signals to facilitate training. CRONA uses target location and target category as auxiliary beliefs.

For an agent $i$ with audio sensor, given its spectrogram observation $o_{i,t}^{\mathrm{spec}}$, a convolutional encoder extracts acoustic features $z_{i,t}^{\mathrm{audio}}$. 
A location head predicts an instantaneous sound-source goal $\hat{b}_{i,t}^{\mathrm{goal}}\in \mathbb{R}^2$ in global coordinate based on $z_{i,t}^{\mathrm{audio}}$. Given the current pose $o_{i,t}^{\mathrm{pose}}=(x_{i,t}, y_{i,t}, \vartheta_{i,t}, t)$, the predicted relative location $\hat{b}_{i,t}^{\mathrm{loc}}$ can be calculated by,
\begin{equation}
\hat{b}_{i,t}^{\mathrm{loc}}
=
\mathrm{T}(\vartheta_{i,t})
\left(
\hat{b}_{i,t}^{\mathrm{goal}}
-
\begin{bmatrix}
x_{i,t}\\
y_{i,t}
\end{bmatrix}
\right),
\end{equation}
where {\footnotesize$\mathrm{T}(\vartheta)=
\begin{bmatrix}
\cos\vartheta & \sin\vartheta\\
-\sin\vartheta & \cos\vartheta
\end{bmatrix}$}
is a 2D rotation matrix from the global frame to the agent's frame. 
In addition, a category head with fully connected layers also predicts a belief $\,\hat{b}_{i,t}^{\mathrm{cat}} \in \mathbb{R}^\mathcal{C}$ over all categories $c\in\mathcal{C}$ based on $z_{i,t}^{\mathrm{audio}}$. To reduce prediction variance, we smooth the auxiliary beliefs with an exponential moving average using coefficient $\alpha \in [0,1]$,
\begin{equation}
b_{i,t}^{\mathrm{loc}}=\alpha\,\hat{b}_{i,t}^{\mathrm{loc}}+(1-\alpha)\,b_{i,t-1}^{\mathrm{loc}},
\quad
b_{i,t}^{\mathrm{cat}}=\alpha\,\hat{b}_{i,t}^{\mathrm{cat}}+(1-\alpha)\,b_{i,t-1}^{\mathrm{cat}}.
\end{equation}

The location and category belief jointly form an auxiliary belief $b_{i,t} = (b_{i,t}^{\text{loc}}, b_{i,t}^{\text{cat}})$ for agent $i$, and since they are inferred from local histories, they remain consistent with the information available to decentralized policies. During training, the goal point $b_{i,t}^{\mathrm{goal},*}$ of the closest target to agent $i$ and the multi-hot category label $y_{i,t}^{\mathrm{cat},*}$ over all targets are provided. The belief predictor is optimized as,
\begin{equation}
\mathcal{L}_{\mathrm{belief}}
=
\left\|
\hat{b}_{i,t}^{\mathrm{goal}}
-
b_{i,t}^{\mathrm{goal},*}
\right\|_2^2
-
\sum_{c \in \mathcal{C}}
\left[
y_{i,t}^{\mathrm{cat},*}(c)
\log \hat{b}_{i,t}^{\mathrm{cat}}(c)
+
\left(1-y_{i,t}^{\mathrm{cat},*}(c)\right)
\log
\left(1-\hat{b}_{i,t}^{\mathrm{cat}}(c)\right)
\right].
\end{equation}

\subsection{Attention-Based History Encoder}

In collaborative navigation, each agent selects actions based on its history. However, maintaining all raw images and audio over time is computationally expensive and difficult to optimize. We use a short-term history cache and apply multi-head attention to extract spatial and temporal features.

We use convolutional encoders to capture the local patterns of RGB-D images and spectrograms, i.e., $z_{i,t}^{\mathrm{rgb}}$, $z_{i,t}^{\mathrm{depth}}$, and $z_{i,t}^{\mathrm{audio}}$, respectively. The visual inputs $o^\mathrm{rgb}_{i,t}$ and $o^\mathrm{depth}_{i,t}$ often have higher dimensionalities and exhibit richer spatial structures, whereas $o^\mathrm{spec}_{i,t}$ are computed over short temporal windows, so we use deeper convolutional neural networks as visual encoders (i.e., ResNet-18 \cite{he2016deep}). The encoded features are concatenated with the agent pose and the goal instruction to form a latent observation embedding $z_{i,t}^{o}=z_{i,t}^{\mathrm{rgb}}\oplus z_{i,t}^{\mathrm{depth}} \oplus o_{i,t}^{\mathrm{pose}} \oplus o_{i,t}^{\mathrm{goal}}$ for vision-based agents, and $z_{i,t}^{o}=z_{i,t}^{\mathrm{audio}}\oplus o_{i,t}^{\mathrm{pose}}\oplus o_{i,t}^{\mathrm{goal}}$ for audio-based agents. Each agent stores the current observation embedding, the previous $k$ observation embeddings $\{z_{i,t-k}^{o}, \ldots, z_{i,t-1}^{o}\}$, and the previous $k$ actions $\{a_{i,t-k}, \ldots, a_{i,t-1}\}$ in a fixed-size memory cache.

The cached observation-action sequence is then processed by transformer blocks to produce a history representation $z_{i,t}^{h}$ over $h_{i,t}$. $z_{i,t}^{h}$ can capture geometric cues, sound information, and motion patterns, and provide a compact context for each agent to select its action, i.e., $a_{i,t}\sim \pi_i(\cdot \mid h_{i,t})$.

\begin{figure}
    \centering
    \captionsetup{font=footnotesize}
    \includegraphics[width=0.99\linewidth]{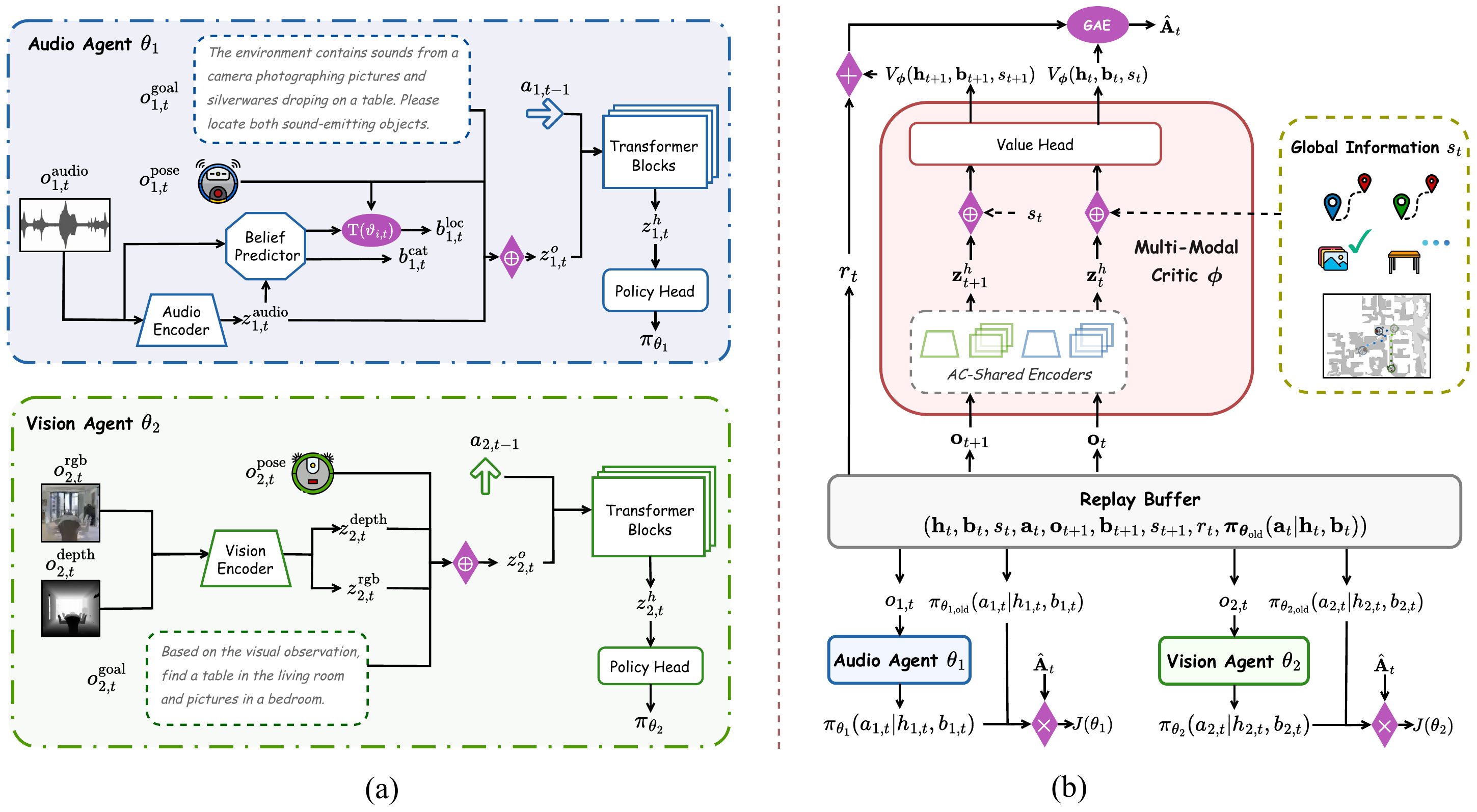}
    \caption{Illustration of \textbf{CRONA} framework. 2 decentralized agents, one with audio inputs (\blueagent{blue}) and another with vision inputs (\greenagent{green}), cooperate to navigate toward a table with silverware-dropping sounds and pictures with camera-shutter sounds. (a) Observation-action history embeddings and auxiliary belief predictors of agents. (b) A multi-modal critic (\agentred{red}) estimates the value with joint history, the auxiliary belief, and the global information, while each agent updates its individual policy.\vspace{-2mm}}
    \label{fig:framework}
\end{figure}

\subsection{Centralized Critic with Global Information}

CRONA employs a centralized critic for joint value estimation during training. Since the reward directly depends on the state, incorporating state information can improve value estimation without introducing bias \cite{lyu2023centralized}. The critic and agent policies also depend on auxiliary belief predictions (Section~\ref{subsec:aux_predict}), which are consistent with the information available in local observations. The centralized critic is not used during execution, where all agents take action under decentralized policies \cite{marl-book,amato2024introduction}.

During training, the centralized critic $\mathbf{V}_{\boldsymbol{\phi}}(\mathbf{z}^{\mathbf{h}}_t, \mathbf{b}_t, s_t)$ estimates the joint value using the joint history embedding $\mathbf{z}^{\mathbf{h}}_t$, the auxiliary beliefs of audio-based agents $\mathbf{b}_t = \{b_{1,t}, \dots, b_{n,t}\}$, and the global state $s_t$ (e.g., target locations, agent positions and orientations, and completion indicators for each target). As proved in Appendix~\ref{app:proof}, augmenting the critic with these history-induced beliefs and the global state does not introduce bias in value estimation. At each time step $t$, the joint history embedding is obtained by concatenating all agents' history embeddings,
$\mathbf{z}^{\mathbf{h}}_t=\bigoplus_{i=1}^{n} z_{i,t}^{h}$. 

To improve representation learning and accelerate training, CRONA shares the modality-specific encoders, auxiliary belief predictor, and history transformer between the decentralized actors and the centralized critic, while using separate heads for policy and value prediction. To stabilize training, we use clipped surrogate objectives for both policy and value updates. The advantage $\hat{\mathbf{A}}_t$ is computed using generalized advantage estimation (GAE),
\begin{equation}
\hat{\mathbf{A}}_t
=
\sum_{l=0}^{T-t-1}
(\gamma \lambda)^l
\left[
r_{t+l}
+
\gamma \mathbf{V}_{\boldsymbol{\phi}_{\mathrm{old}}}(\mathbf{h}_{t+l+1}, \mathbf{b}_{t+l+1}, s_{t+l+1})
-
\mathbf{V}_{\boldsymbol{\phi}_{\mathrm{old}}}(\mathbf{h}_{t+l}, \mathbf{b}_{t+l}, s_{t+l})
\right],
\end{equation}
and the corresponding return target is $\hat{\mathbf{R}}_t=\hat{\mathbf{A}}_t+\mathbf{V}_{\boldsymbol{\phi}_{\mathrm{old}}}(\mathbf{h}_t, \mathbf{b}_t, s_t)$. We train the value head of the centralized critic by minimizing a clipped value surrogate objective against the return target $\hat{\mathbf{R}}_t$,
\begin{equation}
\begin{gathered}
L(\boldsymbol{\phi})
=
\mathbb{E}_t
\left[
\max
\left(
\left(
\mathbf{V}_{\boldsymbol{\phi}}(\mathbf{h}_t, \mathbf{b}_t, s_t)
-
\hat{\mathbf{R}}_t
\right)^2,\,
\left(
\bar{\mathbf{V}}_{\boldsymbol{\phi}}(\mathbf{h}_t, \mathbf{b}_t, s_t)
-
\hat{\mathbf{R}}_t
\right)^2
\right)
\right],
\\
\bar{\mathbf{V}}_{\boldsymbol{\phi}}(\mathbf{h}_t, \mathbf{b}_t, s_t)
=
\operatorname{clip}
\left(
\mathbf{V}_{\boldsymbol{\phi}}(\mathbf{h}_t, \mathbf{b}_t, s_t),
\mathbf{V}_{\boldsymbol{\phi}_{\mathrm{old}}}(\mathbf{h}_t, \mathbf{b}_t, s_t)-\xi,\,
\mathbf{V}_{\boldsymbol{\phi}_{\mathrm{old}}}(\mathbf{h}_t, \mathbf{b}_t, s_t)+\xi
\right),
\end{gathered}
\end{equation}
where $\bar{\mathbf{V}}_{\boldsymbol{\phi}}$ denotes the clipped value prediction and $\xi$ is the value clipping range. Each agent's policy is conditioned only on its local history and auxiliary belief, $a_{i,t}\sim \pi_{\theta_i}(\cdot \mid h_{i,t}, b_{i,t})$, and is updated using the shared advantage estimate $\hat{\mathbf{A}}_t$. Specifically, each agent $i$ maximizes
\begin{equation}
J(\theta_i)
=
\mathbb{E}_t
\left[
\min
\left(
\rho_{i,t}\hat{\mathbf{A}}_t,\,
\operatorname{clip}(\rho_{i,t},1-\epsilon,1+\epsilon)\hat{\mathbf{A}}_t
\right)
+
\beta\,\mathcal{H}\!\left(
\pi_{\theta_i}(\cdot\mid z_{i,t}^{h}, b_{i,t})
\right)
\right],
\end{equation}
where
$\rho_{i,t}=
\frac{
\pi_{\theta_i}(a_{i,t}\mid z_{i,t}^{h}, b_{i,t})
}{
\pi_{\theta_{i,\mathrm{old}}}(a_{i,t}\mid z_{i,t}^{h}, b_{i,t})
}$
is the importance sampling ratio, $\epsilon$ is the policy clipping range, $\beta$ is the entropy regularization coefficient, and $\mathcal{H}(\cdot)$ is the policy entropy to encourage exploration.

The decentralized actors and the multi-modal critic share encoders and transformers. Gradients from both actor and critic objectives are backpropagated through the shared modules, which are optimized by a weighted sum of the policy gradient and averaged temporal difference loss with $\mu\in[0,1]$,
\begin{equation}
\mathcal{L}(\theta_i^{z_i}, \boldsymbol{\phi}^{z_i})
=
-\mu J(\theta_i)
+
\frac{1-\mu}{n}\,L(\boldsymbol{\phi}).
\end{equation}

\section{Experiments}

We evaluate CRONA in \texttt{Matterport3D} scenes \cite{Matterport3D}, where agent observations are simulated via \texttt{Habitat} and \texttt{libsora} \cite{habitat19iccv, szot2021habitat, mcfee2015librosa}. Dataset details, experimental settings, additional results, instruction and reward designs, and compute resources are provided in Appendix \ref{app:expset}, \ref{app:addres}, \ref{app:prompt}, \ref{app:reward-design}, and \ref{app:compute}.

\subsection{Setup}

We construct collaborative navigation datasets with two agents using five representative \texttt{Matterport3D} scenes that span diverse layouts and difficulties. 

\texttt{Studio} (\texttt{GdvgFV5R1Z5}) is a single-room scene with a picture target with a camera-shutter sound. 
\texttt{Corridor} (\texttt{ac26ZMwG7aT}) consists of a passage connecting two spatially separated areas, where agents are finding a sink that is dripping water. 
\texttt{Apartment} (\texttt{17DRP5sb8fy}) has one bedroom and two bathrooms, with a creaking bed and a counter with coin-dropping sound as targets. 
\texttt{Ranch} (\texttt{JeFG25nYj2p}) contains five bedrooms and two bathrooms, with a picture with a camera-shutter sound and a table with silverware-dropping sounds as targets.
\texttt{Maze} (\texttt{B6ByNegPMKs}) is the largest scene with the most complex layout, agents need to find a table with silverware dropping, a dragging chair, and a drawer with a pulling sound while navigating through the scene within the episode limit.

Each dataset entry corresponds to a task in an episode. At the beginning of each episode, agents' positions and orientations are randomly initialized. Agents move on the navigable mesh grids to find all targets. They must stop within a specified distance of a target to mark it as found. Each target sound is assigned to an eligible object with the matching semantic category; sounds from multiple targets are mixed and removed once the corresponding target is found. An episode ends when all targets are found or all agents stop simultaneously. We set the horizon to $H=$70, 150, 500, 1000, 1500 for five scenes. Bird's-eye-view visualizations and dataset statistics are provided in Appendix~\ref{app:expset}.

Since most objects in \texttt{Matterport3D} are large and visually distinctive, an agent can effortlessly localize them without requiring collaboration. However, real-world visual perception is often constrained (e.g., darkness, fog, or blind spots). To make the benchmark more challenging, we restrict vision to depth maps with a sensing range of $0$--$5\,\mathrm{m}$, a resolution of $16\times16$ pixels, and an HFoV of $10^\circ$. Details about agent configurations and model architectures are provided in Appendix~\ref{subapp:architecture}.

\subsection{Baselines}

We consider the \textbf{\textit{Single-Agent}} baseline, where a large monolithic model takes all available modalities as input \cite{chen2020soundspaces}. For a fair comparison, we use the same episode horizon, and the agent's initial position is randomly selected from existing initial positions in our collaborative navigation dataset. 

We further compare CRONA with three homogeneous collaboration baselines, where all agents receive the same input modalities. Several recent studies have explored Vision-Language-Action (VLA) models for collaborative navigation. Hao et al.~\cite{hao2025conav} propose the CoNav framework in which one agent has access to a bird's-eye view, while Wang et al.~\cite{wang2026conavbench} put forward VLA-based collaborative navigation, CoNavBench, with inter-agent communication. Both settings involve centralized information and differ substantially from ours in environments, agent observability, architectures, and language information. To enable an informative comparison under our task setting, we implement a fully decentralized VLA collaboration baseline as a representative in our scenes, denoted as \textbf{\textit{VLA-Collab}}.
Although audio-language-action (ALA) collaboration has been less explored in navigation, we nevertheless include \textbf{\textit{ALA-Collab}} as the audio counterpart to VLA-Collab. Both VLA-Collab and ALA-Collab use restricted modality inputs. So we include \textbf{\textit{AVLA-Collab}}, where all agents receive audio, vision, and language inputs, to represent homogeneous full-modality collaboration in our settings~\cite{zhang2025advancing,liu2025caml}.
For a fair comparison, all baselines use the same configurations and hyperparameters, and agents in collaborative baselines have the same number of parameters. 



\begin{figure}[t]
    \centering
    \captionsetup{font=footnotesize}
    \subcaptionsetup{font=scriptsize}

    \hspace{2mm}
    \begin{subfigure}[b]{0.3\textwidth}
        \centering
        \includegraphics[width=\textwidth]{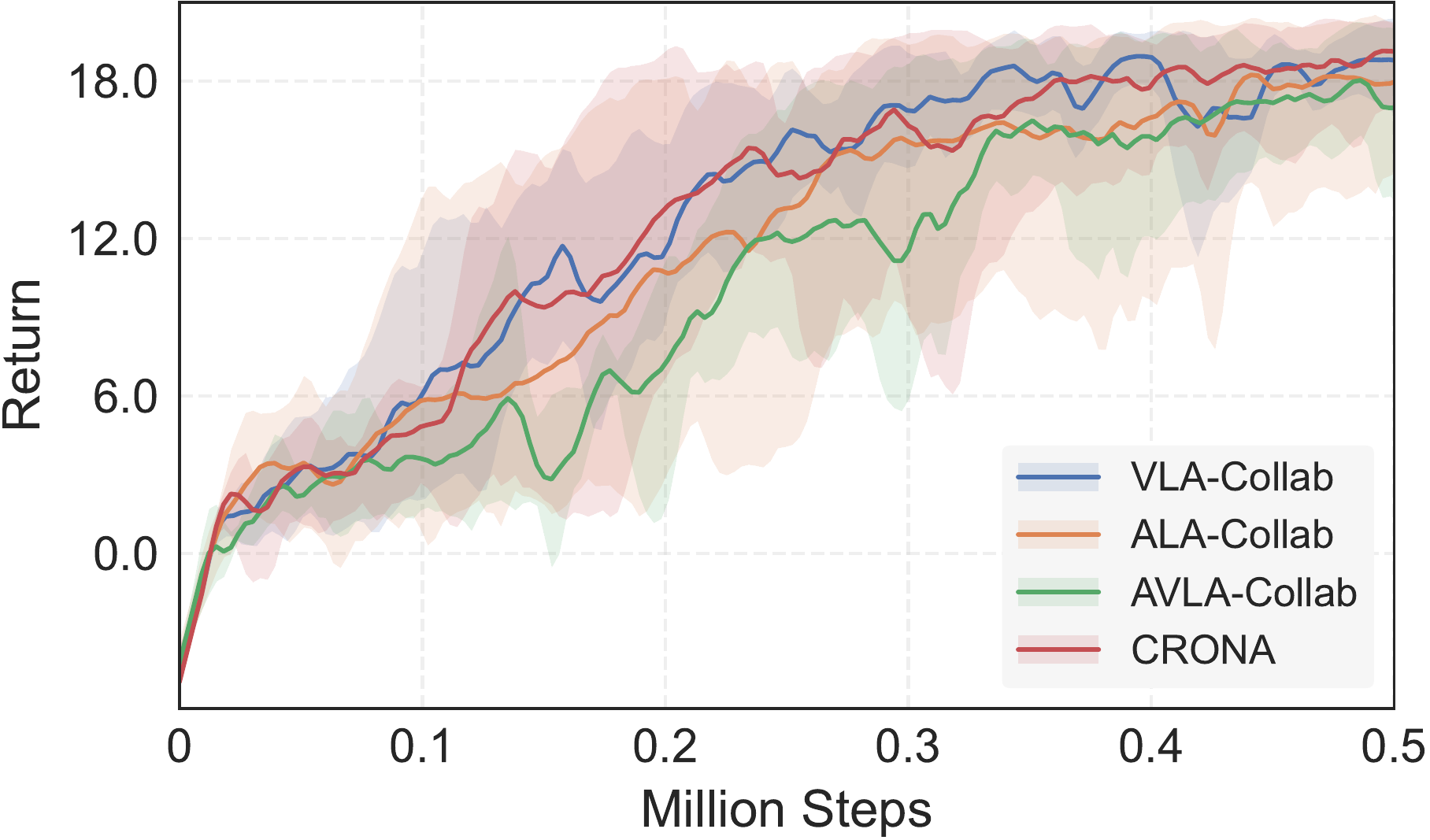}
        \caption{\;\texttt{Studio} $\mid$ \texttt{Picture}}
        \label{subfig:studio-train}
    \end{subfigure}
    \hfill
    \begin{subfigure}[b]{0.3\textwidth}
        \centering
        \includegraphics[width=\textwidth]{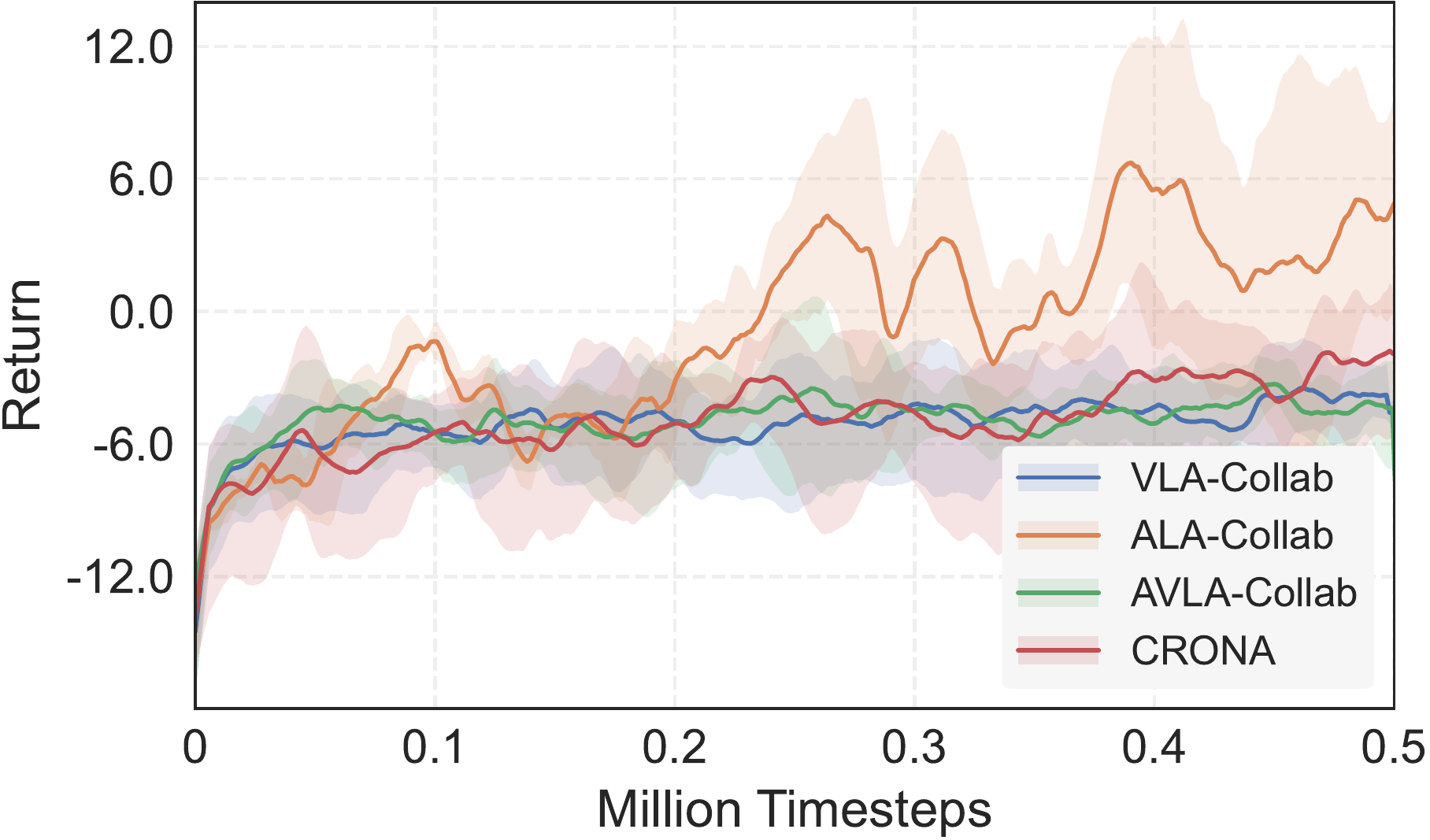}
        \caption{\;\texttt{Corridor} $\mid$ \texttt{Sink}}
        \label{subfig:corridor-train}
    \end{subfigure}
    \hfill
    \begin{subfigure}[b]{0.3\textwidth}
        \centering
        \includegraphics[width=\textwidth]{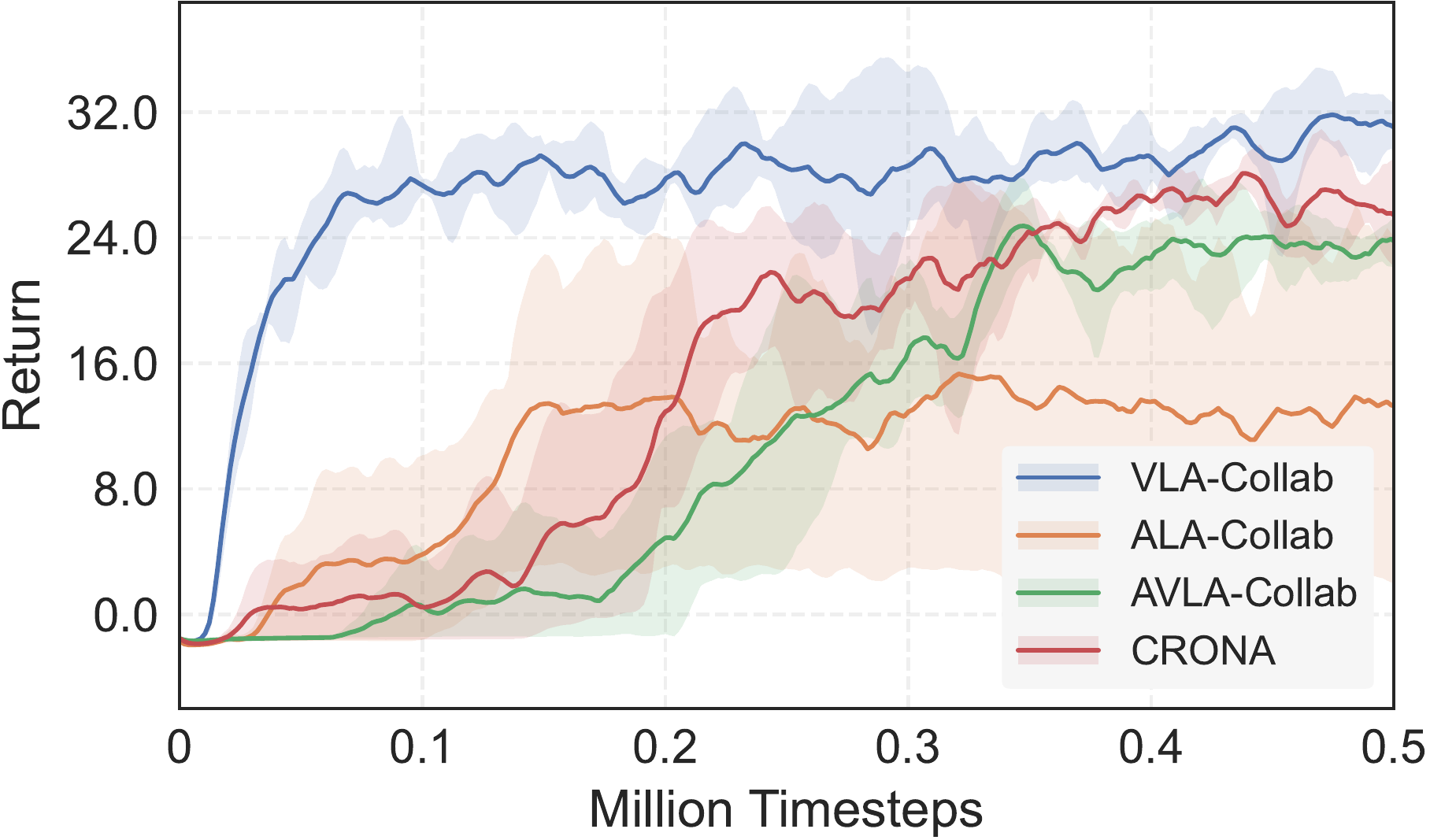}
        \caption{\,\texttt{Apartment} $\mid$ \texttt{Bed \& Counter}}
        \label{subfig:apt-train}
    \end{subfigure}
    \hspace{2mm}

    \vspace{1mm}
    \hspace{2mm}
    \begin{subfigure}[b]{0.3\textwidth}
        \centering
        \includegraphics[width=\textwidth]{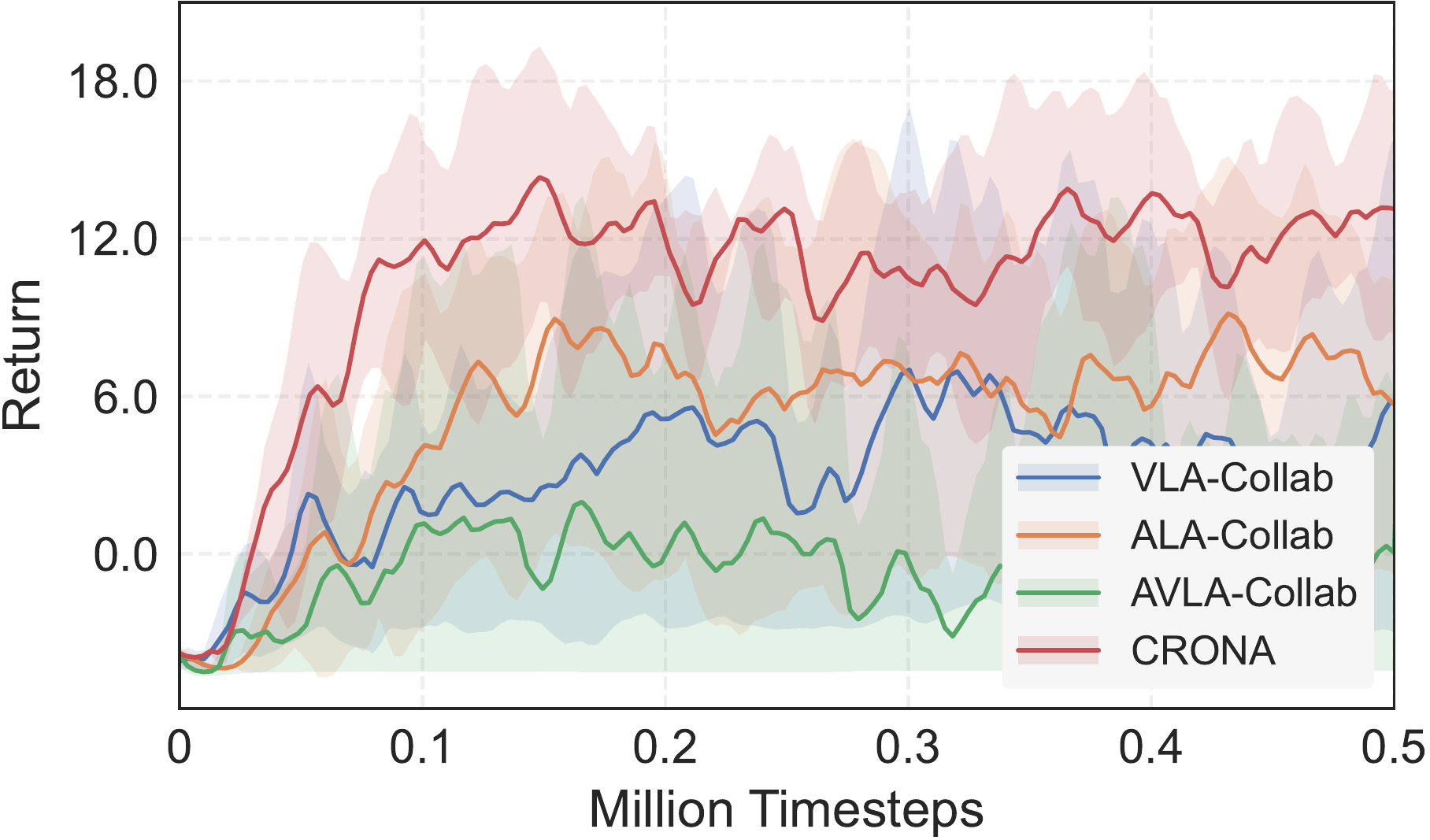}
        \caption{\;\;\texttt{Ranch} $\mid$ \texttt{Picture \& Table}}
        \label{subfig:ranch-train}
    \end{subfigure}
    \hfill
    \begin{subfigure}[b]{0.3\textwidth}
        \centering
        \includegraphics[width=\textwidth]{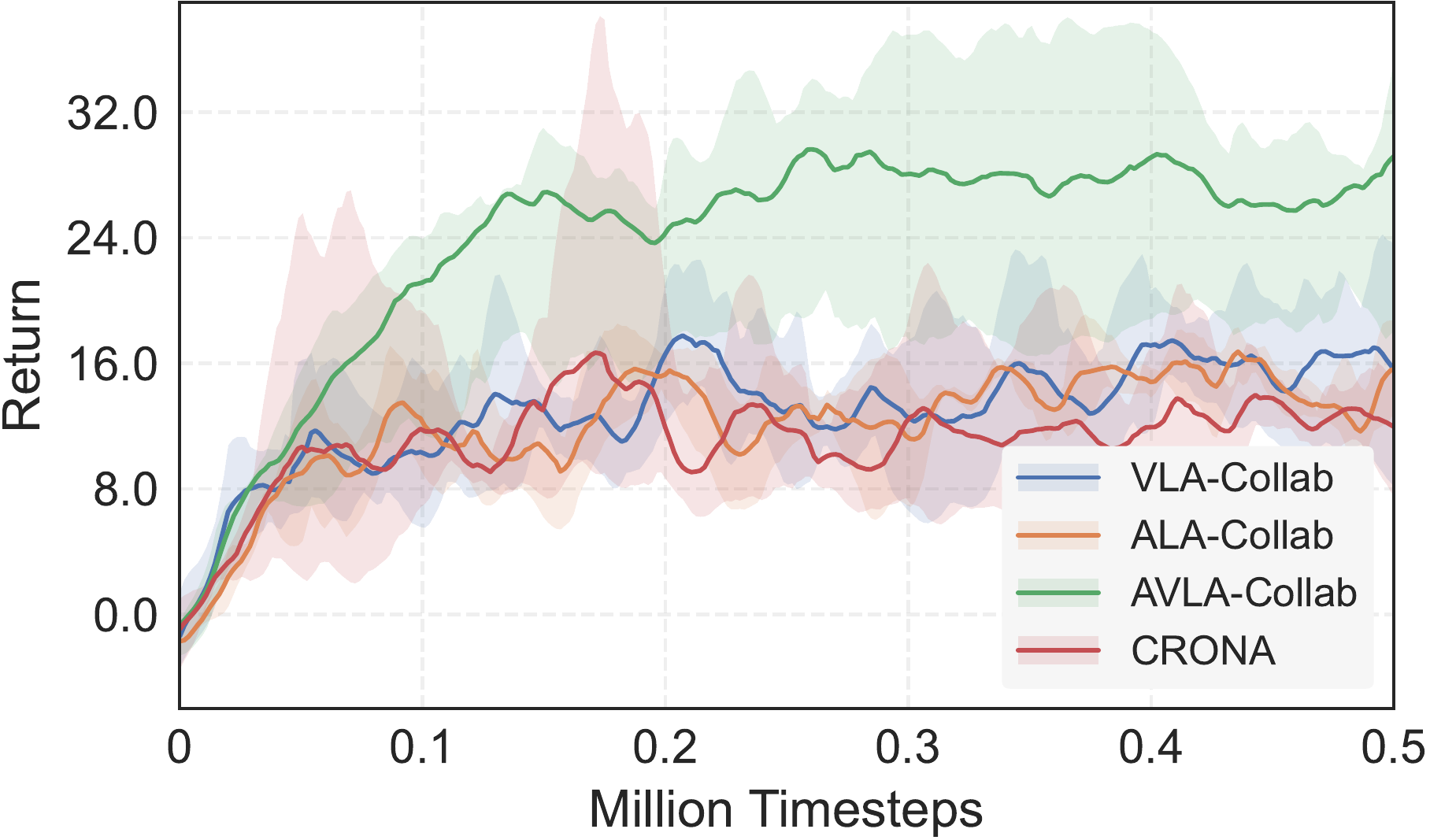}
        \caption{\texttt{Maze} $\mid$  \texttt{Drawer \& Table \& Chair}}
        \label{subfig:maze-train}
    \end{subfigure}
    \hfill
    \begin{subfigure}[b]{0.3\textwidth}
        \centering
        \includegraphics[width=\textwidth]{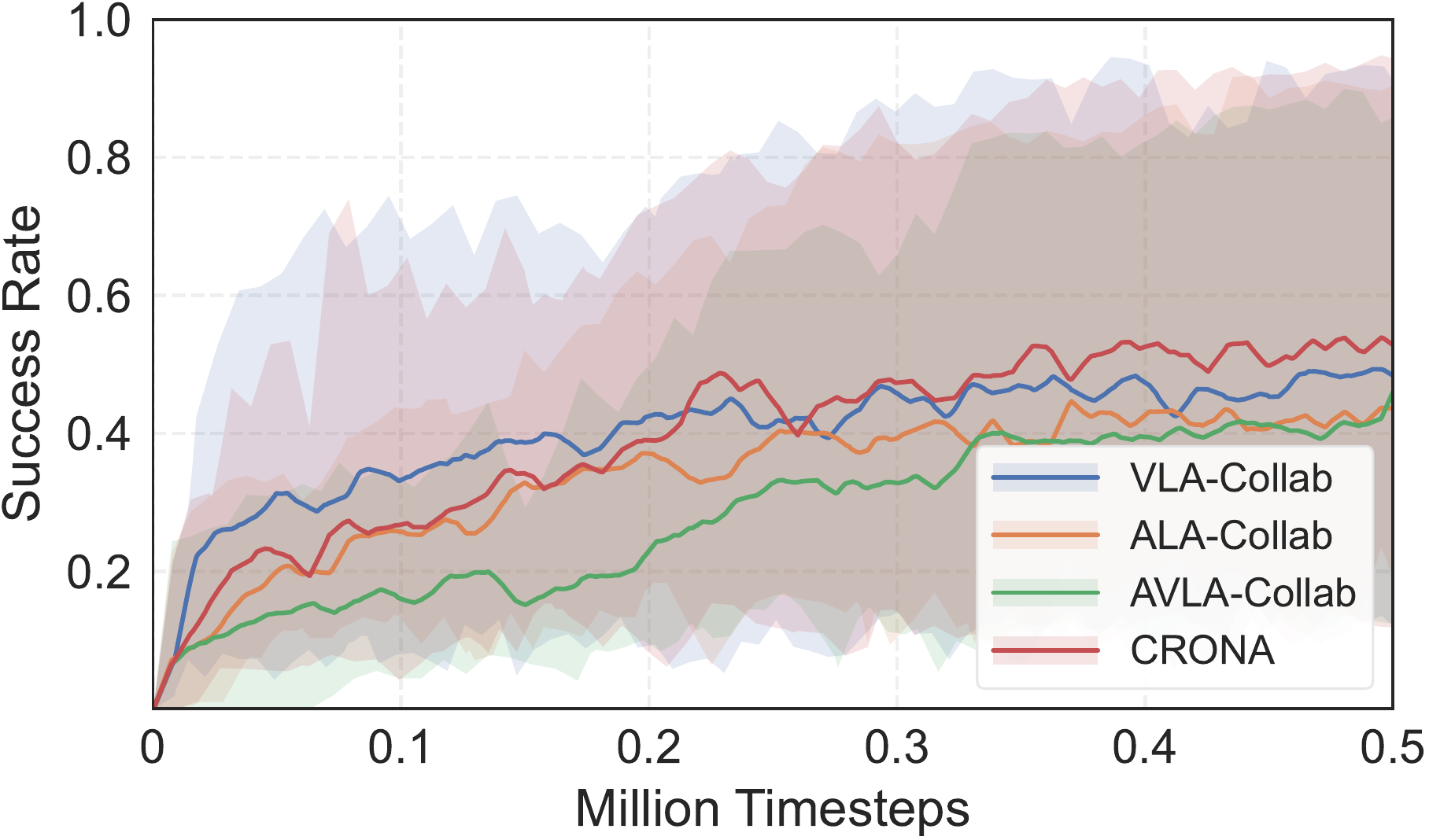}
        \caption{\;\;\texttt{Aggregated Success Rate}}
        \label{subfig:aggregated-train}
    \end{subfigure}
    \hspace{2mm}
    \caption{Evaluation of CRONA and collaborative navigation baselines across 5 \texttt{Matterport3D} scenes: (a)-(e) show the episode return; (f) shows the aggregated results of success rate. The x-axis indicates the environment steps. Curves are smoothed by an exponential moving average. Shadows denote 90\% bootstrapped CI. Results are averaged over 5 runs.\vspace{-4mm}}
    \label{fig:training}
\end{figure}

\subsection{Results}

Figure~\ref{fig:training} shows the evaluation during training, averaged over five runs. Table~\ref{tab:scene_results_combined} provides a detailed comparison between CRONA and the baselines on task completion and navigation efficiency. The effectiveness of collaborative navigation is domain-dependent. We group them into 5 patterns: no clear dominance, vision dominance, audio dominance, cross-modal, and multi-modal dominance.

\textbf{No Clear Dominance }
As shown in Figure~\ref{subfig:studio-fig} and the \texttt{Studio} columns of Table~\ref {tab:scene_results_combined}, all collaborative navigation methods perform well on \texttt{Studio}, achieving an average success rate of 90.80 $\pm$ 4.93\%. CRONA achieves the highest success rate, at 95.72\%. All collaborative methods substantially outperform the single-agent baseline. This is because decentralized agents can cover a larger exploration area and reduce the impact of premature stopping near the target. These results demonstrate the advantage of collaborative navigation: with proper training, even fully decentralized agents can coordinate effectively without communication.

\textbf{Audio Dominance }
As shown in Figure~\ref{subfig:corridor-train} and the \texttt{Corridor} columns of Table~\ref{tab:scene_results_combined}, audio cues dominate policy learning in this task. Most methods perform poorly, whereas ALA-Collab achieves the best performance with a 25.31\% success rate. This pattern is mainly due to the corridor geometry. Agents initialized near the middle of the corridor receive few informative visual cues and must rely on weak acoustic signals to infer the sound-source direction. As a result, VLA-Collab and AVLA-Collab perform worst among the collaborative baselines, with success rates around 14\%. Moreover, incorrect early decisions require long U-turns to recover, as indicated by more than 87.36\% timeouts. This audio-dominant pattern suggests that vision is not always the most reliable cue: certain targets can be localized more effectively via audio. This suggests the potential of cross-modal collaboration.

\textbf{Vision Dominance }
Figure~\ref{subfig:apt-train} and the \texttt{Apartment} columns demonstrate a vision-dominant regime. The collaboration between two vision agents achieves the best performance, reaching a success rate of 78.96\%, since the targets are large and visually salient. Audio observations are less reliable in this setting, where two audio agents achieve only 38.23\% success, mainly because mixed audio from two distinct sound sources can disrupt auxiliary belief prediction and lead to unstable policy updates. Audio agents struggle to identify the precise stopping location, as reflected by the high early-stop failure rate of 21.45\%, compared with 4.96\%-11.98\% for the other methods. 
Notably, CRONA outperforms AVLA-Collab by 5.14\% in this environment. \textbf{This suggests that weak or unreliable modalities can hurt multi-modal policies: with limited model capacity, noisy inputs may divert representational capacity away from useful cues.} We find that lower target distances and higher detection rates are associated with higher success rates, indicating that task completion is primarily governed by localization quality rather than by a single bottleneck object.

\textbf{Cross-Modal Dominance }
In \texttt{Ranch}, CRONA achieves the strongest performance, with a 64.62\% success rate (Figure~\ref{subfig:ranch-train}). We attribute this to effective collaboration between agents with complementary modalities. The audio agent localizes the picture using clean, transient camera-shutter sounds, while the vision agent identifies the table based on its large profile in an open, unobstructed dining room (Appendix~\ref{app:expset}). Interestingly, VLA-Collab and ALA-Collab achieve reasonable performance of around 40\%, but AVLA-Collab performs even worse, with only 18.93\% success. This is because the monolithic multi-modal with limited capacity struggles to align and effectively exploit different modalities (discussed in Section~\ref{subsec:ablation}). \textbf{We hypothesize that cross-modal collaboration is particularly effective when different targets have clean, modality-specific cues. It is also parameter-efficient, as each agent only needs to model its own sensory input rather than jointly aligning and reasoning over rich multi-modal observations.}

Although the success rate is generally consistent with the detection rate and average distance to targets as in other domains, steps, and timeout rate do not align with task success in this scene (Table~\ref{tab:scene_results_combined}b). Single-Agent, VLA-Collab, and ALA-Collab often terminate early or stop exploring, resulting in fewer steps but lower success. In contrast, AVLA-Collab and CRONA take more exploration steps and achieve higher success. This suggests that in harder, time-constrained tasks, inputs with heterogeneous modalities can induce more diverse behaviors and thereby promote broader exploration.

\textbf{Multi-Modal Dominance }
In the most complex scene, \texttt{Maze}, collaboration benefits from larger model capacity and access to all available sensory inputs. As shown in Figure~\ref{subfig:maze-train} and \texttt{Maze} columns in Table~\ref{tab:scene_results_combined}, AVLA-Collab achieves the best performance in \texttt{Maze}, with a 26.16\% success rate. This result is mainly consistent with the observation in \texttt{Ranch}: navigation in complex scenes requires complementary information from multiple modalities. CRONA performs only moderately worse than the homogeneous collaboration baselines, suggesting that cross-modal collaboration can still exploit partial, modality-specific inputs effectively. \textbf{This finding, together with its best overall performance (Figure~\ref{subfig:aggregated-train}), indicates that CRONA provides a robust and efficient alternative to multi-modal collaboration.} The success rates for all scenes are shown in Appendix~\ref{app:addres}.

\begin{table*}[t]
\centering
\scriptsize
\captionsetup{font=footnotesize, skip=2pt}
\caption{Comparison between CRONA and baselines across five scenes: \texttt{Studio}, \texttt{Apartment}, \texttt{Ranch}, \texttt{Corridor}, and \texttt{Maze}. \textbf{Dist}, \textbf{Detect}, and \textbf{Succ} denote the average distance from each agent to its nearest target object (m), target detection rate (\%), and task success rate (\%), respectively. \textbf{Steps} and \textbf{Timeout} denote the average number of steps used per episode and the episode timeout rate (\%), respectively. \underline{\textbf{Underlined bolds}} denote the best performance across baselines on each domain.}
\label{tab:scene_results_combined}
\newcolumntype{C}[1]{>{\centering\arraybackslash}m{#1}}

\vspace{1mm}
\textbf{(a) Task performance comparison.}
\vspace{0.5mm}

\resizebox{\textwidth}{!}{
\begin{tabular}{C{2cm} *{13}{C{0.5cm}}}
\toprule
\multirow{2}{*}{\textbf{Method}}
& \multicolumn{2}{c}{\texttt{\textbf{Studio}}}
& \multicolumn{2}{c}{\texttt{\textbf{Corridor}}}
& \multicolumn{3}{c}{\texttt{\textbf{Apartment}}}
& \multicolumn{3}{c}{\texttt{\textbf{Ranch}}}
& \multicolumn{3}{c}{\texttt{\textbf{Maze}}} \\
\cmidrule(lr){2-3} \cmidrule(lr){4-5} \cmidrule(lr){6-8} \cmidrule(lr){9-11} \cmidrule(lr){12-14}
& \textbf{Dist} & \textbf{Succ}
& \textbf{Dist} & \textbf{Succ}
& \textbf{Dist} & \textbf{Detect} & \textbf{Succ}
& \textbf{Dist} & \textbf{Detect} & \textbf{Succ}
& \textbf{Dist} & \textbf{Detect} & \textbf{Succ} \\
\midrule
\textbf{\textit{Single-Agent}}
& 3.24 & 32.66
& 11.95 & 5.71
& 8.58 & 0.84 & 31.55
& 8.68 & 0.74 & 12.34
& 7.29 & 0.18 & 0.00 \\
\textbf{\textit{VLA-Collab}}
& 1.49 & 93.65
& 9.28 & 14.54
& \underline{\textbf{2.32}} & \underline{\textbf{1.78}} & \underline{\textbf{78.96}}
& 5.75 & 0.89 & 38.97
& 6.89 & 1.06 & 18.96 \\
\textbf{\textit{ALA-Collab}}
& 3.05 & 88.17
& \underline{\textbf{8.64}} & \underline{\textbf{25.31}}
& 4.34 & 1.47 & 38.23
& 5.33 & 1.28 & 42.15
& 6.81 & 1.17 & 19.63 \\
\textbf{\textit{AVLA-Collab}}
& 2.91 & 85.87
& 9.75 & 14.29
& 3.93 & 1.61 & 63.38
& 6.87 & 0.78 & 18.93
& \underline{\textbf{6.77}} & \underline{\textbf{1.46}} & \underline{\textbf{26.16}} \\
\rowcolor{black!5}
\textbf{CRONA}
& \underline{\textbf{1.45}} & \underline{\textbf{95.72}}
& 9.11 & 21.50
& 3.64 & 1.69 & 68.52
& \underline{\textbf{5.02}} & \underline{\textbf{1.58}} & \underline{\textbf{64.62}}
& 7.06 & 0.93 & 12.13 \\
\bottomrule
\end{tabular}
}

\vspace{2mm}
\textbf{(b) Navigation efficiency comparison.}
\vspace{0.5mm}

\resizebox{\textwidth}{!}{
\begin{tabular}{C{1.9cm} *{10}{C{0.8cm}}}
\toprule
\multirow{2}{*}{\textbf{Method}}
& \multicolumn{2}{c}{\texttt{\textbf{Studio}}}
& \multicolumn{2}{c}{\texttt{\textbf{Corridor}}}
& \multicolumn{2}{c}{\texttt{\textbf{Apartment}}}
& \multicolumn{2}{c}{\texttt{\textbf{Ranch}}}
& \multicolumn{2}{c}{\texttt{\textbf{Maze}}} \\
\cmidrule(lr){2-3} \cmidrule(lr){4-5} \cmidrule(lr){6-7} \cmidrule(lr){8-9} \cmidrule(lr){10-11}
& \textbf{Steps} & \textbf{Timeout}
& \textbf{Steps} & \textbf{Timeout}
& \textbf{Steps} & \textbf{Timeout}
& \textbf{Steps} & \textbf{Timeout}
& \textbf{Steps} & \textbf{Timeout} \\
\midrule
\textbf{\textit{Single-Agent}}
& 23.40 & 1.38
& 146.58 & 95.86
& 434.60 & 56.47
& \underline{\textbf{260.11}} & \underline{\textbf{18.94}}
& \underline{\textbf{15.32}} & \underline{\textbf{0.00}} \\
\textbf{\textit{VLA-Collab}}
& 19.47 & 0.71
& 118.92 & 87.94
& \underline{\textbf{289.76}} & \underline{\textbf{16.08}}
& 318.67 & 22.41
& 129.13 & 0.79 \\
\textbf{\textit{ALA-Collab}}
& 20.18 & 0.85
& \underline{\textbf{95.66}} & \underline{\textbf{74.68}}
& 342.59 & 40.32
& 490.26 & 32.87
& 156.18 & 0.88 \\
\textbf{\textit{AVLA-Collab}}
& 21.59 & 0.92
& 116.34 & 87.36
& 308.27 & 28.96
& 396.28 & 24.88
& 624.50 & 20.45 \\
\rowcolor{black!5}
\textbf{CRONA}
& \underline{\textbf{16.08}} & \underline{\textbf{0.65}}
& 135.51 & 88.55
& 293.51 & 24.87
& 606.53 & 36.90
& 293.69 & 7.14 \\
\bottomrule
\end{tabular}
}

\vspace{-4mm}
\end{table*}

\subsection{Ablation Study} \label{subsec:ablation}

Table~\ref{tab:ablation} analyzes the effects of model capacity, input-signal quality, and framework components. 

We vary the embedding size and compare the homogeneous multi-modal baseline AVLA-Collab with CRONA. AVLA-Collab is highly sensitive to representation capacity. With a small embedding size, agents must compress visual, acoustic, and language information into a limited latent space, leading to poor collaboration performance (0.06\% success at embedding size 60). Increasing the embedding size adds only modest overhead (roughly 1 MiB for every additional 40 dimensions), but improves success rate by up to 29.61\%. With sufficient capacity, AVLA-Collab can even outperform CRONA at the same embedding size. This suggests that full-modality agents can benefit from rich inputs once capacity is no longer the bottleneck. CRONA is more stable across embedding sizes, since each agent processes fewer modalities and faces a simpler representation-learning problem.

We vary image resolution to evaluate robustness to visual signal quality. The homogeneous vision-based methods, VLA-Collab and AVLA-Collab, degrade substantially at low resolution, achieving only 12.76\% and 15.43\% success, respectively. CRONA is more robust, maintaining 42.76\%-65.48\% success across different resolutions. This robustness comes from modality specialization. Even with poor visual observations, the audio-based agent may take over and help to maintain the performance.

Finally, we ablate the auxiliary beliefs and state input of the centralized critic in Table~\ref{tab:ablation}c. Removing the category belief only slightly reduces performance, by 0.72\% for AVLA-Collab and 2.04\% for CRONA. In contrast, the location belief has a much larger effect. Once it is removed, either alone or together with the category belief, the success rate drops by about half for both methods. We find state information crucial for centralized training, where both methods almost fail (less than 0.2\% success rate) to learn without it. Overall, each component contributes to performance, with location belief and state information playing the most important roles.

\begin{table}[t]
\centering
\scriptsize
\setlength{\tabcolsep}{4pt}
\captionsetup{font=footnotesize}
\caption{Ablation studies on \texttt{Ranch}. 
(a) compares \textit{\textbf{AVLA-Collab}} and \textbf{CRONA} across embedding sizes, reporting model size (MiB) and task success rate (\%). 
(b) compares \textit{\textbf{VLA-Collab}}, \textit{\textbf{AVLA-Collab}}, and \textbf{CRONA} across visual resolutions, reporting task success rate (\%) and steps per episode. 
(c) compares \textit{\textbf{AVLA-Collab}} with \textbf{CRONA} and ablates key CRONA components.
$^{\dagger}$ denotes the pivot setting used in Table~\ref{tab:scene_results_combined}. 
Subscripted arrows show absolute changes relative to the corresponding $^{\dagger}$ pivot entry, where 
\agentred{$\uparrow$} denotes an increase and \blueagent{$\downarrow$} denotes a decrease. 
\protect\underline{\protect\textbf{Underlined bolds}} mark the best performance under each setting. \vspace{1mm}}
\label{tab:ablation}
\newcolumntype{C}[1]{>{\centering\arraybackslash}m{#1}}

\vspace{1mm}
\textbf{(a) Embedding-size ablation.}
\vspace{0.5mm}

\resizebox{\linewidth}{!}{
\begin{tabular}{C{2cm} C{1.3cm} C{1.3cm} C{1.0cm} C{1.0cm} C{1.3cm} C{1.3cm} C{1.3cm} C{1.3cm}}
\toprule
\multirow{2}{*}{\textbf{Method}}
& \multicolumn{2}{c}{\textbf{60}}
& \multicolumn{2}{c}{\textbf{100}$^{\dagger}$}
& \multicolumn{2}{c}{\textbf{140}}
& \multicolumn{2}{c}{\textbf{180}} \\
\cmidrule(lr){2-3} \cmidrule(lr){4-5} \cmidrule(lr){6-7} \cmidrule(lr){8-9}
& \textbf{Size} & \textbf{Succ}
& \textbf{Size} & \textbf{Succ}
& \textbf{Size} & \textbf{Succ}
& \textbf{Size} & \textbf{Succ} \\
\midrule
\textbf{\textit{AVLA-Collab}} 
& 36.95${}_{\;\blueagent{\downarrow\,\text{0.94}}}$ & 0.06${}_{\;\blueagent{\downarrow\,\text{18.87}}}$
& 37.89$^{\dagger}$ & 18.93$^{\dagger}$ 
& 38.83${}_{\;\agentred{\uparrow\,\text{0.94}}}$ & 43.72${}_{\;\agentred{\uparrow\,\text{24.79}}}$
& 39.76${}_{\;\agentred{\uparrow\,\text{1.87}}}$ & \underline{\textbf{73.33}}${}_{\;\agentred{\uparrow\,\text{54.40}}}$ \\
\rowcolor{black!5}
\textbf{CRONA}       
& \underline{\textbf{27.11}}${}_{\;\blueagent{\downarrow\,\text{0.93}}}$ & \underline{\textbf{11.38}}${}_{\;\blueagent{\downarrow\,\text{53.24}}}$
& \underline{\textbf{28.04}}$^{\dagger}$ & \underline{\textbf{64.62}}$^{\dagger}$ 
& \underline{\textbf{28.98}}${}_{\;\agentred{\uparrow\,\text{0.94}}}$ & \underline{\textbf{65.54}}${}_{\;\agentred{\uparrow\,\text{0.92}}}$
& \underline{\textbf{29.92}}${}_{\;\agentred{\uparrow\,\text{1.88}}}$ & 68.75${}_{\;\agentred{\uparrow\,\text{4.13}}}$ \\
\bottomrule
\end{tabular}
}

\vspace{2mm}
\textbf{(b) Resolution ablation.}
\vspace{0.5mm}

\resizebox{\linewidth}{!}{
\begin{tabular}{C{2cm} C{1.3cm} C{1.3cm} C{1.3cm} C{1.3cm} C{1.0cm} C{1.0cm} C{1.3cm} C{1.3cm}}
\toprule
\multirow{2}{*}{\textbf{Method}}
& \multicolumn{2}{c}{\textbf{4 $\times$ 4}}
& \multicolumn{2}{c}{\textbf{8 $\times$ 8}}
& \multicolumn{2}{c}{\textbf{16 $\times$ 16}$^{\dagger}$}
& \multicolumn{2}{c}{\textbf{32 $\times$ 32}} \\
\cmidrule(lr){2-3} \cmidrule(lr){4-5} \cmidrule(lr){6-7} \cmidrule(lr){8-9}
& \textbf{Succ} & \textbf{Steps}
& \textbf{Succ} & \textbf{Steps}
& \textbf{Succ} & \textbf{Steps}
& \textbf{Succ} & \textbf{Steps} \\
\midrule
\textbf{\textit{VLA-Collab}} 
& 12.76${}_{\;\blueagent{\downarrow\,\text{26.21}}}$ & \underline{\textbf{317.68}}${}_{\;\blueagent{\downarrow\,\text{0.99}}}$
& 16.51${}_{\;\blueagent{\downarrow\,\text{22.46}}}$ & 343.08${}_{\;\agentred{\uparrow\,\text{24.41}}}$
& 38.97$^{\dagger}$ & \underline{\textbf{318.67}}$^{\dagger}$
& 63.53${}_{\;\agentred{\uparrow\,\text{24.56}}}$ & 581.70${}_{\;\agentred{\uparrow\,\text{263.03}}}$ \\
\textbf{\textit{AVLA-Collab}} 
& 15.43${}_{\;\blueagent{\downarrow\,\text{3.50}}}$ & 320.76${}_{\;\blueagent{\downarrow\,\text{75.52}}}$
& 18.25${}_{\;\blueagent{\downarrow\,\text{0.68}}}$ & \underline{\textbf{322.65}}${}_{\;\blueagent{\downarrow\,\text{73.63}}}$
& 18.93$^{\dagger}$ & 396.28$^{\dagger}$ 
& 19.21${}_{\;\agentred{\uparrow\,\text{0.28}}}$ & \underline{\textbf{388.29}}${}_{\;\blueagent{\downarrow\,\text{7.99}}}$ \\
\rowcolor{black!5}
\textbf{CRONA}   
& \underline{\textbf{42.76}}${}_{\;\blueagent{\downarrow\,\text{21.86}}}$ & 346.19${}_{\;\blueagent{\downarrow\,\text{260.34}}}$
& \underline{\textbf{62.04}}${}_{\;\blueagent{\downarrow\,\text{2.58}}}$ & 573.81${}_{\;\blueagent{\downarrow\,\text{32.72}}}$
& \underline{\textbf{64.62}}$^{\dagger}$ & 606.53$^{\dagger}$ 
& \underline{\textbf{65.48}}${}_{\;\agentred{\uparrow\,\text{0.86}}}$ & 615.92${}_{\;\agentred{\uparrow\,\text{9.39}}}$ \\
\bottomrule
\end{tabular}
}

\vspace{2mm}
\textbf{(c) Component ablation.}
\vspace{0.5mm}

\resizebox{\linewidth}{!}{
\begin{tabular}{C{2.3cm} C{2.3cm} C{2.3cm} C{2.3cm} C{2.3cm} C{1cm}}
\toprule
\vspace{1mm} \textbf{Method} \vspace{0.5mm} 
& \vspace{1mm} \textbf{w/o Category Belief} \vspace{0.5mm} 
& \vspace{1mm} \textbf{w/o Location Belief}\vspace{0.5mm}  
& \vspace{1mm} \textbf{w/o Any Belief} \vspace{0.5mm} 
& \vspace{1mm} \textbf{Critic w/o State} \vspace{0.5mm}  
& \vspace{1mm} \textbf{Full}$^{\dagger}$ \vspace{0.5mm}   \\
\midrule
\textbf{\textit{AVLA-Collab}}
& 18.21${}_{\;\blueagent{\downarrow\,\text{0.72}}}$
& 8.78${}_{\;\blueagent{\downarrow\,\text{10.15}}}$
& 8.75${}_{\;\blueagent{\downarrow\,\text{10.18}}}$
& 0.06${}_{\;\blueagent{\downarrow\,\text{18.87}}}$
& \underline{\textbf{18.93}}$^{\dagger}$ \\
\rowcolor{black!5}
\textbf{CRONA}
& 62.58${}_{\;\blueagent{\downarrow\,\text{2.04}}}$
& 26.16${}_{\;\blueagent{\downarrow\,\text{38.46}}}$
& 31.40${}_{\;\blueagent{\downarrow\,\text{33.22}}}$
& 0.13${}_{\;\blueagent{\downarrow\,\text{64.49}}}$
& \underline{\textbf{64.62}}$^{\dagger}$ \\
\bottomrule
\end{tabular}
}

\vspace{-4mm}
\end{table}

\section{Conclusion}

We propose CRONA, a decentralized MARL framework for cross-modal navigation. By assigning complementary sensory modalities to different agents, CRONA reduces the burden of learning dense multi-modal representations within each agent, while retaining fully decentralized execution at test time. Experiments show that homogeneous collaboration with limited modalities may suffice for short-range navigation, while heterogeneous collaboration with complementary modalities generally performs better. In more complex scenes, richer multi-modal inputs and sufficient model capacity are also important for navigation. Overall, cross-modal collaboration is a robust and efficient alternative to multi-modal collaboration, especially when targets exhibit clean, modality-specific cues.

\textbf{Limitations}
This work has several limitations that suggest directions for future exploration. First, we focus on two common modalities, vision and audio, and extending CRONA to other sensory inputs, such as point clouds, LiDAR, or tactile signals, requires further study. Second, we use location and category beliefs as a proof of concept for auxiliary belief learning. Developing control-relevant belief representations for broader modalities and task structures is an important direction. Finally, due to constraints in the environment configuration, our current tasks are instantiated in 2D navigation settings. Extending cross-modal collaboration to full 3D embodied environments would further test its generality and practical applicability.

\clearpage
\newpage
\bibliographystyle{unsrt}
\bibliography{reference}

@inproceedings{fried2018speaker,
  title={Speaker-follower models for vision-and-language navigation},
  author={Fried, Daniel and Hu, Ronghang and Cirik, Volkan and Rohrbach, Anna and Andreas, Jacob and Morency, Louis-Philippe and Berg-Kirkpatrick, Taylor and Saenko, Kate and Klein, Dan and Darrell, Trevor},
  booktitle={Advances in neural information processing systems},
  volume={31},
  year={2018}
}

@inproceedings{hao2020towards,
  title={Towards learning a generic agent for vision-and-language navigation via pre-training},
  author={Hao, Weituo and Li, Chunyuan and Li, Xiujun and Carin, Lawrence and Gao, Jianfeng},
  booktitle={Proceedings of the IEEE/CVF conference on computer vision and pattern recognition},
  pages={13137--13146},
  year={2020}
}

@inproceedings{hong2021vlnbert,
  title={Vln bert: A recurrent vision-and-language bert for navigation},
  author={Hong, Yicong and Wu, Qi and Qi, Yuankai and Rodriguez-Opazo, Cristian and Gould, Stephen},
  booktitle={Proceedings of the IEEE/CVF conference on Computer Vision and Pattern Recognition},
  pages={1643--1653},
  year={2021}
}

@article{chen2021hamt,
  title={History aware multimodal transformer for vision-and-language navigation},
  author={Chen, Shizhe and Guhur, Pierre-Louis and Schmid, Cordelia and Laptev, Ivan},
  journal={Advances in neural information processing systems},
  volume={34},
  pages={5834--5847},
  year={2021}
}

@inproceedings{chen2020soundspaces,
  title={Soundspaces: Audio-visual navigation in 3d environments},
  author={Chen, Changan and Jain, Unnat and Schissler, Carl and Gari, Sebastia Vicenc Amengual and Al-Halah, Ziad and Ithapu, Vamsi Krishna and Robinson, Philip and Grauman, Kristen},
  booktitle={European conference on computer vision},
  pages={17--36},
  year={2020},
  organization={Springer}
}

@inproceedings{huang2022modality,
  title={Modality competition: What makes joint training of multi-modal network fail in deep learning?(provably)},
  author={Huang, Yu and Lin, Junyang and Zhou, Chang and Yang, Hongxia and Huang, Longbo},
  booktitle={International conference on machine learning},
  pages={9226--9259},
  year={2022},
  organization={PMLR}
}

@inproceedings{anderson2018vision,
  title={Vision-and-language navigation: Interpreting visually-grounded navigation instructions in real environments},
  author={Anderson, Peter and Wu, Qi and Teney, Damien and Bruce, Jake and Johnson, Mark and S{\"u}nderhauf, Niko and Reid, Ian and Gould, Stephen and Van Den Hengel, Anton},
  booktitle={Proceedings of the IEEE conference on computer vision and pattern recognition},
  pages={3674--3683},
  year={2018}
}

@inproceedings{wang2020makes,
  title={What makes training multi-modal classification networks hard?},
  author={Wang, Weiyao and Tran, Du and Feiszli, Matt},
  booktitle={Proceedings of the IEEE/CVF conference on computer vision and pattern recognition},
  pages={12695--12705},
  year={2020}
}

@inproceedings{wu2022characterizing,
  title={Characterizing and overcoming the greedy nature of learning in multi-modal deep neural networks},
  author={Wu, Nan and Jastrzebski, Stanislaw and Cho, Kyunghyun and Geras, Krzysztof J},
  booktitle={International Conference on Machine Learning},
  pages={24043--24055},
  year={2022},
  organization={PMLR}
}

@inproceedings{chen2021semantic,
  title={Semantic audio-visual navigation},
  author={Chen, Changan and Al-Halah, Ziad and Grauman, Kristen},
  booktitle={Proceedings of the IEEE/CVF Conference on Computer Vision and Pattern Recognition},
  pages={15516--15525},
  year={2021}
}

@inproceedings{huang2023visual,
  title={Visual language maps for robot navigation},
  author={Huang, Chenguang and Mees, Oier and Zeng, Andy and Burgard, Wolfram},
  booktitle={2023 IEEE International Conference on Robotics and Automation (ICRA)},
  pages={10608--10615},
  year={2023},
  organization={IEEE}
}

@inproceedings{krantz2020beyond,
  title={Beyond the nav-graph: Vision-and-language navigation in continuous environments},
  author={Krantz, Jacob and Wijmans, Erik and Majumdar, Arjun and Batra, Dhruv and Lee, Stefan},
  booktitle={European Conference on Computer Vision},
  pages={104--120},
  year={2020},
  organization={Springer}
}

@inproceedings{qi2021road,
  title={The road to know-where: An object-and-room informed sequential bert for indoor vision-language navigation},
  author={Qi, Yuankai and Pan, Zizheng and Hong, Yicong and Yang, Ming-Hsuan and Van Den Hengel, Anton and Wu, Qi},
  booktitle={Proceedings of the IEEE/CVF International Conference on Computer Vision},
  pages={1655--1664},
  year={2021}
}

@inproceedings{majumdar2020improving,
  title={Improving vision-and-language navigation with image-text pairs from the web},
  author={Majumdar, Arjun and Shrivastava, Ayush and Lee, Stefan and Anderson, Peter and Parikh, Devi and Batra, Dhruv},
  booktitle={European Conference on Computer Vision},
  pages={259--274},
  year={2020},
  organization={Springer}
}

@inproceedings{yu2023l3mvn,
  title={L3mvn: Leveraging large language models for visual target navigation},
  author={Yu, Bangguo and Kasaei, Hamidreza and Cao, Ming},
  booktitle={2023 IEEE/RSJ International Conference on Intelligent Robots and Systems (IROS)},
  pages={3554--3560},
  year={2023},
  organization={IEEE}
}

@article{paul2022avlen,
  title={Avlen: Audio-visual-language embodied navigation in 3d environments},
  author={Paul, Sudipta and Roy-Chowdhury, Amit and Cherian, Anoop},
  journal={Advances in Neural Information Processing Systems},
  volume={35},
  pages={6236--6249},
  year={2022}
}

@article{chen2020learning,
  title={Learning to set waypoints for audio-visual navigation},
  author={Chen, Changan and Majumder, Sagnik and Al-Halah, Ziad and Gao, Ruohan and Ramakrishnan, Santhosh Kumar and Grauman, Kristen},
  journal={arXiv preprint arXiv:2008.09622},
  year={2020}
}

@article{bruce2006safe,
  title={Safe multirobot navigation within dynamics constraints},
  author={Bruce, James R and Veloso, Manuela M},
  journal={Proceedings of the IEEE},
  volume={94},
  number={7},
  pages={1398--1411},
  year={2006},
  publisher={IEEE}
}

@book{decpomdp,
    author = {Oliehoek, Frans A. and Amato, Christopher},
    title = {A Concise Introduction to Decentralized POMDPs},
    year = {2016},
    publisher = {Springer},
    doi = {10.1007/978-3-319-28929-8}
}

@inproceedings{van2009centralized,
  title={Centralized path planning for multiple robots: Optimal decoupling into sequential plans.},
  author={van Den Berg, Jur and Snoeyink, Jack and Lin, Ming C and Manocha, Dinesh},
  booktitle={Robotics: Science and systems},
  volume={2},
  pages={2--3},
  year={2009}
}

@article{janssen2016cloud,
  title={Cloud based centralized task control for human domain multi-robot operations},
  author={Janssen, Rob and van de Molengraft, Ren{\'e} and Bruyninckx, Herman and Steinbuch, Maarten},
  journal={Intelligent Service Robotics},
  volume={9},
  number={1},
  pages={63--77},
  year={2016},
  publisher={Springer}
}

@book{ marl-book,
  author = {Stefano V. Albrecht and Filippos Christianos and Lukas Sch\"afer},
  title = {Multi-Agent Reinforcement Learning: Foundations and Modern Approaches},
  publisher = {MIT Press},
  year = {2024},
  url = {https://www.marl-book.com}
}

@article{amato2024introduction,
  title={An introduction to centralized training for decentralized execution in cooperative multi-agent reinforcement learning},
  author={Amato, Christopher},
  journal={arXiv preprint arXiv:2409.03052},
  year={2024}
}

@article{yuan2023survey,
  title={A survey of progress on cooperative multi-agent reinforcement learning in open environment},
  author={Yuan, Lei and Zhang, Ziqian and Li, Lihe and Guan, Cong and Yu, Yang},
  journal={arXiv preprint arXiv:2312.01058},
  year={2023}
}

@article{zhang2021multi,
  title={Multi-agent reinforcement learning: A selective overview of theories and algorithms},
  author={Zhang, Kaiqing and Yang, Zhuoran and Ba{\c{s}}ar, Tamer},
  journal={Handbook of reinforcement learning and control},
  pages={321--384},
  year={2021},
  publisher={Springer}
}

@inproceedings{tan1993multi,
  title={Multi-agent reinforcement learning: Independent vs. cooperative agents},
  author={Tan, Ming and others},
  booktitle={Proceedings of the tenth international conference on machine learning},
  pages={330--337},
  year={1993}
}

@article{peshkin2001learning,
  title={Learning to cooperate via policy search},
  author={Peshkin, Leonid and Kim, Kee-Eung and Meuleau, Nicolas and Kaelbling, Leslie Pack},
  journal={arXiv preprint cs/0105032},
  year={2001}
}

@article{lyu2021contrasting,
  title={Contrasting centralized and decentralized critics in multi-agent reinforcement learning},
  author={Lyu, Xueguang and Xiao, Yuchen and Daley, Brett and Amato, Christopher},
  journal={arXiv preprint arXiv:2102.04402},
  year={2021}
}

@article{lyu2023centralized,
  title={On centralized critics in multi-agent reinforcement learning},
  author={Lyu, Xueguang and Baisero, Andrea and Xiao, Yuchen and Daley, Brett and Amato, Christopher},
  journal={Journal of Artificial Intelligence Research},
  volume={77},
  pages={295--354},
  year={2023}
}

@article{MADDPG,
  title={Multi-agent actor-critic for mixed cooperative-competitive environments},
  author={Lowe, Ryan and Wu, Yi I and Tamar, Aviv and Harb, Jean and Pieter Abbeel, OpenAI and Mordatch, Igor},
  journal={Advances in neural information processing systems},
  volume={30},
  year={2017}
}

@inproceedings{MAPPO,
 title = {The Surprising Effectiveness of PPO in Cooperative Multi-Agent Games},
 author = {Yu, Chao and Velu, Akash and Vinitsky, Eugene and Gao, Jiaxuan and Wang, Yu and Bayen, Alexandre and Wu, Yi},
 booktitle = {Advances in Neural Information Processing Systems},
 pages = {24611--24624},
 publisher = {Curran Associates, Inc.},
 volume = {35},
 year = {2022}
}

@inproceedings{COMA,
  title = {Counterfactual Multi-Agent Policy Gradients},
  author = {Foerster, Jakob and Farquhar, Gregory and Afouras, Triantafyllos and Nardelli, Nantas and Whiteson, Shimon},
  year = {2018},
  month = apr,
  booktitle = {Proceedings of the AAAI Conference on Artificial Intelligence},
}

@article{claus1998dynamics,
  title={The dynamics of reinforcement learning in cooperative multiagent systems},
  author={Claus, Caroline and Boutilier, Craig},
  journal={AAAI/IAAI},
  volume={1998},
  number={746-752},
  pages={2},
  year={1998}
}

@inproceedings{tuyls2003selection,
  title={A selection-mutation model for q-learning in multi-agent systems},
  author={Tuyls, Karl and Verbeeck, Katja and Lenaerts, Tom},
  booktitle={Proceedings of the second international joint conference on Autonomous agents and multiagent systems},
  pages={693--700},
  year={2003}
}

@inproceedings{wunder2010classes,
  title={Classes of multiagent q-learning dynamics with epsilon-greedy exploration},
  author={Wunder, Michael and Littman, Michael L and Babes, Monica},
  booktitle={Proceedings of the 27th International Conference on Machine Learning (ICML-10)},
  pages={1167--1174},
  year={2010}
}

@inproceedings{zhang2025advancing,
  title={Advancing Audio-Visual Navigation Through Multi-Agent Collaboration in 3D Environments},
  author={Zhang, Hailong and Yu, Yinfeng and Wang, Liejun and Sun, Fuchun and Zheng, Wendong},
  booktitle={International Conference on Neural Information Processing},
  pages={502--516},
  year={2025},
  organization={Springer}
}

@article{hao2025conav,
  title={Conav: Collaborative cross-modal reasoning for embodied navigation},
  author={Hao, Haihong and Han, Mingfei and Li, Changlin and Li, Zhihui and Chang, Xiaojun},
  journal={arXiv preprint arXiv:2505.16663},
  year={2025}
}

@article{liu2025caml,
  title={Caml: Collaborative auxiliary modality learning for multi-agent systems},
  author={Liu, Rui and Shen, Yu and Gao, Peng and Tokekar, Pratap and Lin, Ming},
  journal={arXiv preprint arXiv:2502.17821},
  year={2025}
}

@article{huang2021decentralized,
  title={Decentralized autonomous navigation of a UAV network for road traffic monitoring},
  author={Huang, Hailong and Savkin, Andrey V and Huang, Chao},
  journal={IEEE Transactions on Aerospace and Electronic Systems},
  volume={57},
  number={4},
  pages={2558--2564},
  year={2021},
  publisher={IEEE}
}

@article{qin2021fully,
  title={Fully decentralized cooperative navigation for spacecraft constellations},
  author={Qin, Tong and Macdonald, Malcolm and Qiao, Dong},
  journal={IEEE Transactions on Aerospace and Electronic Systems},
  volume={57},
  number={4},
  pages={2383--2394},
  year={2021},
  publisher={IEEE}
}

@article{azzam2023swarm,
  title={Swarm cooperative navigation using centralized training and decentralized execution},
  author={Azzam, Rana and Boiko, Igor and Zweiri, Yahya},
  journal={Drones},
  volume={7},
  number={3},
  pages={193},
  year={2023},
  publisher={MDPI}
}

@inproceedings{wang2024multi,
  title={Multi-robot cooperative socially-aware navigation using multi-agent reinforcement learning},
  author={Wang, Weizheng and Mao, Le and Wang, Ruiqi and Min, Byung-Cheol},
  booktitle={2024 IEEE International Conference on Robotics and Automation (ICRA)},
  pages={12353--12360},
  year={2024},
  organization={IEEE}
}

@article{Matterport3D,
  title={Matterport3D: Learning from RGB-D Data in Indoor Environments},
  author={Chang, Angel and Dai, Angela and Funkhouser, Thomas and Halber, Maciej and Niessner, Matthias and Savva, Manolis and Song, Shuran and Zeng, Andy and Zhang, Yinda},
  journal={International Conference on 3D Vision (3DV)},
  year={2017}
}

@inproceedings{szot2021habitat,
  title     =     {Habitat 2.0: Training Home Assistants to Rearrange their Habitat},
  author    =     {Andrew Szot and Alex Clegg and Eric Undersander and Erik Wijmans and Yili Zhao and John Turner and Noah Maestre and Mustafa Mukadam and Devendra Chaplot and Oleksandr Maksymets and Aaron Gokaslan and Vladimir Vondrus and Sameer Dharur and Franziska Meier and Wojciech Galuba and Angel Chang and Zsolt Kira and Vladlen Koltun and Jitendra Malik and Manolis Savva and Dhruv Batra},
  booktitle =     {Advances in Neural Information Processing Systems (NeurIPS)},
  year      =     {2021}
}

@inproceedings{habitat19iccv,
  title     =     {Habitat: {A} {P}latform for {E}mbodied {AI} {R}esearch},
  author    =     {{Manolis Savva*} and {Abhishek Kadian*} and {Oleksandr Maksymets*} and Yili Zhao and Erik Wijmans and Bhavana Jain and Julian Straub and Jia Liu and Vladlen Koltun and Jitendra Malik and Devi Parikh and Dhruv Batra},
  booktitle =     {Proceedings of the IEEE/CVF International Conference on Computer Vision (ICCV)},
  year      =     {2019}
}

@article{shridhar2020alfworld,
  title={Alfworld: Aligning text and embodied environments for interactive learning},
  author={Shridhar, Mohit and Yuan, Xingdi and C{\^o}t{\'e}, Marc-Alexandre and Bisk, Yonatan and Trischler, Adam and Hausknecht, Matthew},
  journal={arXiv preprint arXiv:2010.03768},
  year={2020}
}

@article{duan2022survey,
  title={A survey of embodied ai: From simulators to research tasks},
  author={Duan, Jiafei and Yu, Samson and Tan, Hui Li and Zhu, Hongyuan and Tan, Cheston},
  journal={IEEE Transactions on Emerging Topics in Computational Intelligence},
  volume={6},
  number={2},
  pages={230--244},
  year={2022},
  publisher={IEEE}
}

@article{kolve2017ai2,
  title={Ai2-thor: An interactive 3d environment for visual ai},
  author={Kolve, Eric and Mottaghi, Roozbeh and Han, Winson and VanderBilt, Eli and Weihs, Luca and Herrasti, Alvaro and Deitke, Matt and Ehsani, Kiana and Gordon, Daniel and Zhu, Yuke and others},
  journal={arXiv preprint arXiv:1712.05474},
  year={2017}
}

@inproceedings{shridhar2020alfred,
  title={Alfred: A benchmark for interpreting grounded instructions for everyday tasks},
  author={Shridhar, Mohit and Thomason, Jesse and Gordon, Daniel and Bisk, Yonatan and Han, Winson and Mottaghi, Roozbeh and Zettlemoyer, Luke and Fox, Dieter},
  booktitle={Proceedings of the IEEE/CVF conference on computer vision and pattern recognition},
  pages={10740--10749},
  year={2020}
}

@inproceedings{chen2017multi,
  title={Multi-view 3d object detection network for autonomous driving},
  author={Chen, Xiaozhi and Ma, Huimin and Wan, Ji and Li, Bo and Xia, Tian},
  booktitle={Proceedings of the IEEE conference on Computer Vision and Pattern Recognition},
  pages={1907--1915},
  year={2017}
}

@article{brohan2022rt,
  title={Rt-1: Robotics transformer for real-world control at scale},
  author={Brohan, Anthony and Brown, Noah and Carbajal, Justice and Chebotar, Yevgen and Dabis, Joseph and Finn, Chelsea and Gopalakrishnan, Keerthana and Hausman, Karol and Herzog, Alex and Hsu, Jasmine and others},
  journal={arXiv preprint arXiv:2212.06817},
  year={2022}
}

@inproceedings{ku2018joint,
  title={Joint 3d proposal generation and object detection from view aggregation},
  author={Ku, Jason and Mozifian, Melissa and Lee, Jungwook and Harakeh, Ali and Waslander, Steven L},
  booktitle={2018 IEEE/RSJ international conference on intelligent robots and systems (IROS)},
  pages={1--8},
  year={2018},
  organization={IEEE}
}

@inproceedings{qi2018frustum,
  title={Frustum pointnets for 3d object detection from rgb-d data},
  author={Qi, Charles R and Liu, Wei and Wu, Chenxia and Su, Hao and Guibas, Leonidas J},
  booktitle={Proceedings of the IEEE conference on computer vision and pattern recognition},
  pages={918--927},
  year={2018}
}

@inproceedings{shridhar2022cliport,
  title={Cliport: What and where pathways for robotic manipulation},
  author={Shridhar, Mohit and Manuelli, Lucas and Fox, Dieter},
  booktitle={Conference on robot learning},
  pages={894--906},
  year={2022},
  organization={PMLR}
}

@article{driess2023palm,
  title={Palm-e: An embodied multimodal language model},
  author={Driess, Danny and Xia, Fei and Sajjadi, Mehdi SM and Lynch, Corey and Chowdhery, Aakanksha and Ichter, Brian and Wahid, Ayzaan and Tompson, Jonathan and Vuong, Quan and Yu, Tianhe and others},
  journal={arXiv preprint arXiv:2303.03378},
  year={2023}
}

@inproceedings{das2017visual,
  title={Visual dialog},
  author={Das, Abhishek and Kottur, Satwik and Gupta, Khushi and Singh, Avi and Yadav, Deshraj and Moura, Jos{\'e} MF and Parikh, Devi and Batra, Dhruv},
  booktitle={Proceedings of the IEEE conference on computer vision and pattern recognition},
  pages={326--335},
  year={2017}
}

@inproceedings{jia2021scaling,
  title={Scaling up visual and vision-language representation learning with noisy text supervision},
  author={Jia, Chao and Yang, Yinfei and Xia, Ye and Chen, Yi-Ting and Parekh, Zarana and Pham, Hieu and Le, Quoc and Sung, Yun-Hsuan and Li, Zhen and Duerig, Tom},
  booktitle={International conference on machine learning},
  pages={4904--4916},
  year={2021},
  organization={PMLR}
}

@inproceedings{tsai2019multimodal,
  title={Multimodal transformer for unaligned multimodal language sequences},
  author={Tsai, Yao-Hung Hubert and Bai, Shaojie and Liang, Paul Pu and Kolter, J Zico and Morency, Louis-Philippe and Salakhutdinov, Ruslan},
  booktitle={Proceedings of the 57th annual meeting of the association for computational linguistics},
  pages={6558--6569},
  year={2019}
}

@inproceedings{li2023blip,
  title={Blip-2: Bootstrapping language-image pre-training with frozen image encoders and large language models},
  author={Li, Junnan and Li, Dongxu and Savarese, Silvio and Hoi, Steven},
  booktitle={International conference on machine learning},
  pages={19730--19742},
  year={2023},
  organization={PMLR}
}

@inproceedings{peng2022balanced,
  title={Balanced multimodal learning via on-the-fly gradient modulation},
  author={Peng, Xiaokang and Wei, Yake and Deng, Andong and Wang, Dong and Hu, Di},
  booktitle={Proceedings of the IEEE/CVF conference on computer vision and pattern recognition},
  pages={8238--8247},
  year={2022}
}

@inproceedings{kim2021vilt,
  title={Vilt: Vision-and-language transformer without convolution or region supervision},
  author={Kim, Wonjae and Son, Bokyung and Kim, Ildoo},
  booktitle={International conference on machine learning},
  pages={5583--5594},
  year={2021},
  organization={PMLR}
}

@article{yao2024minicpm,
  title={Minicpm-v: A gpt-4v level mllm on your phone},
  author={Yao, Yuan and Yu, Tianyu and Zhang, Ao and Wang, Chongyi and Cui, Junbo and Zhu, Hongji and Cai, Tianchi and Li, Haoyu and Zhao, Weilin and He, Zhihui and others},
  journal={arXiv preprint arXiv:2408.01800},
  year={2024}
}

@article{hu2023semantic,
  title={Semantic collaborative learning for cross-modal moment localization},
  author={Hu, Yupeng and Wang, Kun and Liu, Meng and Tang, Haoyu and Nie, Liqiang},
  journal={ACM Transactions on Information Systems},
  volume={42},
  number={2},
  pages={1--26},
  year={2023},
  publisher={ACM New York, NY, USA}
}

@inproceedings{velagapudi2010decentralized,
  title={Decentralized prioritized planning in large multirobot teams},
  author={Velagapudi, Prasanna and Sycara, Katia and Scerri, Paul},
  booktitle={2010 IEEE/RSJ International Conference on Intelligent Robots and Systems},
  pages={4603--4609},
  year={2010},
  organization={IEEE}
}

@inproceedings{luna2011efficient,
  title={Efficient and complete centralized multi-robot path planning},
  author={Luna, Ryan and Bekris, Kostas E},
  booktitle={2011 IEEE/RSJ International Conference on Intelligent Robots and Systems},
  pages={3268--3275},
  year={2011},
  organization={IEEE}
}

@inproceedings{iqbal2019actor,
  title={Actor-attention-critic for multi-agent reinforcement learning},
  author={Iqbal, Shariq and Sha, Fei},
  booktitle={International conference on machine learning},
  pages={2961--2970},
  year={2019},
  organization={PMLR}
}

@article{oliehoek2008optimal,
  title={Optimal and approximate Q-value functions for decentralized POMDPs},
  author={Oliehoek, Frans A and Spaan, Matthijs TJ and Vlassis, Nikos},
  journal={Journal of Artificial Intelligence Research},
  volume={32},
  pages={289--353},
  year={2008}
}

@article{xiao2022asynchronous,
  title={Asynchronous actor-critic for multi-agent reinforcement learning},
  author={Xiao, Yuchen and Tan, Weihao and Amato, Christopher},
  journal={Advances in Neural Information Processing Systems},
  volume={35},
  pages={4385--4400},
  year={2022}
}

@inproceedings{xiao2020learning,
  title={Learning multi-robot decentralized macro-action-based policies via a centralized q-net},
  author={Xiao, Yuchen and Hoffman, Joshua and Xia, Tian and Amato, Christopher},
  booktitle={2020 IEEE International conference on robotics and automation (ICRA)},
  pages={10695--10701},
  year={2020},
  organization={IEEE}
}

@article{xue2023multi,
  title={Multi-agent deep reinforcement learning for UAVs navigation in unknown complex environment},
  author={Xue, Yuntao and Chen, Weisheng},
  journal={IEEE Transactions on Intelligent Vehicles},
  volume={9},
  number={1},
  pages={2290--2303},
  year={2023},
  publisher={IEEE}
}

@inproceedings{he2016deep,
  title={Deep residual learning for image recognition},
  author={He, Kaiming and Zhang, Xiangyu and Ren, Shaoqing and Sun, Jian},
  booktitle={Proceedings of the IEEE conference on computer vision and pattern recognition},
  pages={770--778},
  year={2016}
}

@article{mcfee2015librosa,
  title={librosa: Audio and music signal analysis in python.},
  author={McFee, Brian and Raffel, Colin and Liang, Dawen and Ellis, Daniel PW and McVicar, Matt and Battenberg, Eric and Nieto, Oriol and others},
  journal={SciPy},
  volume={2015},
  number={18-24},
  pages={7},
  year={2015}
}

@inproceedings{wang2026conavbench,
  title={CoNavBench: Collaborative Long-Horizon Vision-Language Navigation Benchmark},
  author={Wang, Tianhang and Li, Xinhai and Lu, Fan and Gong, Tianshi and Dong, Jiankun and Xue, Weiyi and Qu, Sanqing and Bai, Chenjia and Chen, Guang},
  booktitle={The Fourteenth International Conference on Learning Representations},
  year={2026}
}

@article{yuksel2026gram,
  title={GRAM: Spatial general-purpose audio representations for real-world environments},
  author={Yuksel, Goksenin and van Gerven, Marcel and van der Heijden, Kiki},
  journal={arXiv preprint arXiv:2602.03307},
  year={2026}
}

@inproceedings{cheng2018mobile,
  title={Mobile robot navigation based on lidar},
  author={Cheng, Yi and Wang, Gong Ye},
  booktitle={2018 Chinese control and decision conference (CCDC)},
  pages={1243--1246},
  year={2018},
  organization={IEEE}
}

@article{yao2023radar,
  title={Radar-camera fusion for object detection and semantic segmentation in autonomous driving: A comprehensive review},
  author={Yao, Shanliang and Guan, Runwei and Huang, Xiaoyu and Li, Zhuoxiao and Sha, Xiangyu and Yue, Yong and Lim, Eng Gee and Seo, Hyungjoon and Man, Ka Lok and Zhu, Xiaohui and others},
  journal={IEEE Transactions on Intelligent Vehicles},
  volume={9},
  number={1},
  pages={2094--2128},
  year={2023},
  publisher={IEEE}
}

@article{wang2021collaborative,
  title={Collaborative visual navigation},
  author={Wang, Haiyang and Wang, Wenguan and Zhu, Xizhou and Dai, Jifeng and Wang, Liwei},
  journal={arXiv preprint arXiv:2107.01151},
  year={2021}
}

@article{parker2002alliance,
  title={ALLIANCE: An architecture for fault tolerant multirobot cooperation},
  author={Parker, Lynne E},
  journal={IEEE transactions on robotics and automation},
  volume={14},
  number={2},
  pages={220--240},
  year={2002},
  publisher={IEEE}
}

@inproceedings{simmons2000coordination,
  title={Coordination for multi-robot exploration and mapping},
  author={Simmons, Reid and Apfelbaum, David and Burgard, Wolfram and Fox, Dieter and Moors, Mark and Thrun, Sebastian and Younes, H{\aa}kan},
  booktitle={Aaai/Iaai},
  pages={852--858},
  year={2000}
}

@article{gu2021attention,
  title={Attention-based fault-tolerant approach for multi-agent reinforcement learning systems},
  author={Gu, Shanzhi and Geng, Mingyang and Lan, Long},
  journal={Entropy},
  volume={23},
  number={9},
  pages={1133},
  year={2021},
  publisher={MDPI}
}

@article{burgard2005coordinated,
  title={Coordinated multi-robot exploration},
  author={Burgard, Wolfram and Moors, Mark and Stachniss, Cyrill and Schneider, Frank E},
  journal={IEEE Transactions on robotics},
  volume={21},
  number={3},
  pages={376--386},
  year={2005},
  publisher={IEEE}
}

\newpage
\appendix

\section{Proofs} \label{app:proof}

\begin{proposition}
Assume that the belief predictor is accurate, such that the predicted belief
$\,\mathbf{b}$ is a correct history-induced latent representation of
$\,\mathbf{h}$. Then the state-history-belief value
$\,V^{\boldsymbol{\pi}}(\mathbf{h},\mathbf{b},s)$ provides an unbiased
estimate of $\,V^{\boldsymbol{\pi}}(\mathbf{h})$.
\end{proposition}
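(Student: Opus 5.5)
The argument follows the state--history critic result of Lyu et al.~\cite{lyu2023centralized}: the belief is a deterministic function of the history, so it can be absorbed into the history, and the state-augmented value then averages back to the history value under the conditional state distribution.

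\textbf{Step 1: absorb the belief into the history.} By the accuracy assumption, there is a deterministic map $f$ with $\mathbf{b}=f(\mathbf{h})$. Concretely, each $b_{i,t}$ is computed from agent $i$'s spectrograms and poses through the location and category heads and the exponential moving average, all of which are contained in $h_{i,t}$. Conditioning on $(\mathbf{h},\mathbf{b})$ is therefore the same as conditioning on $\mathbf{h}$, since the $\sigma$-algebras coincide. This gives
\begin{equation}
V^{\boldsymbol{\pi}}(\mathbf{h},\mathbf{b},s)=V^{\boldsymbol{\pi}}(\mathbf{h},s),
\qquad
\Pr(s\mid \mathbf{h},\mathbf{b})=\Pr(s\mid \mathbf{h}).
\end{equation}
The policies $\pi_i(\cdot\mid h_i,b_i)$ are likewise history-dependent policies.

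\textbf{Step 2: define the state--history value recursively.} Set
\begin{equation}
V^{\boldsymbol{\pi}}(\mathbf{h},s)=\sum_{\mathbf{a}}\boldsymbol{\pi}(\mathbf{a}\mid\mathbf{h})\Big[R(s,\mathbf{a})+\gamma\sum_{s',\mathbf{o}'}T(s'\mid s,\mathbf{a})\,\Pr(\mathbf{o}'\mid s',\mathbf{a})\,V^{\boldsymbol{\pi}}(\mathbf{h}\mathbf{a}\mathbf{o}',s')\Big].
\end{equation}
The claim to prove is that $\mathbb{E}_{s\sim\Pr(\cdot\mid\mathbf{h})}\big[V^{\boldsymbol{\pi}}(\mathbf{h},s)\big]=V^{\boldsymbol{\pi}}(\mathbf{h})$.

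\textbf{Step 3: backward induction on the remaining horizon.} At the last step, the history value is the expected reward $\sum_{s}\Pr(s\mid\mathbf{h})\sum_{\mathbf{a}}\boldsymbol{\pi}(\mathbf{a}\mid\mathbf{h})R(s,\mathbf{a})$, so the base case holds by definition. For the inductive step, take the expectation over $s$ and use Bayes' rule:
\begin{equation}
\Pr(s\mid\mathbf{h})\,T(s'\mid s,\mathbf{a})\,\Pr(\mathbf{o}'\mid s',\mathbf{a})=\Pr(\mathbf{o}'\mid\mathbf{h},\mathbf{a})\,\Pr(s,s'\mid\mathbf{h}\mathbf{a}\mathbf{o}').
\end{equation}
Marginalizing out $s$ turns the continuation term into
\begin{equation}
\gamma\sum_{\mathbf{o}'}\Pr(\mathbf{o}'\mid\mathbf{h},\mathbf{a})\;\mathbb{E}_{s'\mid\mathbf{h}'}\big[V^{\boldsymbol{\pi}}(\mathbf{h}',s')\big],\qquad \mathbf{h}'=\mathbf{h}\mathbf{a}\mathbf{o}'.
\end{equation}
By the induction hypothesis, the inner expectation equals $V^{\boldsymbol{\pi}}(\mathbf{h}')$, which recovers the history Bellman equation for $V^{\boldsymbol{\pi}}(\mathbf{h})$. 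Combining with Step~1 gives
\begin{equation}
\mathbb{E}_{s\mid\mathbf{h}}\big[V^{\boldsymbol{\pi}}(\mathbf{h},\mathbf{b},s)\big]=V^{\boldsymbol{\pi}}(\mathbf{h}).
\end{equation}
Hence the critic target, evaluated on states sampled along on-policy trajectories, is unbiased for the history value. It may still differ in variance.

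\textbf{Main obstacle.} The key point is that the state is sampled from $\Pr(s\mid\mathbf{h})$ induced by the actual trajectory distribution. Both the Bayes factorization and the recovery of the history Bellman equation require this. It holds because training samples come from rollouts of $\boldsymbol{\pi}$.

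A second point is that Step~1 relies on the belief being a deterministic function of history. If the predictor were stochastic or used privileged information, $\Pr(s\mid\mathbf{h},\mathbf{b})$ would differ from $\Pr(s\mid\mathbf{h})$ and bias could enter. The accuracy hypothesis in the proposition is used exactly to exclude this case.
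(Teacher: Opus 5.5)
Your proof is correct. Your Step~1 is exactly the paper's first move: since $\mathbf{b}$ is determined by $\mathbf{h}$, $p(s\mid\mathbf{h},\mathbf{b})=p(s\mid\mathbf{h})$, so the belief-augmented value collapses to the state--history value.

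Where you differ is the second half. The paper does not prove the state--history identity itself. It imports Lemma~2 of Lyu et al.\ at the action-value level, $Q^{\boldsymbol{\pi}}(\mathbf{h},\mathbf{a})=\mathbb{E}_{s\mid\mathbf{h}}[Q^{\boldsymbol{\pi}}(\mathbf{h},s,\mathbf{a})]$. It then gets the value statement in one line by averaging over $\mathbf{a}\sim\boldsymbol{\pi}(\cdot\mid\mathbf{h})$ and exchanging the two expectations, which is legitimate because both distributions depend on $\mathbf{h}$ alone.

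You instead re-derive that lemma directly for $V$ by backward induction on the remaining horizon. You use the Bayes factorization of $(s,s',\mathbf{o}')$ given $(\mathbf{h},\mathbf{a})$ to show that $\mathbb{E}_{s\mid\mathbf{h}}[V^{\boldsymbol{\pi}}(\mathbf{h},s)]$ satisfies the history Bellman equation.

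Your route is self-contained and exposes the facts the citation hides. First, the states must come from the on-policy posterior, which you flag. Second, $\Pr(s\mid\mathbf{h},\mathbf{a})=\Pr(s\mid\mathbf{h})$ must hold because the decentralized policies, and the beliefs they condition on, depend only on histories. This second fact is used silently in your Bayes identity, since its left-hand side writes $\Pr(s\mid\mathbf{h})$ where $\Pr(s\mid\mathbf{h},\mathbf{a})$ is what actually appears, so it is worth stating explicitly.

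The paper's route is shorter and passes through the slightly stronger action-value identity. Yours proves only the value identity, which is all the proposition asserts. In both arguments the ``accuracy'' hypothesis enters only through $\mathbf{b}$ being a function of $\mathbf{h}$. How close the belief is to the privileged training labels is never used.
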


\begin{proof}
By Lemma 2 of \cite{lyu2023centralized}, the state-augmented action-value
function is unbiased with respect to the history-conditioned value, i.e.,
\[
Q^{\boldsymbol{\pi}}(\mathbf{h},\mathbf{a})
=
\mathbb{E}_{s\mid \mathbf{h}}
\left[
Q^{\boldsymbol{\pi}}(\mathbf{h},s,\mathbf{a})
\right].
\]
Since $\mathbf{b}$ is inferred from $\mathbf{h}$ and is assumed to be correct,
conditioning on $(\mathbf{h},\mathbf{b})$ does not introduce additional
information beyond the history. Therefore,
\[
p(s\mid \mathbf{h},\mathbf{b}) = p(s\mid \mathbf{h}),
\]
and the augmented critic satisfies
\[
Q^{\boldsymbol{\pi}}(\mathbf{h},\mathbf{b},s,\mathbf{a})
=
Q^{\boldsymbol{\pi}}(\mathbf{h},s,\mathbf{a}).
\]
Then,
\[
\begin{aligned}
V^{\boldsymbol{\pi}}(\mathbf{h})
&=
\mathbb{E}_{\mathbf{a}\sim \boldsymbol{\pi}(\cdot\mid \mathbf{h})}
\left[
Q^{\boldsymbol{\pi}}(\mathbf{h},\mathbf{a})
\right] \\
&=
\mathbb{E}_{\mathbf{a}\sim \boldsymbol{\pi}(\cdot\mid \mathbf{h})}
\left[
\mathbb{E}_{s\mid \mathbf{h}}
\left[
Q^{\boldsymbol{\pi}}(\mathbf{h},s,\mathbf{a})
\right]
\right] \\
&=
\mathbb{E}_{s\mid \mathbf{h}}
\left[
\mathbb{E}_{\mathbf{a}\sim \boldsymbol{\pi}(\cdot\mid \mathbf{h})}
\left[
Q^{\boldsymbol{\pi}}(\mathbf{h},\mathbf{b},s,\mathbf{a})
\right]
\right] \\
&=
\mathbb{E}_{s\mid \mathbf{h}}
\left[
V^{\boldsymbol{\pi}}(\mathbf{h},\mathbf{b},s)
\right].
\end{aligned}
\]
Thus, when $s$ is sampled from the posterior state distribution
$p(s\mid \mathbf{h})$, the estimator
$V^{\boldsymbol{\pi}}(\mathbf{h},\mathbf{b},s)$ has expectation
$V^{\boldsymbol{\pi}}(\mathbf{h})$. Therefore,
$V^{\boldsymbol{\pi}}(\mathbf{h},\mathbf{b},s)$ is an unbiased estimator of
$V^{\boldsymbol{\pi}}(\mathbf{h})$.
\end{proof}

\section{Collaborative Navigation Benchmark} \label{app:expset}

\subsection{Scenes} 
The bird's-eye-view of the scenes we used are shown in Figure~\ref{fig:illustraion_scenes}. Table~\ref{tab:scene_stats} shows the \texttt{Matterport3D} scene IDs, the number of navigable points, and the total navigable area of each scene. 

\begin{figure}[ht]
    \centering
    \begin{subfigure}[b]{0.24\textwidth}
        \centering
        \includegraphics[width=\textwidth]{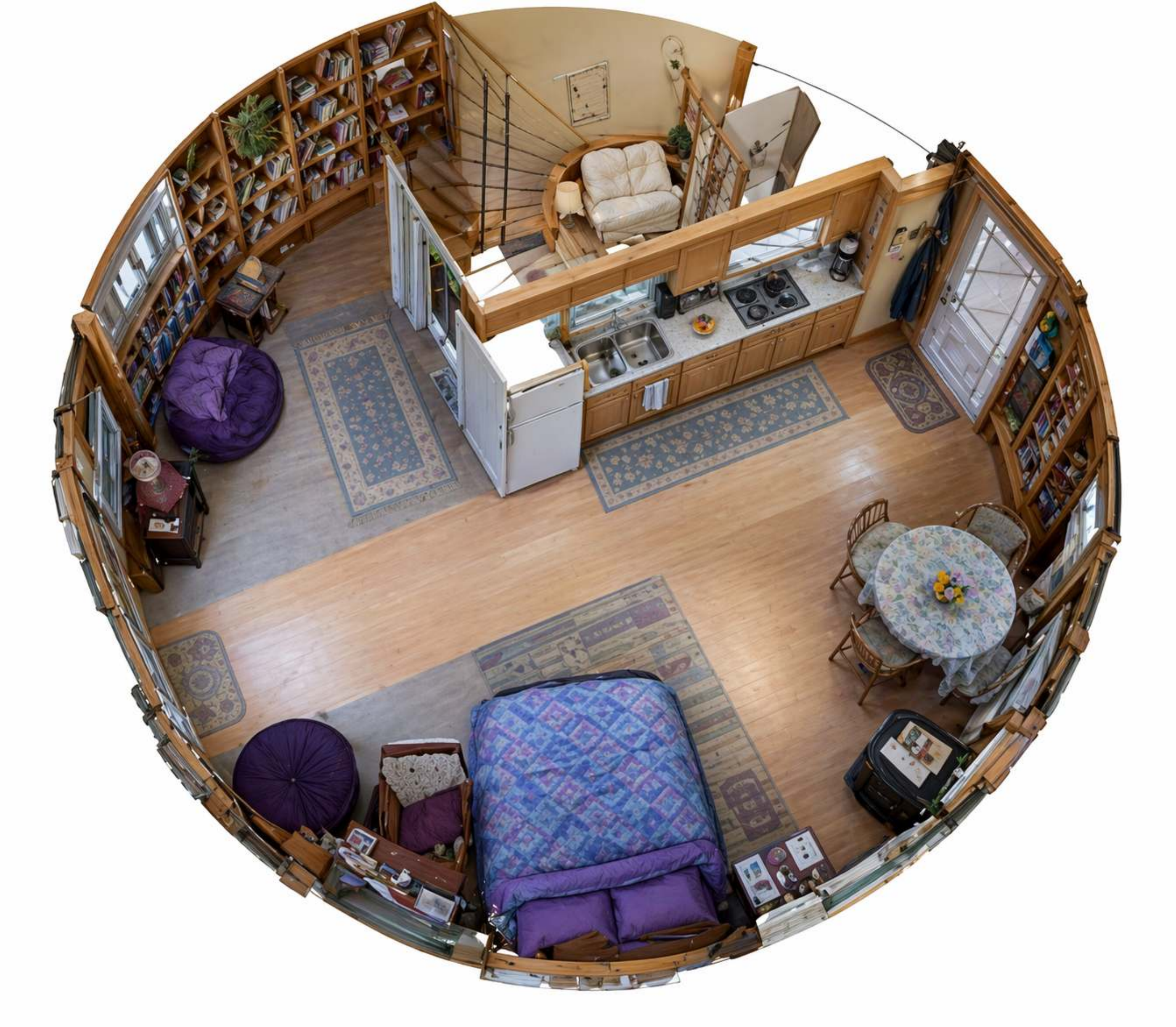}
        \caption{\footnotesize \;\texttt{Studio} $\mid$ \texttt{Picture}}
        \label{subfig:studio-fig}
    \end{subfigure}
    \hfill
    \begin{subfigure}[b]{0.29\textwidth}
        \centering
        \includegraphics[width=\textwidth]{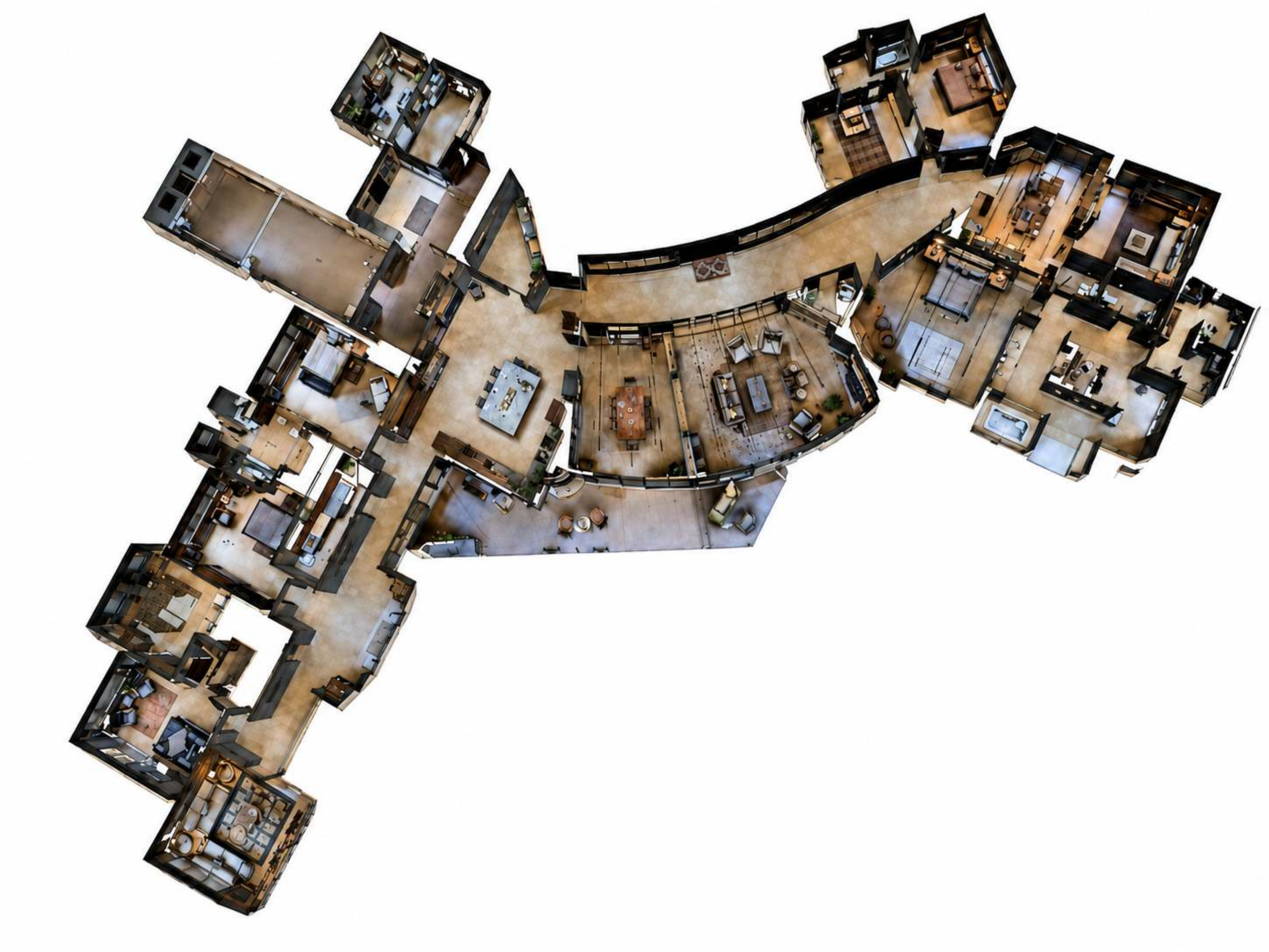}
        \caption{\footnotesize \;\texttt{Corridor} $\mid$ \texttt{Sink}}
        \label{subfig:apt-fig}
    \end{subfigure}
    \hfill
    \begin{subfigure}[b]{0.32\textwidth}
        \centering
        \includegraphics[width=\textwidth]{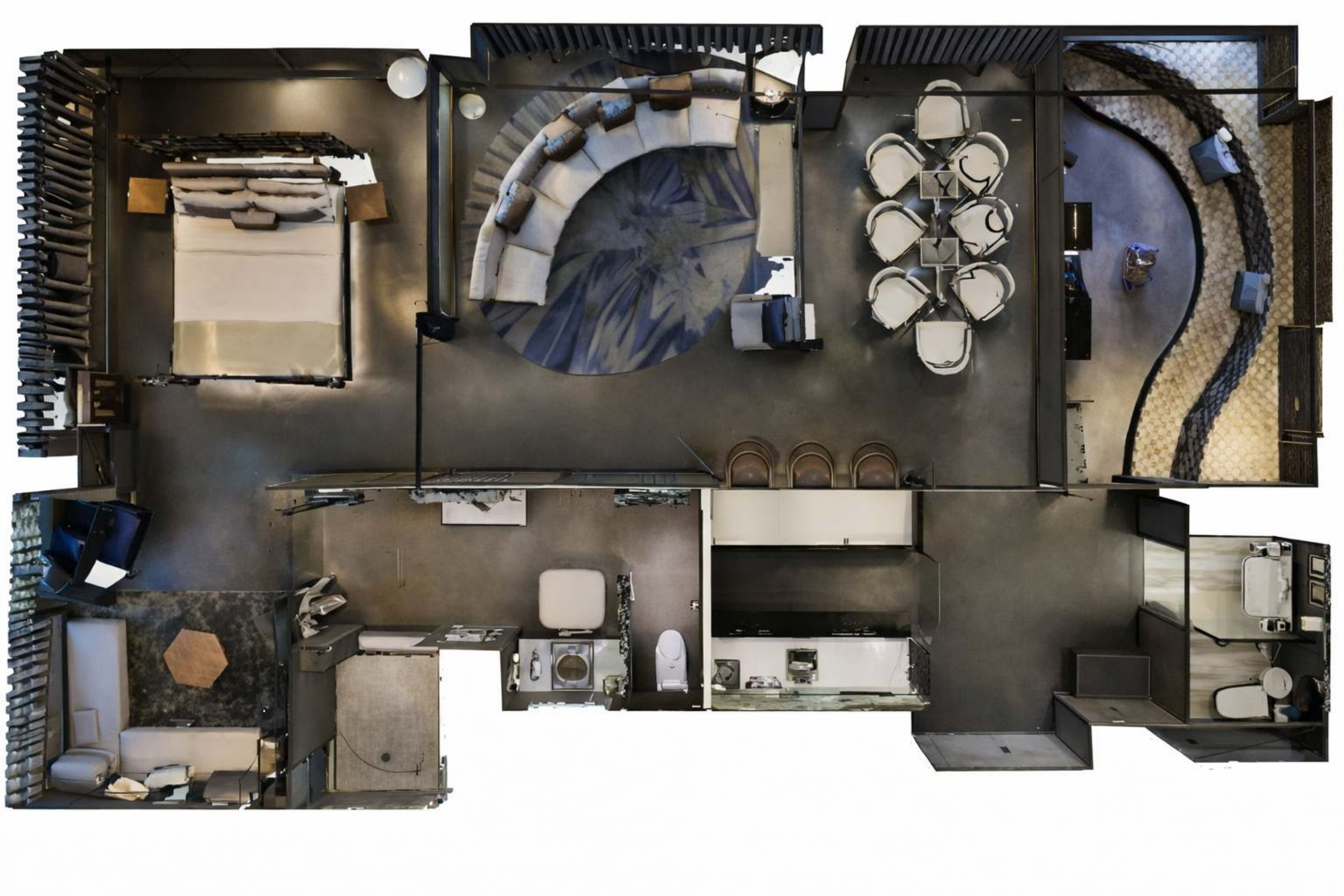}
        \caption{\footnotesize \,\texttt{Apartment} $\mid$ \texttt{Bed \& Counter}}
        \label{subfig:ranch-fig}
    \end{subfigure}

    \vspace{0.3cm}
    \hspace{1.5cm}
    \begin{subfigure}[b]{0.32\textwidth}
        \centering
        \includegraphics[width=\textwidth]{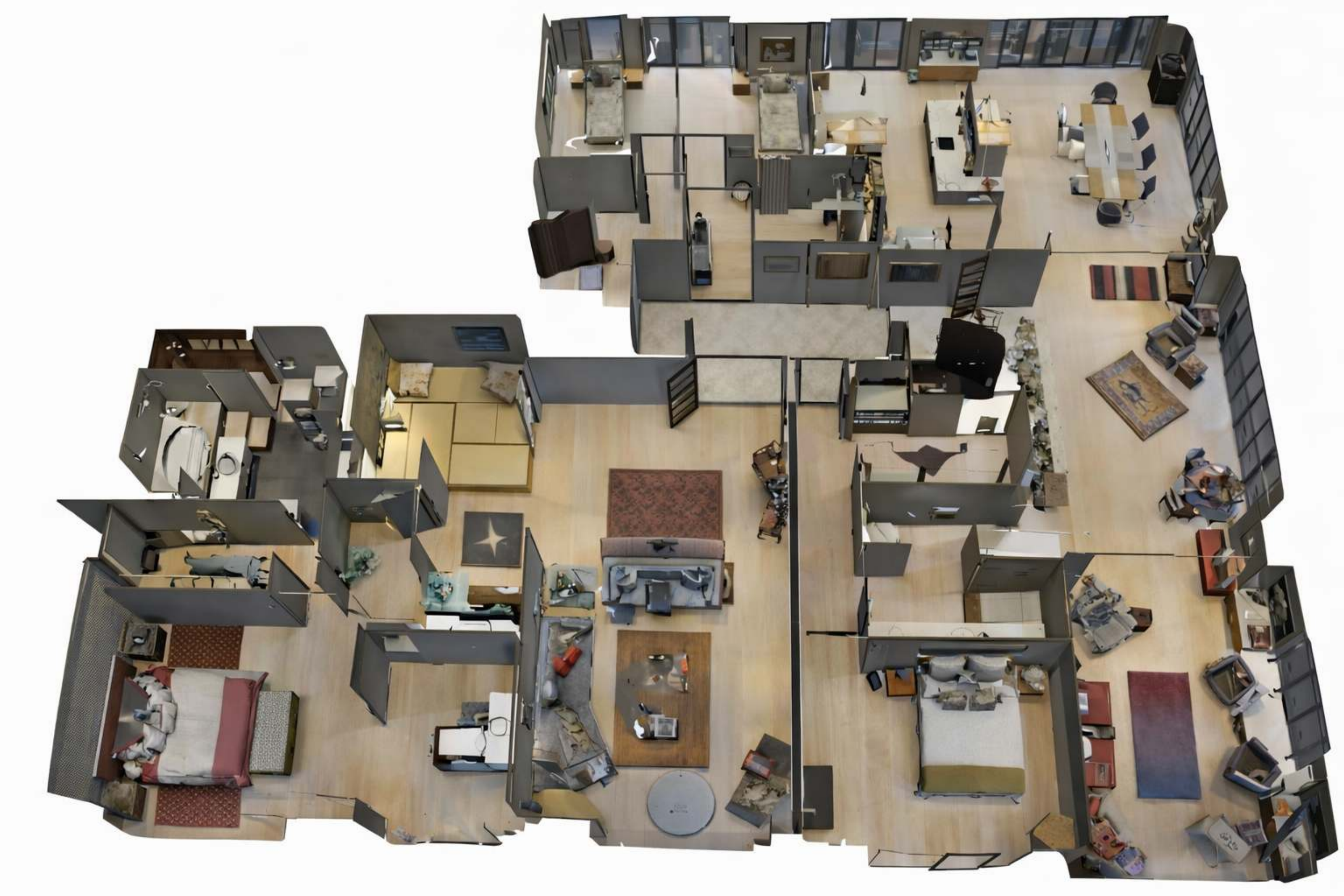}
        \caption{\footnotesize \;\;\texttt{Ranch} $\mid$ \texttt{Picture \& Table}}
        \label{subfig:corridor-fig}
    \end{subfigure}
    \hfill
    \begin{subfigure}[b]{0.34\textwidth}
        \centering
        \includegraphics[width=\textwidth]{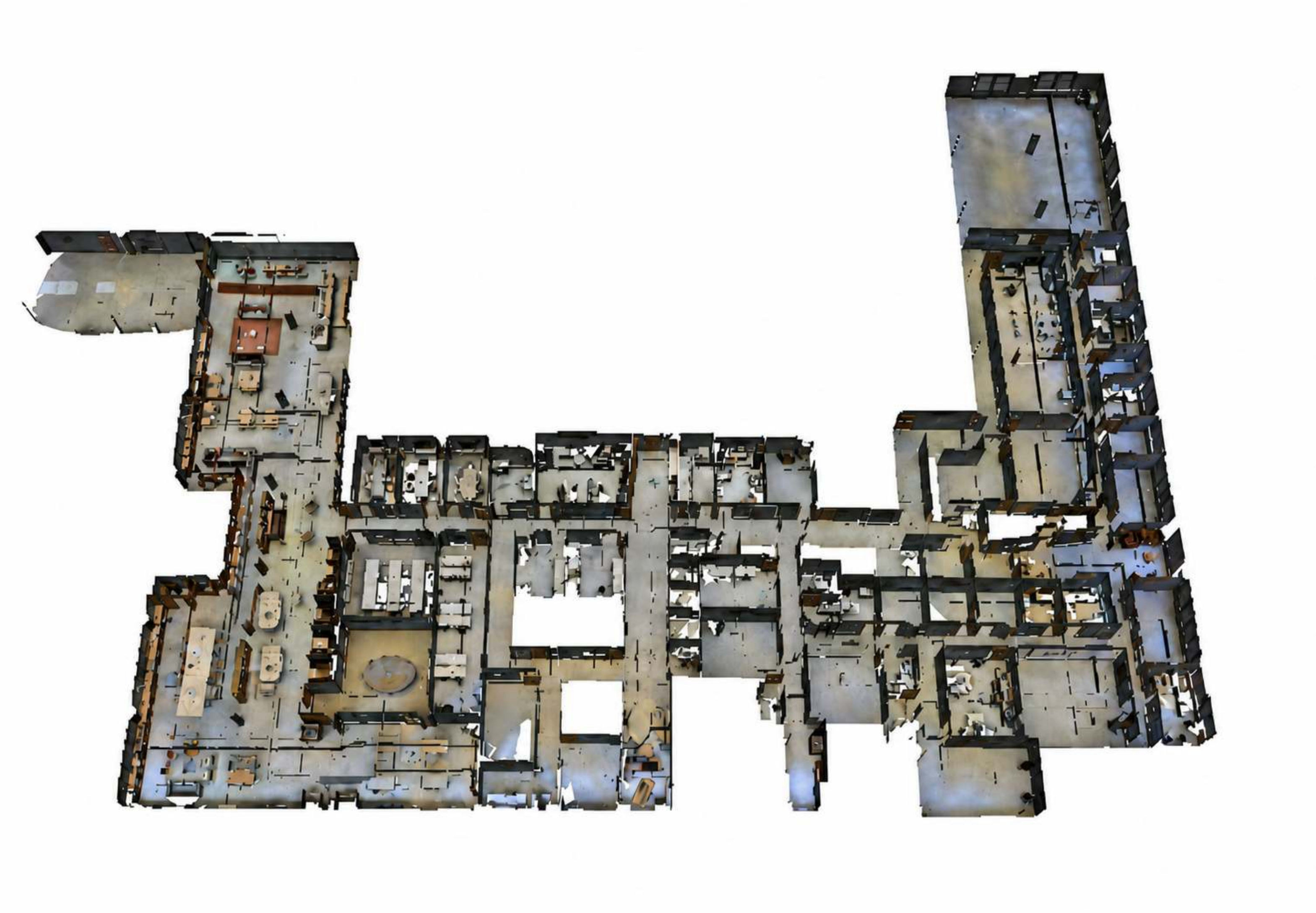}
        \caption{\footnotesize \texttt{Maze} $\mid$  \texttt{Drawer \& Table \& Chair}}
        \label{subfig:church-fig}
    \end{subfigure}
    \hspace{1.5cm}
    \caption{\vspace{2mm} Bird-eye's-views of \texttt{MatterPort3D} scenes.}
    \label{fig:illustraion_scenes}
\end{figure}

\subsection{Dataset Construction Details}
To construct datasets, we use base episodes in \cite{chen2020soundspaces} according to the desired \texttt{Matterport3D} scenes. We chose 1, 2, or 3 targets from different categories from the initial datasets. Episodes are filtered out if the initial distances between agents, or between agents and target objects are below or exceed predefined thresholds. Table~\ref{tab:dataset_root_params} summarizes the dataset construction parameters.

\begin{table}[htb]
\centering
\footnotesize
\captionsetup{font=footnotesize, skip=2pt}
\caption{Statistics of the five scenes used in our collaborative navigation dataset.\vspace{2mm}}
\label{tab:scene_stats}

\renewcommand{\arraystretch}{1.1}
\begin{tabular}{p{2.2cm} p{3.2cm} p{2.5cm} p{2.8cm}}
\toprule
\textbf{Scene} & \textbf{ID in \texttt{Matterport3D}} & \textbf{Navigable points} & \textbf{Navigable area (m$^2$)} \\
\midrule
\texttt{Studio}    & \texttt{GdvgFV5R1Z5} & 20   & 20.49   \\
\texttt{Corridor}  & \texttt{ac26ZMwG7aT} & 619  & 369.33  \\
\texttt{Apartment} & \texttt{17DRP5sb8fy} & 83   & 52.04   \\
\texttt{Ranch}     & \texttt{JeFG25nYj2p} & 193  & 166.22  \\
\texttt{Maze}     & \texttt{B6ByNegPMKs} & 1603 & 1348.31 \\
\bottomrule
\end{tabular}
\end{table}

Some \texttt{Matterport3D} scenes contain disconnected navigable regions. To create diverse but valid episodes, we filter the generated tasks using both minimum- and maximum-distance constraints. The minimum-distance constraint avoids overly easy episodes where agents start too close to the targets, while the maximum-distance constraint removes episodes in which some targets are unreachable. We apply these constraints in \texttt{Corridor} and \texttt{Maze}, where disconnected regions occur more frequently. As a result, although \texttt{Corridor} contains a high number of navigable areas, a horizon of 150 is enough for all  \texttt{Corridor} episodes.

\begin{table}[!htbp]
\centering
\footnotesize
\captionsetup{font=footnotesize, skip=2pt}
\caption{Dataset construction parameters for the five chosen \texttt{MatterPort3D} scenes. A single-object dataset keeps one target per episode, while the multi-object dataset combines multiple targets.\vspace{2mm}}
\label{tab:dataset_root_params}

\renewcommand{\arraystretch}{1.1}
\begin{tabular}{p{1.6cm} p{4.5cm} p{1.5cm} p{1.8cm} p{1.5cm}}
\toprule
\textbf{Scene} & \textbf{Targets} & \textbf{Target dist.} & \textbf{Start-goal dist.} & \textbf{Train eps.} \\
\midrule
\texttt{Studio}
& \texttt{picture}
& -
& $\geq 2.0 \,\mathrm{m}$
& 220 \\

\texttt{Corridor}
& \texttt{sink}
& -
& $2$-$5 \,\mathrm{m}$
& 218 \\

\texttt{Apartment}
& \texttt{bed}, \texttt{counter}
& $\geq 2.0 \,\mathrm{m}$
& $\geq 4.0 \,\mathrm{m}$
& 230 \\

\texttt{Ranch}
& \texttt{picture}, \texttt{table}
& $\geq 2.0 \,\mathrm{m}$
& $\geq 4.0 \,\mathrm{m}$
& 228 \\

\texttt{Maze}
& \texttt{chair}, \texttt{table}, \texttt{chest\_of\_drawers}
& $3$-$10 \,\mathrm{m}$ 
& $3$-$10 \,\mathrm{m}$
& 252 \\
\bottomrule
\end{tabular}
\end{table}

Two agents are initialized at different starting positions. For multi-object datasets, an episode is considered successful if each target is reached within a distance threshold of $1\,\mathrm{m}$. For two-object datasets, agents must start at least $1.5\,\mathrm{m}$ apart. For \texttt{Corridor}, the initial distance lie between $2.0$ and $5.0\,\mathrm{m}$. For \texttt{Maze}, the initial distance lie between $3.0$ and $10.0\,\mathrm{m}$. We split each dataset into training and validation sets with $3{:}1$.

\subsection{Acoustic Simulation} 
We use shared material configurations across all evaluated scenes to simulate the physical acoustics of each space. Each material contains frequency-dependent absorption, scattering, and transmission coefficients that simulate the sound propagation. The configuration here only affects acoustic rendering and does not change navigable points or the navigable area. Table~\ref{tab:acoustic_materials} shows mappings between semantics and materials, with the corresponding coefficient ranges.

\begin{table}[!htbp]
\centering
\footnotesize
\captionsetup{font=scriptsize, skip=2pt}
\caption{Representative acoustic material configurations for audio rendering.\vspace{2mm}}
\label{tab:acoustic_materials}

\renewcommand{\arraystretch}{1.2}
\setlength{\tabcolsep}{3pt}
\begin{tabular}{p{2.3cm} p{3.8cm} p{1.7cm} p{1.7cm} p{1.9cm}}
\toprule
\textbf{Acoustic material} & \textbf{Example semantic labels} & \textbf{Absorption} & \textbf{Scattering} & \textbf{Transmission} \\
\midrule
Acoustic Tile 
& \texttt{ceiling} 
& 0.50-0.70 
& 0.10-0.30 
& 0.002-0.050 \\

Gypsum Board 
& \texttt{wall} 
& 0.04-0.29 
& 0.10-0.15 
& 0.001-0.035 \\

Carpet 
& \texttt{floor}, \texttt{mat} 
& 0.01-0.65 
& 0.10-0.45 
& 0.001-0.008 \\

Glass 
& \texttt{window}, \texttt{mirror}, \texttt{tv\_monitor} 
& 0.05-0.35 
& 0.05-0.05 
& 0.022-0.125 \\

Foliage 
& \texttt{plant}, \texttt{indoor-plant} 
& 0.03-0.31 
& 0.20-0.80 
& 0.30-0.90 \\

Steel 
& \texttt{sink}, \texttt{microwave}, \texttt{railing} 
& 0.02-0.10 
& 0.10-0.10 
& 0.056-0.250 \\

wood, Thick 
& \texttt{chair}, \texttt{table}, \texttt{counter} 
& 0.05-0.19 
& 0.10-0.15 
& 0.001-0.035 \\

Wood Floor 
& \texttt{cabinet}, \texttt{stair} 
& 0.06-0.15 
& 0.10-0.15 
& 0.002-0.071 \\

Curtain 
& \texttt{bed}, \texttt{blanket}, \texttt{cushion}, \texttt{sofa} 
& 0.07-0.75 
& 0.10-0.50 
& 0.045-0.420 \\

Default 
& \texttt{default} 
& 0.10-0.10 
& 0.50-0.50 
& 0.000-0.000 \\
\bottomrule
\end{tabular}
\end{table}

\newpage
We choose the following sounds from \cite{chen2020soundspaces} for constructing our dataset: 
{\setstretch{1.5}
\begin{itemize}
    \item Dragging Chair \, \includegraphics[height=1em,width=6em]{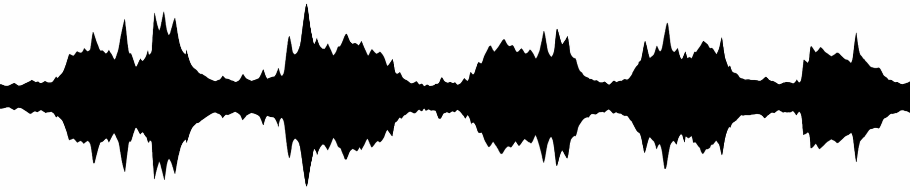}
    \item Table with Silverware Dropping \, \includegraphics[height=1em,width=6em]{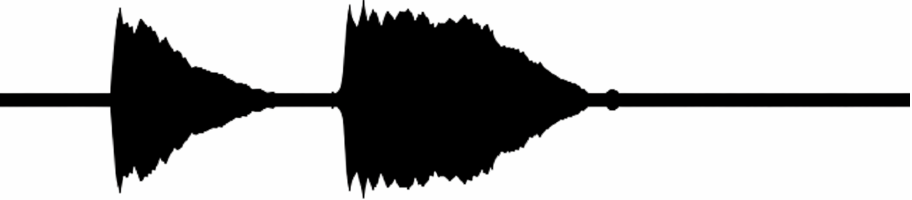}
    \item Picture with Camera Shutter \, \includegraphics[height=1.5em,width=6em]{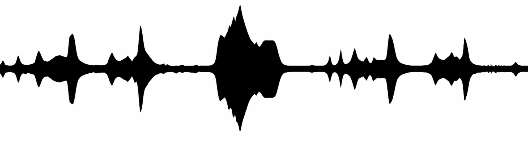}
    \item Sink with Dripping Water \, \includegraphics[height=0.9em,width=6em]{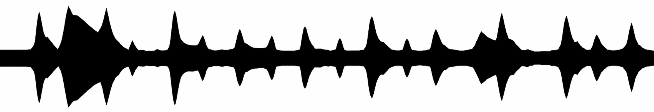}
    \item Counter with Coin Drop\, \includegraphics[height=0.9em,width=6em]{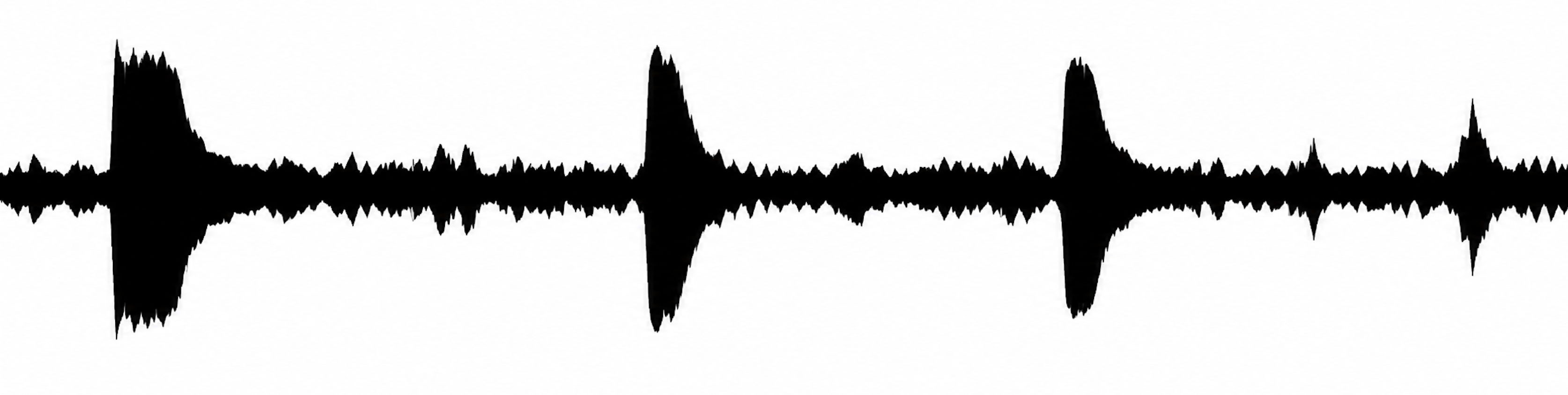}
    \item Pulling Chest of Drawers \, \includegraphics[height=0.9em,width=6em]{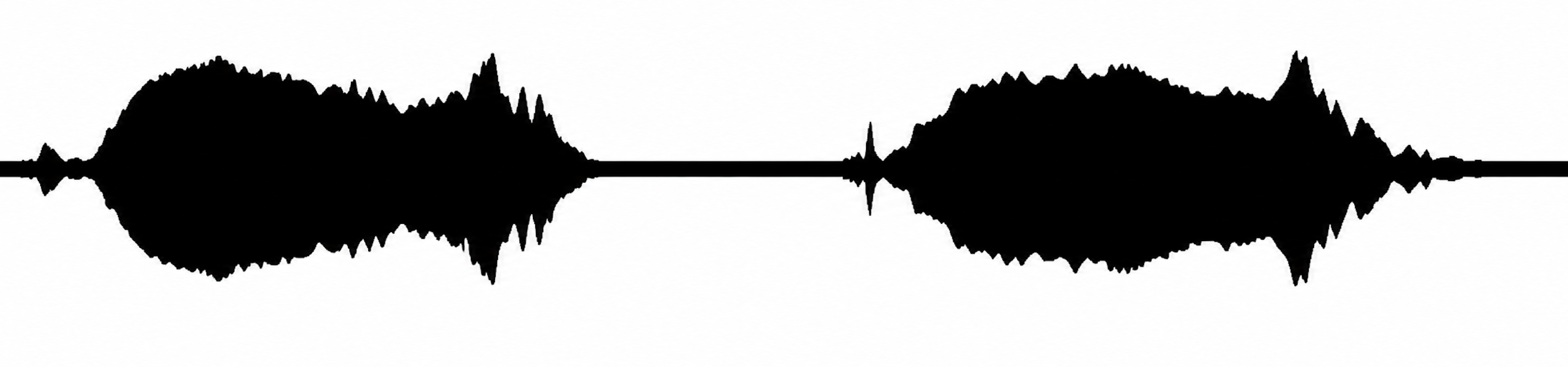}
    \item Creaking Bed \, \includegraphics[height=0.9em,width=6em]{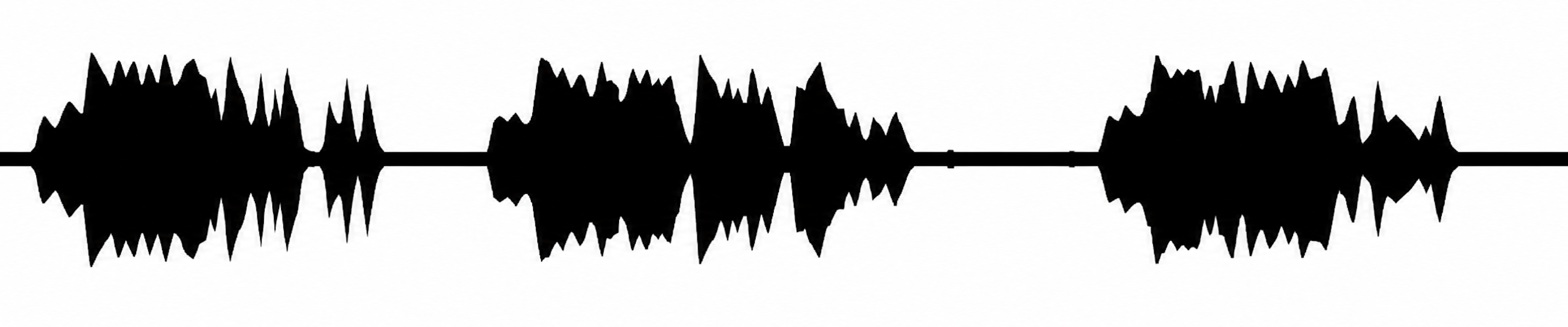}
\end{itemize}
}
Figure~\ref{fig:source_spectrograms} shows the corresponding source spectrograms.

\begin{figure}[!htbp]
    \centering
    \captionsetup{font=footnotesize}
    \subcaptionsetup{font=scriptsize}

    \begin{subfigure}[b]{0.23\textwidth}
        \centering
        \includegraphics[width=\textwidth]{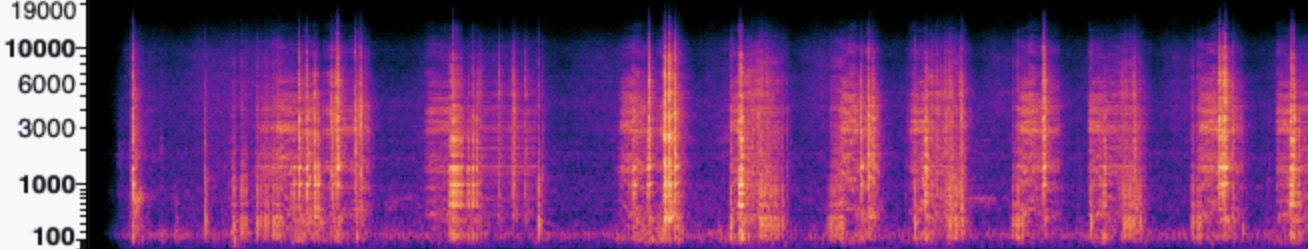}
        \caption{\texttt{Dragging Chair}}
        \label{subfig:spec-chair}
    \end{subfigure}
    \hfill
    \begin{subfigure}[b]{0.23\textwidth}
        \centering
        \includegraphics[width=\textwidth]{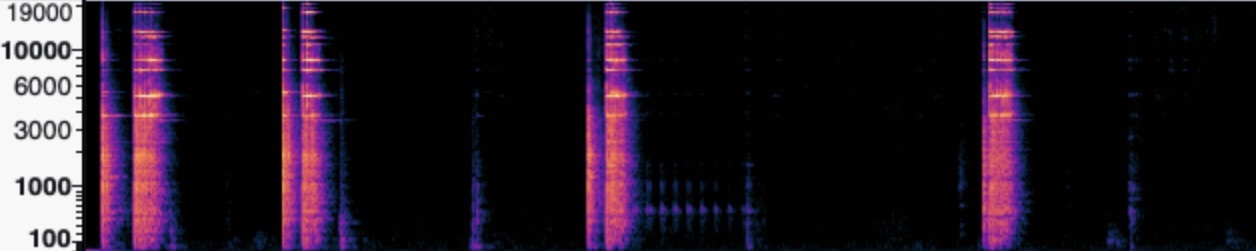}
        \caption{\texttt{Table with Silverware}}
        \label{subfig:spec-table}
    \end{subfigure}
    \hfill
    \begin{subfigure}[b]{0.23\textwidth}
        \centering
        \includegraphics[width=\textwidth]{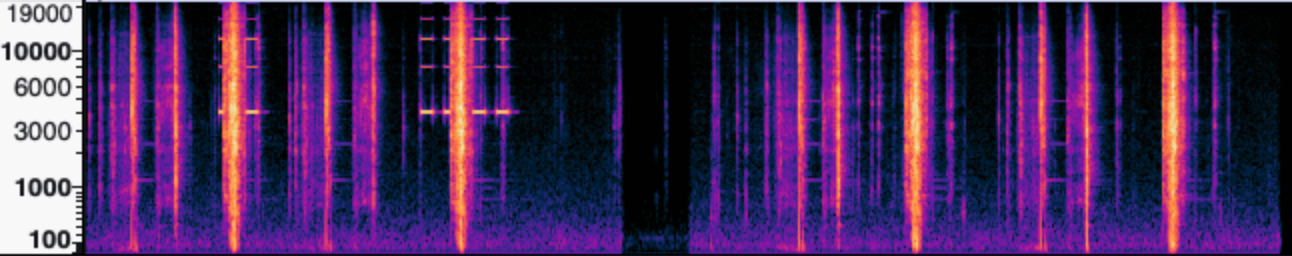}
        \caption{\texttt{Picture Shutter}}
        \label{subfig:spec-picture}
    \end{subfigure}
    \hfill
    \begin{subfigure}[b]{0.23\textwidth}
        \centering
        \includegraphics[width=\textwidth]{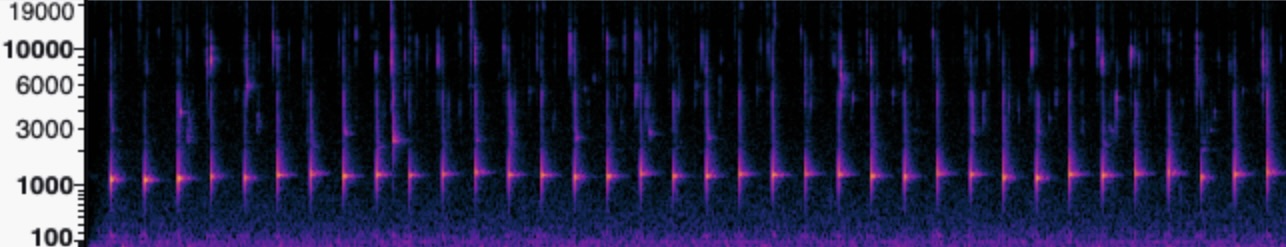}
        \caption{\texttt{Sink Dripping}}
        \label{subfig:spec-sink}
    \end{subfigure}

    \vspace{2mm}
    \hspace{12mm}
    \begin{subfigure}[b]{0.23\textwidth}
        \centering
        \includegraphics[width=\textwidth]{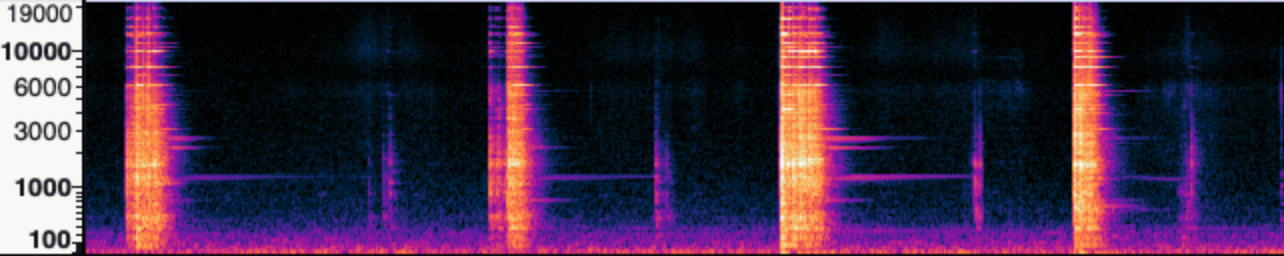}
        \caption{\texttt{Coin Drop on Counter}}
        \label{subfig:spec-counter}
    \end{subfigure}
    \hfill
    \begin{subfigure}[b]{0.23\textwidth}
        \centering
        \includegraphics[width=\textwidth]{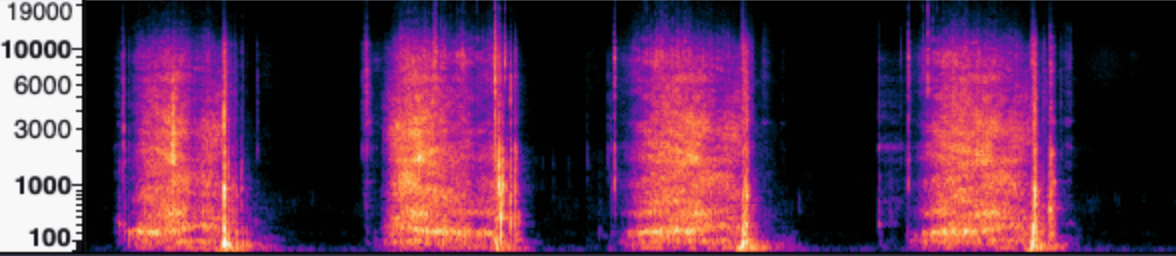}
        \caption{\texttt{Chest of Drawers}}
        \label{subfig:spec-chest}
    \end{subfigure}
    \hfill
    \begin{subfigure}[b]{0.23\textwidth}
        \centering
        \includegraphics[width=\textwidth]{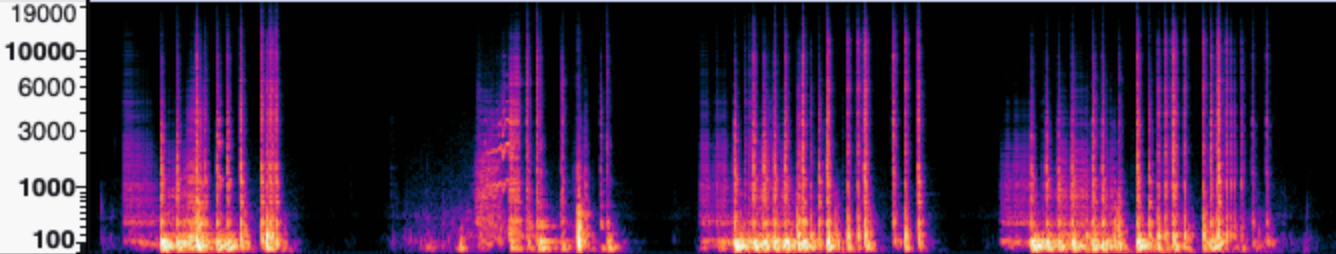}
        \caption{\texttt{Creaking Bed}}
        \label{subfig:spec-bed}
    \end{subfigure}
    \hspace{12mm}

    \caption{Spectrograms of selected sounds above.}
    \label{fig:source_spectrograms}
\end{figure}

Agents learn more effectively from short, clean, well-isolated sounds. Such sounds usually exhibit a sharp attack, little or no sustain, and a rapid decay with minimal trailing energy. For example, \texttt{Sink} and \texttt{Table} clearly follow this structure, producing consistent acoustic patterns that provide reliable cues for target localization.

\begin{figure}[htb]
    \centering
    \begin{subfigure}[b]{0.24\textwidth}
        \centering
        \includegraphics[width=\textwidth]{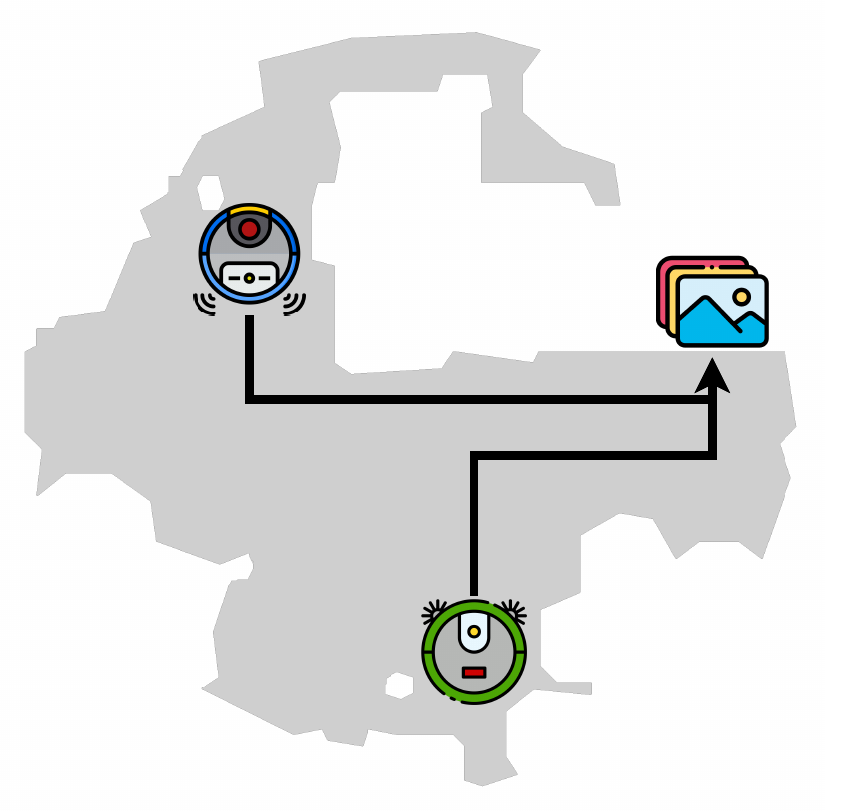}
        \caption{\footnotesize \;\texttt{Studio} $\mid$ \texttt{Picture}}
        \label{subfig:studio-nav}
    \end{subfigure}
    \hfill
    \begin{subfigure}[b]{0.25\textwidth}
        \centering
        \includegraphics[width=\textwidth]{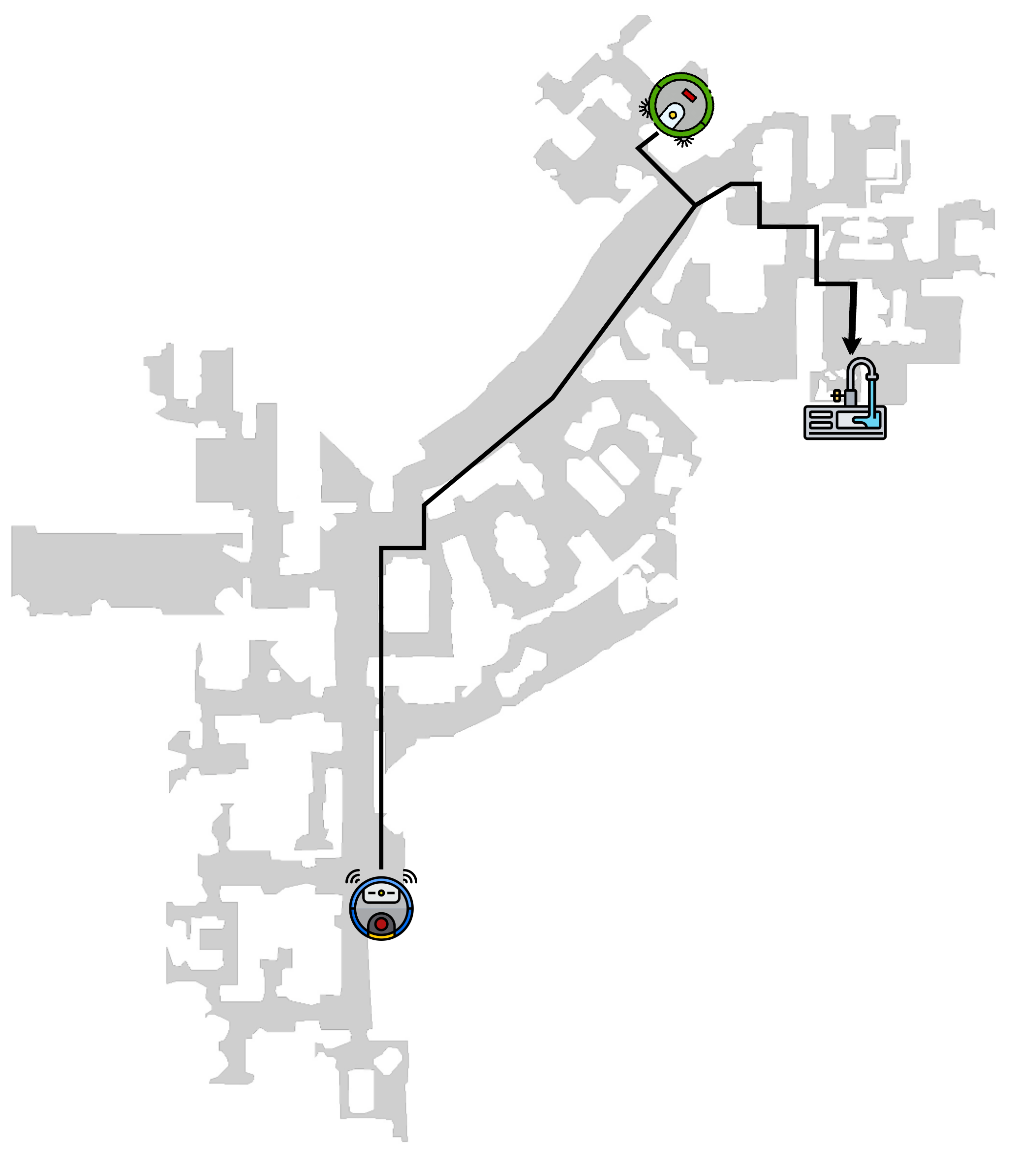}
        \caption{\footnotesize \;\texttt{Corridor} $\mid$ \texttt{Sink}}
        \label{subfig:apt-nav}
    \end{subfigure}
    \hfill
    \begin{subfigure}[b]{0.32\textwidth}
        \centering
        \includegraphics[width=\textwidth]{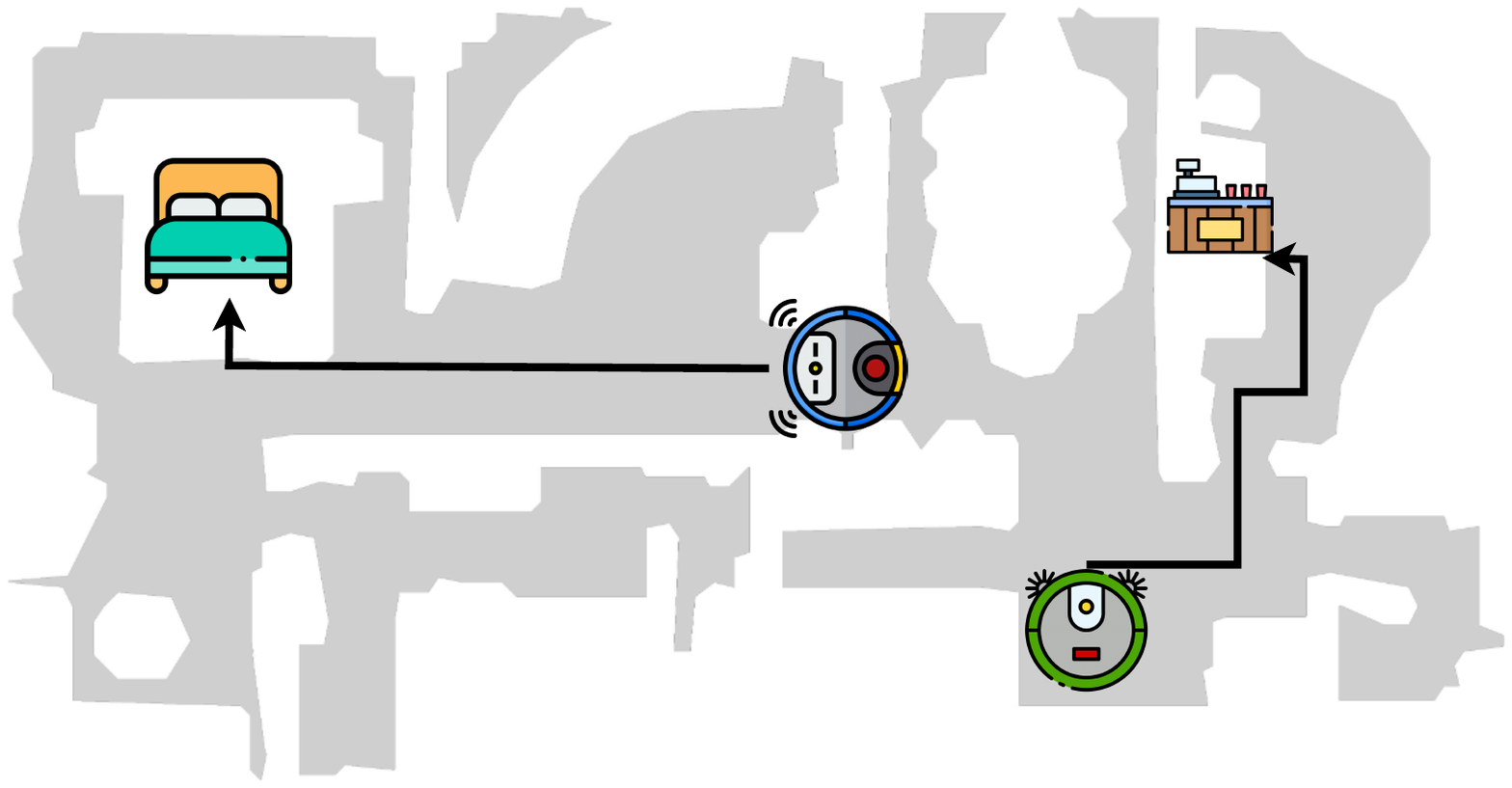}
        \caption{\footnotesize \,\texttt{Apartment} $\mid$ \texttt{Bed \& Counter}}
        \label{subfig:ranch-nav}
    \end{subfigure}

    \vspace{0.3cm}
    \hspace{1.5cm}
    \begin{subfigure}[b]{0.28\textwidth}
        \centering
        \includegraphics[width=\textwidth]{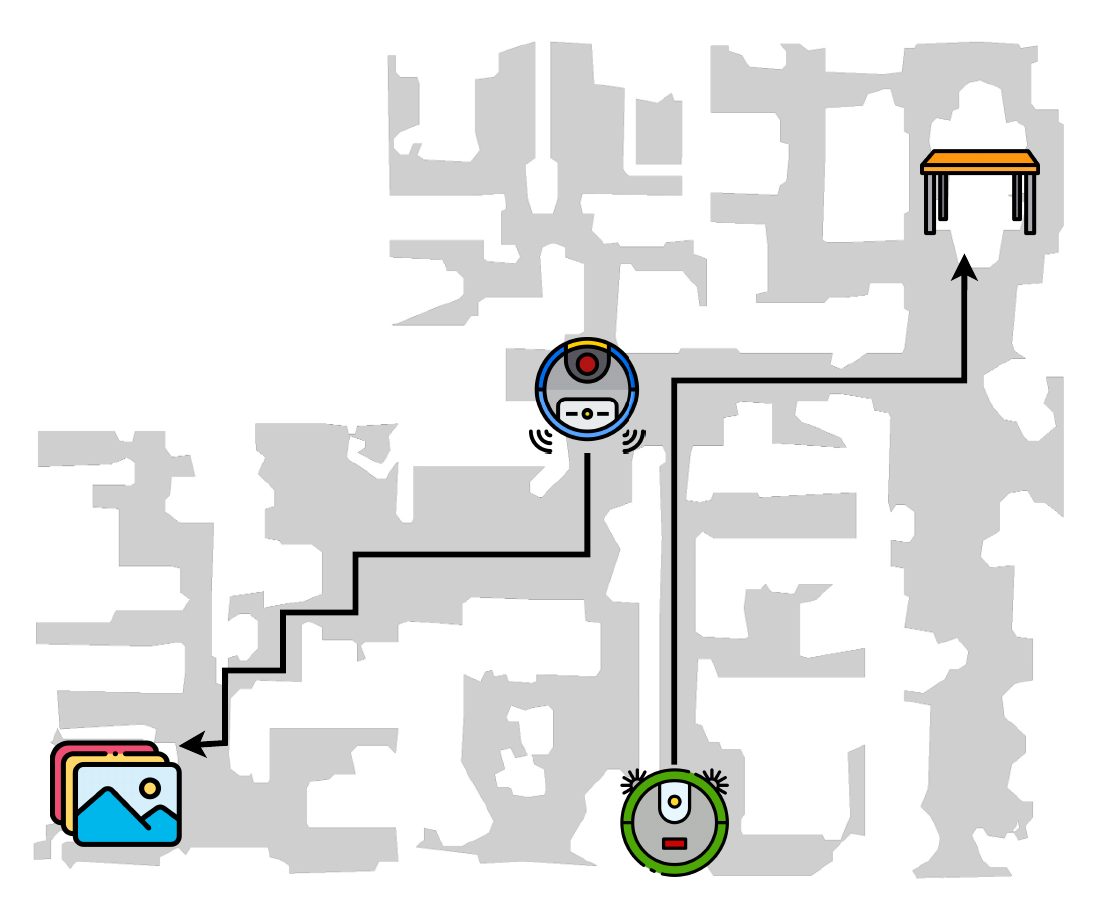}
        \caption{\footnotesize \;\;\texttt{Ranch} $\mid$ \texttt{Picture \& Table}}
        \label{subfig:corridor-nav}
    \end{subfigure}
    \hfill
    \begin{subfigure}[b]{0.32\textwidth}
        \centering
        \includegraphics[width=\textwidth]{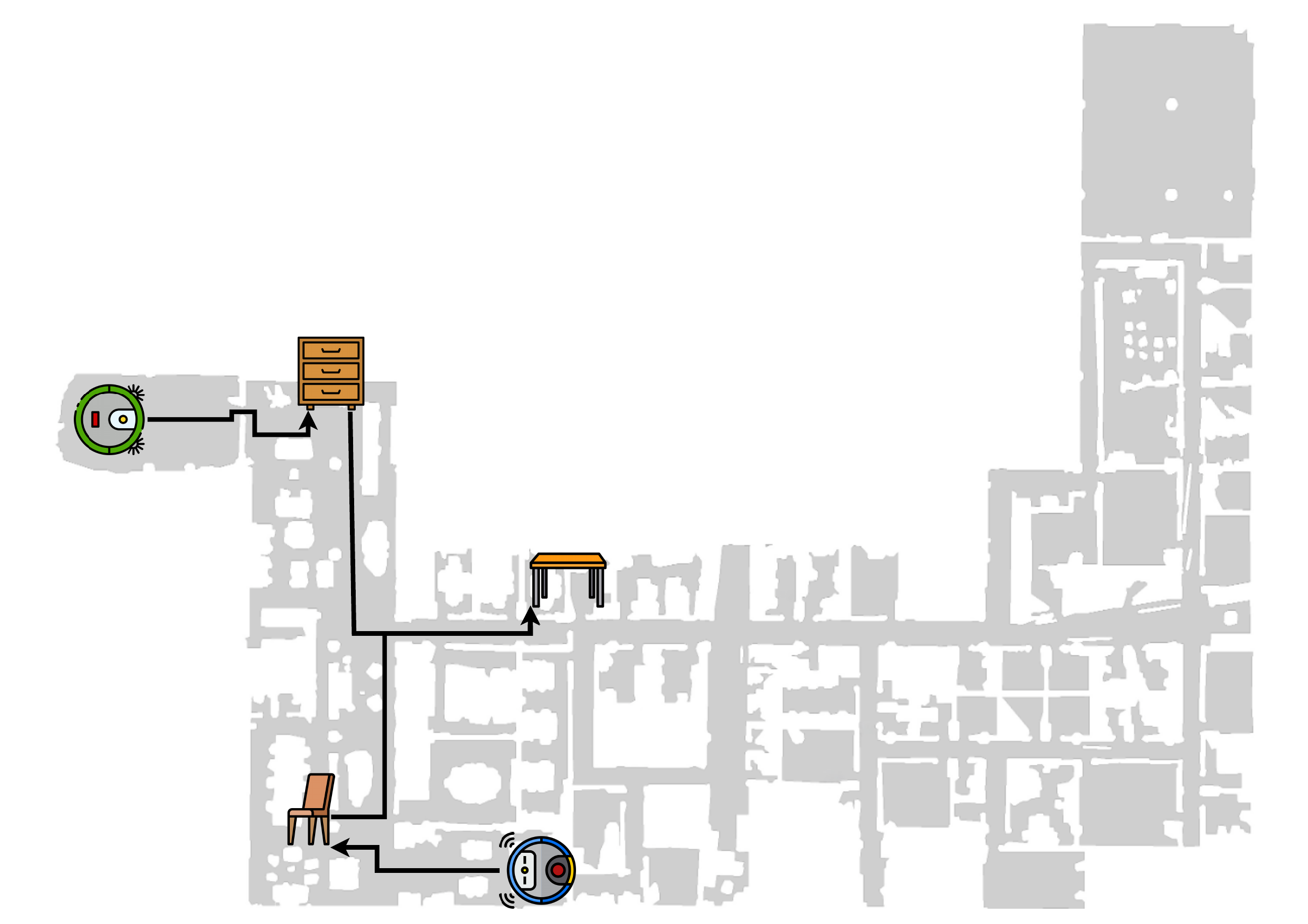}
        \caption{\footnotesize \texttt{Maze} $\mid$  \texttt{Drawer \& Table \& Chair}}
        \label{subfig:church-nav}
    \end{subfigure}
    \hspace{1.5cm} 
    \caption{Illustration of example episodes.} 
    \label{fig:illustraion_episodes}
\end{figure}

\subsection{Episodes} For each episode, objects will be selected based on the corresponding categories. Each episode strictly follows Table~\ref{tab:dataset_root_params}, the demonstrations of example episodes are in Figure~\ref{fig:illustraion_episodes}.

\FloatBarrier

\section{Experimental Settings}

\subsection{Hyperparameters} \label{subapp:hyperparameters}

We use the same hyperparameters across all scenes.
Table~\ref{tab:shared_hyperparameters} summarizes the training configurations.

\begin{table}[!htbp]
\centering
\footnotesize
\captionsetup{font=footnotesize, skip=2pt}
\caption{Shared training hyperparameters used across all evaluated scenes.\vspace{2mm}}
\label{tab:shared_hyperparameters}

\renewcommand{\arraystretch}{1.08}
\begin{tabular}{p{3.5cm} p{1.3cm} p{3.5cm} p{1.3cm}}
\toprule
\textbf{Hyperparameter} & \textbf{Value} & \textbf{Hyperparameter} & \textbf{Value} \\
\midrule
Optimizer & Adam
& Actor learning rate & 0.00025 \\

Critic learning rate & 0.0002
& Adam epsilon & 1e-5 \\

PPO epochs & 2
& PPO mini-batches & 1 \\

Rollout steps per update & 150
& PPO policy ratio $\rho$ clip & 0.2 \\

PPO value clip & 0.25
& Policy-value loss coefficient $\mu$ & 0.67 \\

Belief smooth coefficient $\alpha$ & 0.5
& Entropy coefficient $\beta$ & 0.05 \\

Discount factor $\gamma$ & 0.99
& GAE $\lambda$ & 0.95 \\

Max gradient norm & 0.2
& History cache size (steps) & 150 \\

Vision encoder hidden size & 128
& Audio encoder hidden size & 128 \\

Transformer hidden size & 256
& Language embedding size & 384 \\

Language encoder hidden size & 24
& Normalized advantage & False \\

Number of updates & 8000
& Max depth for depth sensor & 3 m \\
\bottomrule
\end{tabular}
\end{table}

\FloatBarrier

\subsection{Architecture} \label{subapp:architecture}

\paragraph{Agent Configuration}
Since many objects in \texttt{Matterport3D} scenes are large and visually salient, a single agent can effortlessly localize them without requiring collaboration. To increase the difficulty of tasks, we restrict visual observations to depth maps with a sensing range of $0$--$5\,\mathrm{m}$, a resolution of $16 \times 16$ pixels, and a horizontal field of view (HFoV) of $10^\circ$. We use \texttt{sentence-transformers/all-MiniLM-L6-v2} as the instruction encoder, a compact ResNet-18 as the visual encoder \cite{he2016deep}, and a plain CNN with 3 convolutional layers as the audio encoder. At each time step, each agent combines its previous observations and actions with the current observation using an MLP, then encodes them using $8$-head transformers to obtain a history representation.

\paragraph{Audio Encoder}
\begin{itemize}
    \item \textbf{Input:} Binaural spectrogram.
    \item \textbf{Layers:}
    \begin{itemize}
        \item $\mathrm{Conv2d}(2, 32, 5 \times 5, \mathrm{stride}=2) + \mathrm{ReLU}$
        \item $\mathrm{Conv2d}(32, 64, 3 \times 3, \mathrm{stride}=2) + \mathrm{ReLU}$
        \item $\mathrm{Conv2d}(64, 64, 3 \times 3, \mathrm{stride}=1)$
        \item Flatten
    \end{itemize}
\end{itemize}

\paragraph{Vision Encoder}
\begin{itemize}
    \item \textbf{Input:} Single-channel depth observation.

    \item \textbf{Layers:}
  \begin{itemize}
      \item $\mathrm{ResizeCenterCrop}(64 \times 64)$
      \item $\mathrm{Conv2d}(1, 16, 7 \times 7, \mathrm{stride}=1,
  \mathrm{padding}=3) + \mathrm{GroupNorm} + \mathrm{ReLU}$
      \item $2 \times \mathrm{ResidualBlock}(16 \rightarrow 16, \mathrm{stride}
  =1)$
      \item $2 \times \mathrm{ResidualBlock}(16 \rightarrow 32, \mathrm{stride}
  =2)$
      \item $2 \times \mathrm{ResidualBlock}(32 \rightarrow 64, \mathrm{stride}
  =2)$
      \item $2 \times \mathrm{ResidualBlock}(64 \rightarrow 128, \mathrm{stride}
  =2)$
      \item Flatten
      \item $\mathrm{Linear}(128 \times 8 \times 8, 64)$
  \end{itemize}

\end{itemize}

\paragraph{Auxiliary Belief Predictor}
  \begin{itemize}
      \item \textbf{Input:} Binaural spectrogram.

      \item \textbf{Layers:}
  \begin{itemize}
      \item $\mathrm{Conv2d}(2, 16, 7 \times 7, \mathrm{stride}=1,
  \mathrm{padding}=3) + \mathrm{GroupNorm} + \mathrm{ReLU}$
      \item $2 \times \mathrm{ResidualBlock}(16 \rightarrow 16, \mathrm{stride}
  =1)$
      \item $2 \times \mathrm{ResidualBlock}(16 \rightarrow 32, \mathrm{stride}
  =2)$
      \item $2 \times \mathrm{ResidualBlock}(32 \rightarrow 64, \mathrm{stride}
  =2)$
      \item $2 \times \mathrm{ResidualBlock}(64 \rightarrow 128, \mathrm{stride}
  =2)$
      \item Flatten
      \item $\mathrm{Linear}(4608, 2)$
  \end{itemize}

  \end{itemize}

\paragraph{History Encoder}
\begin{itemize}
    \item \textbf{Input:} Each agent's recent observation-action history.
    \item \textbf{Layers:}
  \begin{itemize}
      \item Previous action encoding: $\mathrm{Linear}(|\mathcal{A}|, 16)$
      \item Relative pose encoding: $\mathrm{Linear}(5, 16)$
      \item Feature fusion:
      $\mathrm{Linear}(d_{\mathrm{in}}, d_h) + \mathrm{ReLU} + \mathrm{Linear}
  (d_h, d_h)$
      \item Transformer encoder: $1$ layer with $8$ attention heads
      \item Transformer decoder: $1$ layer with $8$ attention heads
      \item Feed-forward dimension: $d_h$
      \item Activation: ReLU
  \end{itemize}
\end{itemize}

\paragraph{Language Encoder}
  \begin{itemize}
      \item \textbf{Input:} Tokenized target category or language instruction.

      \item \textbf{Layers:}
      \begin{itemize}
          \item WordPiece tokenization with truncation
          \item Token embeddings + position embeddings + segment embeddings
          \item $6 \times$ Transformer encoder layers
          \item Each layer uses $12$-head self-attention
          \item Feed-forward network:
          $\mathrm{Linear}(384, 1536) + \mathrm{GELU} + \mathrm{Linear}(1536,
  384)$
          \item Mean pooling over token embeddings using the attention mask
          \item $L_2$ normalization
      \end{itemize}
  \end{itemize}

\section{Additional Results} \label{app:addres} 

\subsection{Success Rates}
We provide the success rates for each scene in Figure~\ref{fig:additional_results}. In most scenes, the success curves align well with the return curves in Figure~\ref{fig:training}, leading to the same dominance patterns. This is expected because task success contributes the largest portion of the episode return. However, \texttt{Maze} exhibits a noticeable discrepancy: although AVLA-Collab achieves a much higher return than the other methods, its success rate does not improve to the same extent. This is because agents sometimes stop prematurely after reaching easier targets, which helps avoid large penalties from the distance-progress term but prevents them from completing all targets. Overall, the reward still provides useful optimization signals: it guides agents to navigate toward targets, reduce their distance to the goals, and stop within the target vicinity.

\begin{figure}[!htbp]
    \centering
    \captionsetup{font=footnotesize}
    \subcaptionsetup{font=footnotesize}
    
    \begin{subfigure}[b]{0.32\textwidth}
        \centering
        \includegraphics[width=\textwidth]{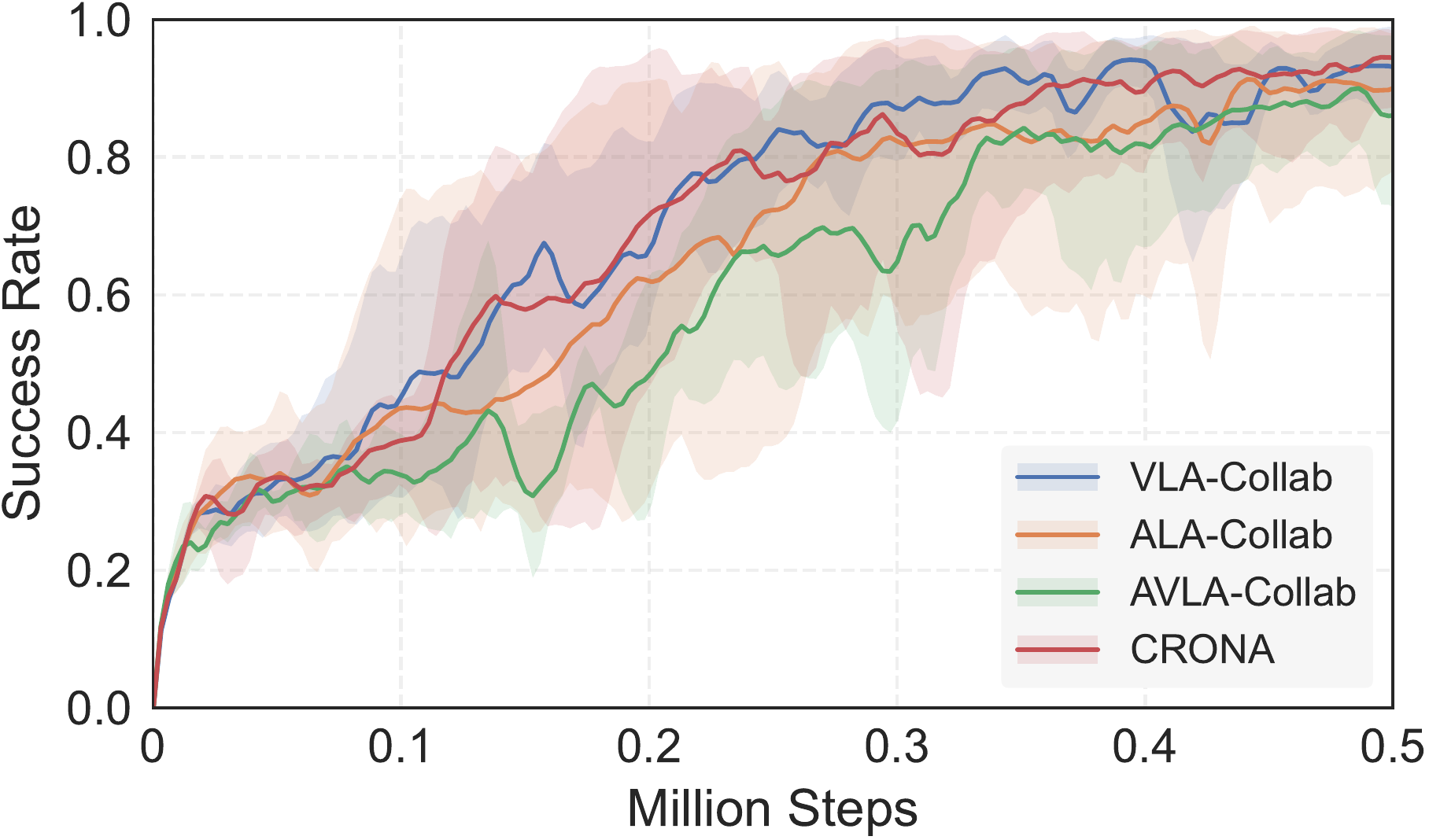}
        \caption{\;\texttt{Studio} $\mid$ \texttt{Picture}}
        \label{subfig:add-studio}
    \end{subfigure}
    \hfill
    \begin{subfigure}[b]{0.32\textwidth}
        \centering
        \includegraphics[width=\textwidth]{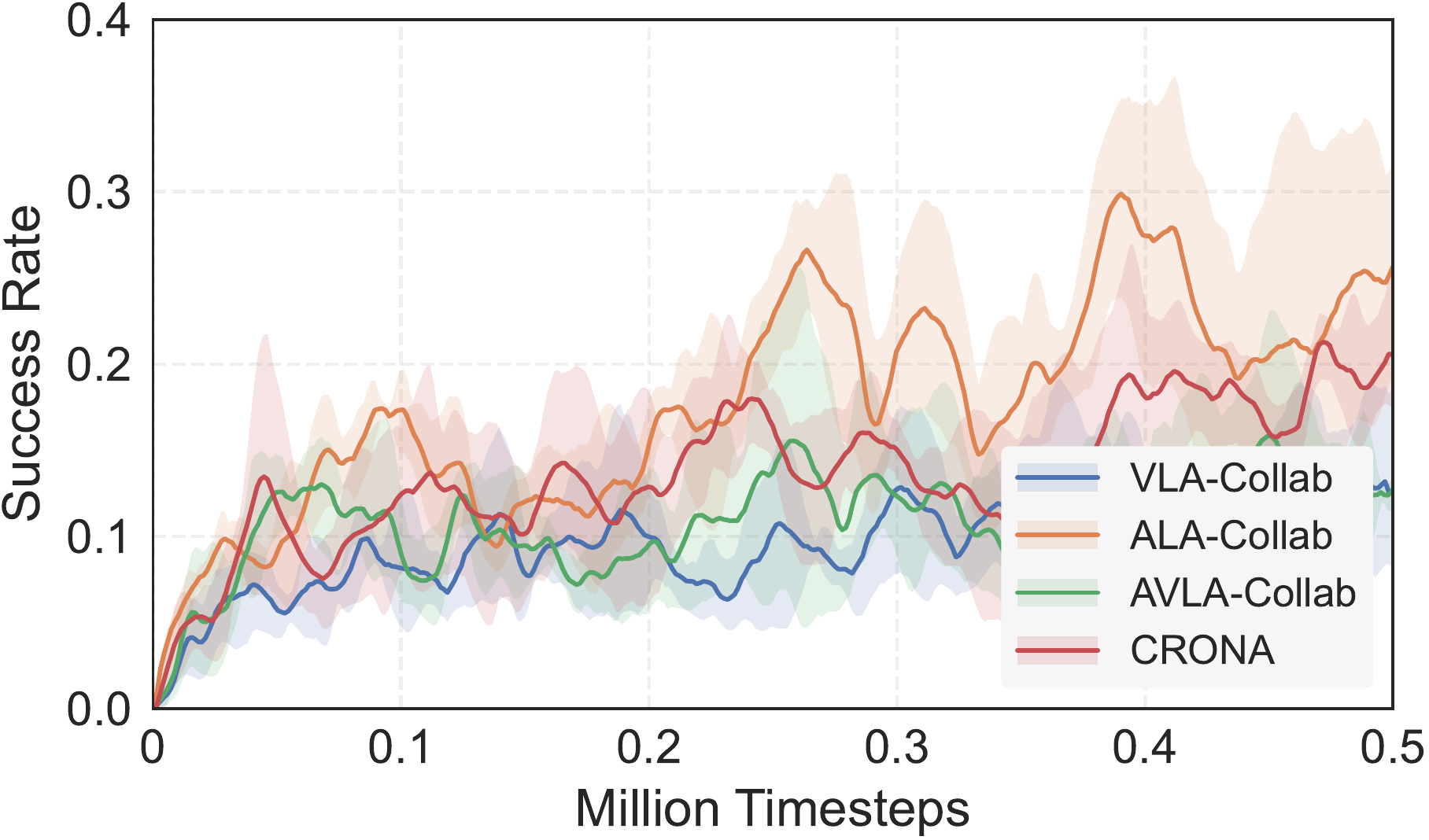}
        \caption{\;\texttt{Corridor} $\mid$ \texttt{Sink}}
        \label{subfig:add-corridor}
    \end{subfigure}
    \hfill
    \begin{subfigure}[b]{0.32\textwidth}
        \centering
        \includegraphics[width=\textwidth]{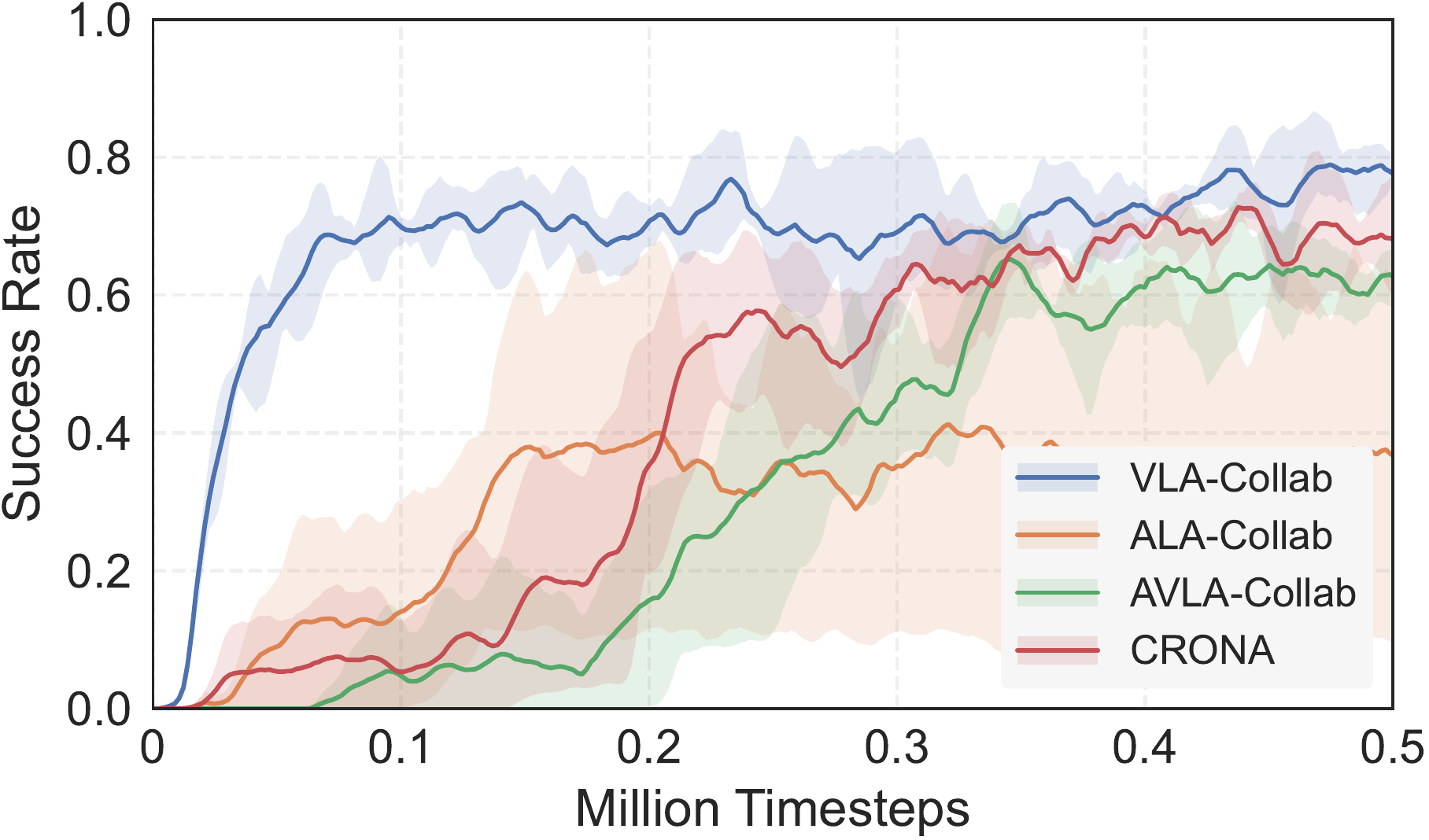}
        \caption{\,\texttt{Apartment} $\mid$ \texttt{Bed \& Counter}}
        \label{subfig:add-apt}
    \end{subfigure}

    \vspace{4mm}
    \hspace{2cm}
    \begin{subfigure}[b]{0.32\textwidth}
        \centering
        \includegraphics[width=\textwidth]{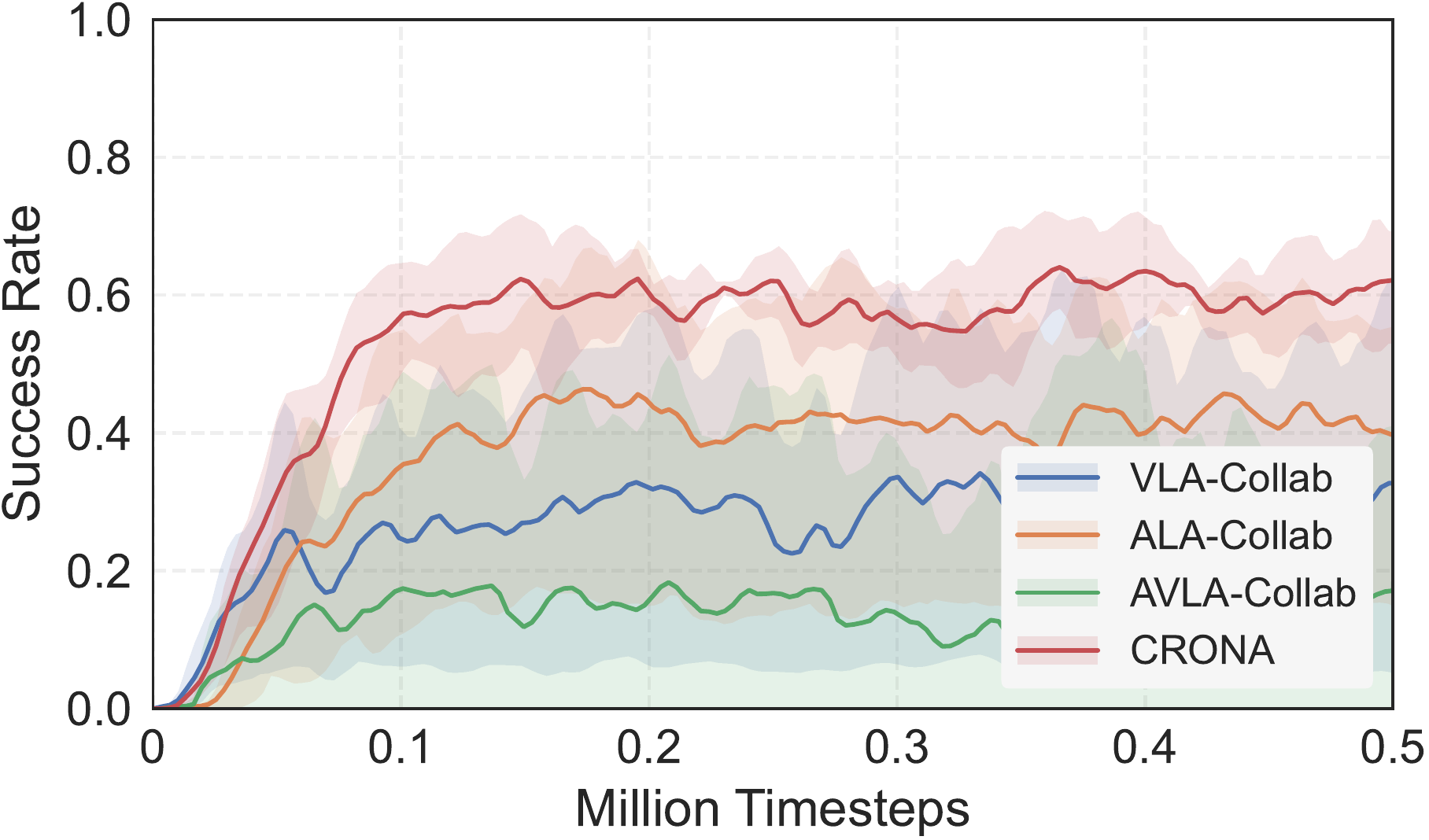}
        \caption{\;\;\texttt{Ranch} $\mid$ \texttt{Picture \& Table}}
        \label{subfig:add-ranch}
    \end{subfigure}
    \hfill
    \begin{subfigure}[b]{0.32\textwidth}
        \centering
        \includegraphics[width=\textwidth]{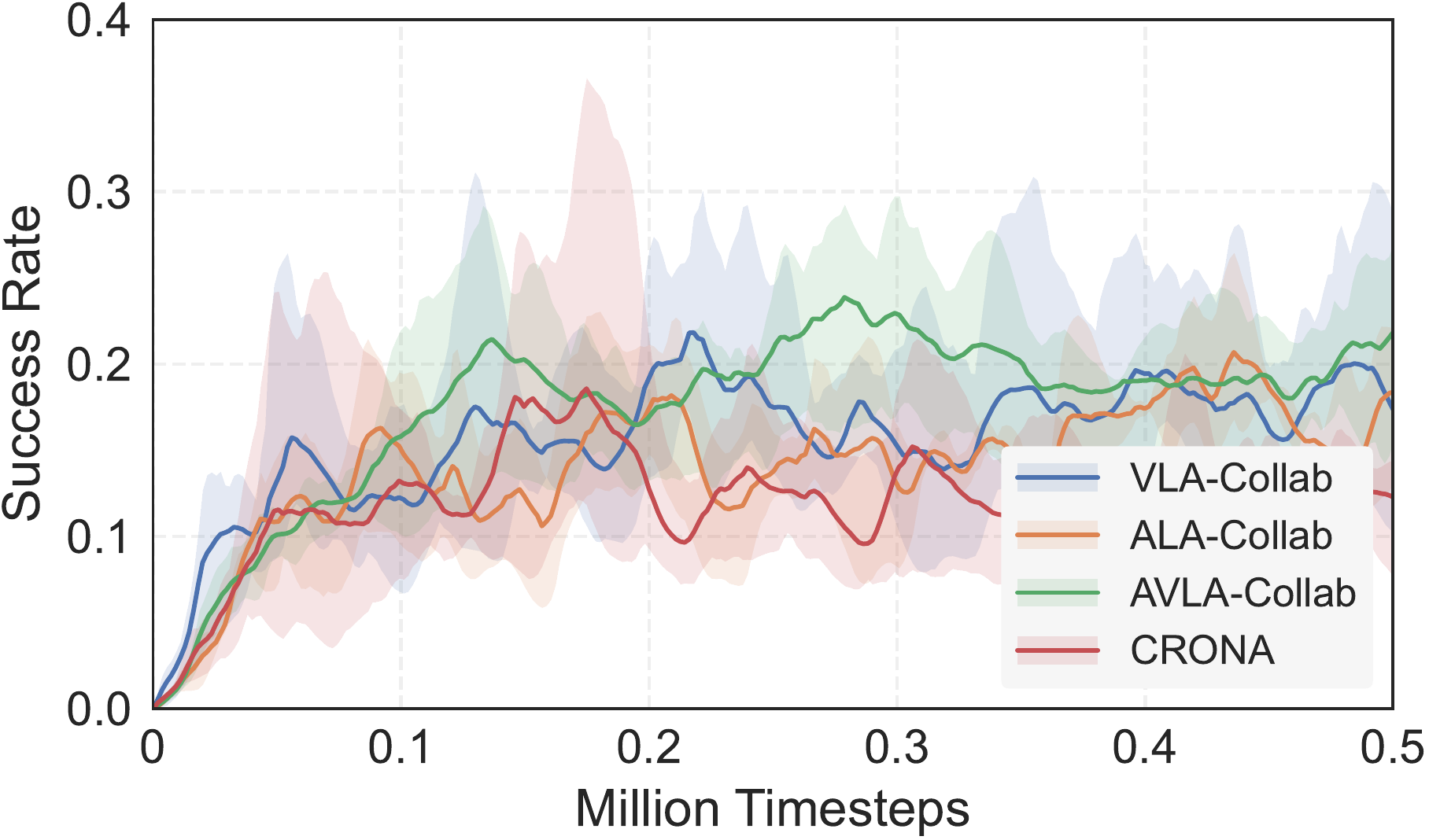}
        \caption{\texttt{Maze} $\mid$ \texttt{Drawer \& Table \& Chair}}
        \label{subfig:add-maze}
    \end{subfigure}
    \hfill
    \hspace{1cm}
    \caption{Additional evaluation of CRONA and collaborative navigation baselines across 5 \texttt{Matterport3D} scenes: (a)-(e) show the success rate of each scene. The x-axis indicates the environment steps. Curves are smoothed by an exponential moving average. Shadows denote 90\% bootstrapped CI. Results are averaged over 5 runs.}
    \label{fig:additional_results}
\end{figure}

\section{Insturction Design} \label{app:prompt}

We use three categories of prompt templates. For each data entry, the goal is specified in natural language. Audio-based agents sample prompts from the audio-specific templates, vision-based agents sample from the vision-specific templates, and agents with both modalities may also use the general templates. The target object name and its corresponding sound category are substituted into the selected template. The prompt templates used in our benchmark are listed below.

\begin{lstlisting}[numbers=none, escapeinside={(*}{*)}]

(*\textbf{General}*)
Please help me locate the ... and ...
Please help me find where the ... and ... are.
Your task is to find the ... and ... for me.
Show me the locations of the ... and ...
Please find the ... and ... in the environment.
Search for the ... and report where they are.
Navigate to the ... and the ...
Find both target objects: the ... and the ...

(*\textbf{Audio-Based Agent}*)
The environment contains sounds from ... and ... Please locate both sound-emitting objects.
I cannot find the ... and ... anymore. They sound like ... and ... Can you help me locate them?
Can you hear where the ... is?
Listen for the sound of ... and use it to find the ...
Find the objects that are making ... and ... sounds.
Use the audio cues to locate the ... and ...
Follow the sounds associated with ... and ... to find the target objects.
The ... produces a ... sound. Please locate it using the sound cue.

(*\textbf{Vision-Based Agent}*)
Based on the visual observation, find the ... and ...
Look for the ... and ... in the scene.
Use visual cues to locate the ... and ...
Search the environment for the visible ... and ...
Find the ... by observing its shape and appearance.
Watch for visual evidence of the ... and ...
Inspect the scene and locate the ... and ...
Navigate toward the visually observed ... and ...

\end{lstlisting}

\section{Reward Design} \label{app:reward-design}

All agents share a joint team reward. The reward design differs slightly between single-object and multi-object tasks.

\paragraph{Single-object tasks}
For single-object two-agent runs, each agent receives a per-step reward $r_i = r^{\mathrm{slack}} + r_i^{\mathrm{dist}} + r_i^{\mathrm{succ}}$,
where $r^{\mathrm{slack}}=-0.02$ is the per-agent time penalty, $r_i^{\mathrm{dist}}$ is the reduction in distance from agent $i$ to the target between consecutive steps, and $r_i^{\mathrm{succ}}=20$ if agent $i$ calls stop within the success distance of the target. The team reward returned to PPO is the sum of the two agent rewards.
Hence, the effective per-step slack penalty is $-0.04$ for two-agent episodes. No additional stop penalty is applied in the single-object multi-agent environment. An episode is successful if any agent calls stop near the target.

\paragraph{Multi-object tasks}
For multi-object runs, the team reward is defined as $r^{\mathrm{team}} =
r^{\mathrm{slack}}
+ \sum_i r_i^{\mathrm{dist}}
+ r^{\mathrm{stop}}
+ r^{\mathrm{goal}}$,
where $r^{\mathrm{slack}}=-0.02$ is a team-level time penalty, $\sum_i r_i^{\mathrm{dist}}$ is the sum of distance-progress rewards over agents, and $r^{\mathrm{stop}}=-0.2 \cdot n_{\mathrm{stop}}$ penalizes agents that call stop. When a new target is found, the goal reward is scaled by task progress, $r^{\mathrm{goal}} = \frac{N_{\mathrm{found}}}{N_{\mathrm{total}}} \cdot 20$,
where $N_{\mathrm{found}}$ is the number of targets found after the current discovery and $N_{\mathrm{total}}$ is the total number of targets in the episode. So in a two-target task, the first discovered target gives a reward of $10$, and the second discovered target gives a reward of $20$. If both agents call stop before all goals are found, the episode terminates after one such step, with no additional both-stop termination penalty.

\paragraph{Maze reward adjustment}
We adjusted the reward scale for \texttt{Maze} to encourage long-horizon exploration, as it's the largest and most complex scene with the most targets in our benchmark. Comparably, agents require more time to explore before receiving valuable visual or acoustic evidence, since acoustic cues are relatively weak and ambiguous when agents are far or occluded from the source. To reduce the cost of exploration and provide denser directional guidance toward target regions, we use a slack penalty of $-0.002$, a success reward scale of $3.0$, a distance reward scale of $2.0$, and a progressive distance reward scale of $1.5$.

\section{Compute Resources} \label{app:compute}

Experiments were run on a cluster and local workstations. The runtime of each training depends on the scene size, method, model size, and variable hardware. For a run trained to 500k environment steps, \texttt{Studio} took roughly $8$-$10$ hours on 5090, \texttt{Apartment} and \texttt{Ranch} took roughly $20$-$25$ hours on A100, \texttt{Corridor} and \texttt{Maze} took roughly $30$-$48$ hours on A100.

\begin{itemize}
    \item \textbf{Workstation:}
    \begin{itemize}
        \item GPU: 1$\times$ NVIDIA GeForce RTX 5090, 32 GB VRAM
        \item CPU: AMD Ryzen 9 9950X, 16 cores / 32 threads
        \item System memory: 123 GiB
    \end{itemize}
    
    \item \textbf{Cloud Cluster:}
    \begin{itemize}
        \item GPU: 2$\times$ NVIDIA A100-SXM4, 160 GB total VRAM
        \item CPU: AMD EPYC 7513, 32 cores
        \item System memory: 354 GB
    \end{itemize}

    \item \textbf{Software environment:}
    \begin{itemize}
        \item CUDA: 12.8
        \item PyTorch: 2.8.0
    \end{itemize}
\end{itemize} 

\section{Broader Impacts}

This work studies how agents with different sensory modalities contribute to collaborative navigation. We build a collaborative navigation benchmark with simulated vision and audio observations in realistic indoor environments. We identify several modality-dominance patterns and analyze when and why each pattern emerges. Our findings suggest that incorporating more modalities does not always lead to better performance; instead, the usefulness of each modality depends on the scene structure, target properties, and sensory reliability. We further propose a MARL framework for cross-modal collaborative navigation. This work opens the door to studying cross-modal collaboration in embodied multi-agent navigation with multi-agent reinforcement learning.

\end{document}